\tikzstyle{mybox} = [draw=gray, fill=gray!20, very thick,
\pgfplotsset{compat=newest}
\tikzset{%
  >={Latex[width=1.5mm,length=2mm]},
  vertex/.style={draw,circle,inner sep=0mm,semithick,minimum width=4mm},
  point/.style = {circle, draw, inner sep=0.04cm,fill,node contents={}},
  uvertex/.style={draw,circle,dashed,inner sep=0mm,semithick,minimum width=4mm},
  bidir/.style={<->,dashed, line width=0.25mm},
  dir/.style={->, line width=0.25mm},
  regime/.style={shape=rectangle,fill=black,inner sep=0pt,minimum size=3pt,draw},
  node distance=1cm,
  font=\scriptsize\sffamily\sansmath
}
\definecolor{betterred}{RGB}{228,26,28}
\definecolor{betterblue}{RGB}{55,126,184}
    \def\tikz@path@do@at@end{\endpgfonlayer\endgroup\tikz@path@do@at@end}%
\newcommand{\convexpath}[2]{
  [   
  create hullcoords/.code={
    \global\edef\namelist{#1}
    \foreach [count=\counter] \nodename in \namelist {
      \global\edef\numberofnodes{\counter}
      \coordinate (hullcoord\counter) at (\nodename);
    }
    \coordinate (hullcoord0) at (hullcoord\numberofnodes);
    \pgfmathtruncatemacro\lastnumber{\numberofnodes+1}
    \coordinate (hullcoord\lastnumber) at (hullcoord1);
  },
  create hullcoords
  ]
  ($(hullcoord1)!#2!-90:(hullcoord0)$)
  \foreach [
  evaluate=\currentnode as \previousnode using \currentnode-1,
  evaluate=\currentnode as \nextnode using \currentnode+1
  ] \currentnode in {1,...,\numberofnodes} {
    let \p1 = ($(hullcoord\currentnode) - (hullcoord\previousnode)$),
    \n1 = {atan2(\y1,\x1) + 90},
    \p2 = ($(hullcoord\nextnode) - (hullcoord\currentnode)$),
    \n2 = {atan2(\y2,\x2) + 90},
    \n{delta} = {Mod(\n2-\n1,360) - 360}
    in 
    {arc [start angle=\n1, delta angle=\n{delta}, radius=#2]}
    -- ($(hullcoord\nextnode)!#2!-90:(hullcoord\currentnode)$) 
  }
}
\newcommand{\ci}{\perp \!\!\!\! \perp }
\newcommand{\mli}[1]{\mathit{#1}}
\Crefname{equation}{Eq.}{Eqs.}
\Crefname{figure}{Fig.}{Figs.}
\Crefname{tabular}{Tab.}{Tabs.}
\Crefname{theorem}{Thm.}{Thms.}
\Crefname{lemma}{Lem.}{Lems.}
\Crefname{proposition}{Prop.}{Props.}
\Crefname{definition}{Def.}{Defs.}
\Crefname{algorithm}{Alg.}{Algs.}
\Crefname{corollary}{Corol.}{Corol.}
\Crefname{section}{Sec.}{Sec.}
\newtheorem{lemma}{Lemma}
\newtheorem{corollary}{Corollary}
\theoremstyle{definition}
\newtheorem{definition}{Definition}
\newcommand{\braces}[2][]{#1\{#2 #1\}}
\newcommand{\angles}[2][]{#1\langle#2 #1\rangle}
\newcommand{\set}[2][]{\braces[#1]{#2}}
\newcommand{\tuple}[2][]{\angles[#1]{#2}}
\newcommand{\xdashleftrightarrow}[2][]{\ext@arrow 3359\leftrightarrowfill@@{#1}{#2}}
\def\rightarrowfill@@{\arrowfill@@\relax\relbar\rightarrow}
\def\leftarrowfill@@{\arrowfill@@\leftarrow\relbar\relax}
\def\leftrightarrowfill@@{\arrowfill@@\leftarrow\relbar\rightarrow}
\def\arrowfill@@#1#2#3#4{%
  $\m@th\thickmuskip0mu\medmuskip\thickmuskip\thinmuskip\thickmuskip
   \relax#4#1
   \xleaders\hbox{$#4#2$}\hfill
   #3$%
}
\newcommand{\E}{\mathbb{E}}
\newcommand{\VV}{\bm{V}}
\newcommand{\XX}{\bm{X}}
\newcommand{\xx}{\bm{x}}
\newcommand{\ZZ}{\bm{Z}}
\newcommand{\zz}{\bm{z}}
\newcommand{\sss}{\bm{s}}
\newcommand{\D}{\mathscr{D}}
\newcommand{\G}{\mathcal{G}}
\newcommand{\PP}{\mathscr{P}}
\newcommand{\FF}{\mathcal{F}}
\newcommand{\Pa}{\mli{Pa}}
\newcommand{\pa}{\mli{pa}}
\newcommand{\De}{\mli{De}}
\newcommand{\An}{\mli{An}}
\newcommand{\an}{\mli{an}}
\newcommand{\de}{\mli{de}}
\newcommand{\Ch}{\mli{Ch}}
\newcommand{\ch}{\mli{ch}}
\newcommand{\LL}{\bm{L}}
\newcommand{\RR}{\bm{R}}
\newcommand{\doo}{\text{do}}
\def\*#1{\bm{#1}}
\def\1#1{\mathcal{#1}}
\def\2#1{\mathscr{#1}}
\title{Causal Imitation Learning\\with Unobserved Confounders}
\author{
  Junzhe Zhang\\
  Columbia University\\
  \texttt{junzhez@cs.columbia.edu}
  \And 
  Daniel Kumor\\
  Purdue University\\
  \texttt{dkumor@purdue.edu}
  \And
  Elias Bareinboim\\
  Columbia University\\
  \texttt{eb@cs.columbia.edu}
}
\begin{document}


\maketitle

\begin{abstract}
One of the common ways children learn is by mimicking adults. Imitation learning focuses on learning policies with suitable performance from demonstrations generated by an expert, with an unspecified performance measure, and unobserved reward signal. Popular methods for imitation learning start by either directly mimicking the behavior policy of an expert (\emph{behavior cloning}) or by learning a reward function that prioritizes observed expert trajectories (\emph{inverse reinforcement learning}). However, these methods rely on the assumption that covariates used by the expert to determine her/his actions are fully observed. In this paper, we relax this assumption and study imitation learning when sensory inputs of the learner and the expert differ. First, we provide a non-parametric, graphical criterion that is complete (both necessary and sufficient) for determining the feasibility of imitation from the combinations of demonstration data and qualitative assumptions about the underlying environment, represented in the form of a causal model. We then show that when such a criterion does not hold, imitation could still be feasible by exploiting quantitative knowledge of the expert trajectories. Finally, we develop an efficient procedure for learning the imitating policy from experts' trajectories. 
\end{abstract}
\section{Introduction}\label{sec1}
A unifying theme of Artificial Intelligence is to learn a policy from observations in an unknown environment such that a suitable level of performance is achieved \cite[Ch.~1.1]{russell2002artificial}. Operationally, a policy is a decision rule that determines an action based on a certain set of covariates; observations are possibly generated by a human demonstrator following a different \emph{behavior policy}. The task of evaluating policies from a combination of observational data and  assumptions about the underlying environment has been studied in the literature of causal inference \cite{pearl:2k} and reinforcement learning \cite{sutton1998reinforcement}. Several criteria, algorithms, and estimation methods have been developed to solve this problem \cite{pearl:2k,spirtes2000causation,bareinboim:pea:16-r450,correa2020calculus,shpitser2018identification,rosenbaum:rub83,watkins1992q}. In many applications, it is not clear which performance measure the demonstrator is (possibly subconsciously) optimizing. That is, the reward signal is not labeled and accessible in the observed expert's trajectories. In such settings, the performance of candidate policies is not uniquely discernible from the observational data due to latent outcomes, even when infinitely many samples are gathered, complicating efforts to learn policy with satisfactory performance.

An alternative approach used to circumvent this issue is to find a policy that mimics a demonstrator's behavior, which leads to the \emph{imitation learning} paradigm \cite{argall2009survey,billard2008survey,hussein2017imitation,osa2018algorithmic}. The expectation (or rather hope) is that if the demonstrations are generated by an expert with near-optimal reward, the performance of the imitator would also be satisfactory. Current methods of imitation learning can be categorized into \textit{behavior cloning} \cite{widrow1964pattern,pomerleau1989alvinn,muller2006off,mulling2013learning,mahler2017learning} and \textit{inverse reinforcement learning} \cite{ng2000algorithms,abbeel2004apprenticeship,syed2008game,ziebart2008maximum}. The former focuses on learning a nominal expert policy that approximates the conditional distribution mapping observed input covariates of the behavior policy to the action domain. The latter attempts to learn a reward function that prioritizes observed behaviors of the expert; reinforcement learning methods are then applied using the learned reward function to obtain a nominal policy. However, both families of methods rely on the assumption that the expert's input observations match those available to the imitator. When unobserved covariates exist, however, naively imitating the nominal expert policy does not necessarily lead to a satisfactory performance, even when the expert him or herself behaves optimally.

\begin{figure}[t]
\hfill%
\begin{subfigure}{0.2\linewidth}\centering
  \begin{tikzpicture}
      \def\outerr{3}
      \def\innerr{2.7}

      \node[vertex] (X) at (0, 0) {$X$};
      \node[vertex] (Z) at (1, 1.5) {$Z$};
      \node[uvertex] (L) at (1, 0.5) {$L$};
      \node[uvertex] (Y) at (2, 0) {$Y$};

      \draw[dir] (L) -- (Y);
      \draw[dir] (Z) -- (L);
      \draw[dir] (L) -- (X);
      \draw[dir] (Z) to (Y);
      \draw[dir] (X) -- (Y);
      \draw[dir] (Z) to [bend right = 0] (X);
      \draw[bidir] (Z) to [bend left=45] (Y);

      \begin{pgfonlayer}{back}
        \draw[fill=betterred!25,draw=none] \convexpath{Z, X}{\outerr mm};
        \node[circle,fill=betterred!65,draw=none,minimum size=2*\innerr mm] at (X) {};
        \node[circle,fill=betterblue!25,draw=none,minimum size=2*\outerr mm] at (Y) {};
        \node[circle,fill=betterblue!65,draw=none,minimum size=2*\innerr mm] at (Y) {};
    \end{pgfonlayer}
  \end{tikzpicture}
  \caption{}
  \label{fig1a}
  \end{subfigure}\hfill
  \begin{subfigure}{0.2\linewidth}\centering
  \begin{tikzpicture}
    \def\outerr{3}
    \def\innerr{2.7}

    \node[vertex] (X) at (0, 0) {$X$};
    \node[vertex] (Z) at (1, 1.5) {$Z$};
    \node[uvertex] (L) at (1, 0.5) {$L$};
    \node[uvertex] (Y) at (2, 0) {$Y$};
    
    \draw[dir] (Z) -- (L);
    \draw[dir] (L) -- (X);
    \draw[dir] (Z) -- (Y);
    \draw[dir] (X) -- (Y);
    \draw[dir] (Z) to [bend right = 0] (X);
    \draw[bidir] (Z) to [bend left=45] (Y);

    \begin{pgfonlayer}{back}
      \draw[fill=betterred!25,draw=none] \convexpath{Z, X}{\outerr mm};
      \node[circle,fill=betterred!65,draw=none,minimum size=2*\innerr mm] at (X) {};
      \node[circle,fill=betterblue!25,draw=none,minimum size=2*\outerr mm] at (Y) {};
      \node[circle,fill=betterblue!65,draw=none,minimum size=2*\innerr mm] at (Y) {};
  \end{pgfonlayer}
\end{tikzpicture}
\caption{}
\label{fig1b}
\end{subfigure}\hfill
\begin{subfigure}{0.2\linewidth}\centering
  \begin{tikzpicture}
    \def\outerr{3}
    \def\innerr{2.7}
    \node[vertex] (X) at (0, 0) {$X$};
    \node[vertex, opacity=0.5] (Xh) at (1, 0.5) {$\hat{X}$};
      \node[vertex] (W) at (0, -1) {$W$};
      \node[vertex] (S) at (1, -1) {$S$};
      \node[uvertex] (Y) at (2, -1) {$Y$};
      
      \draw[dir,opacity=0.5] (Xh) to [bend right=0] (X);
      \draw[dir] (X) -- (W);
      \draw[dir] (W) -- (S);
      \draw[dir] (S) -- (Y);

      \draw[bidir] (X) to [bend left=30] (S);

      \begin{pgfonlayer}{back}
        \draw[fill=betterblue!25,draw=none] \convexpath{S, Y}{\outerr mm};
        \node[circle,fill=betterred!25,draw=none,minimum size=2*\outerr mm] at (X) {};
        \node[circle,fill=betterred!65,draw=none,minimum size=2*\innerr mm] at (X) {};
        \node[circle,fill=betterblue!25,draw=none,minimum size=2*\outerr mm] at (Y) {};
        \node[circle,fill=betterblue!65,draw=none,minimum size=2*\innerr mm] at (Y) {};
    \end{pgfonlayer}
  \end{tikzpicture}
  \caption{}
  \label{fig1c}
  \end{subfigure}\hfill
  \begin{subfigure}{0.2\linewidth}\centering
    \begin{tikzpicture}
      \def\outerr{3}
      \def\innerr{2.7}
      \node[vertex] (X) at (0, 0) {$X$};
      \node[vertex] (Z) at (1, 0.5) {$Z$};
        \node[vertex] (W) at (0, -1) {$W$};
        \node[vertex] (S) at (1, -1) {$S$};
        \node[uvertex] (Y) at (2, -1) {$Y$};
        
        \draw[dir] (X) -- (W);
        \draw[dir] (W) -- (S);
        \draw[dir] (S) -- (Y);
  
        \draw[bidir] (X) to [bend left=30] (S);
        \draw[bidir] (Z) to [bend right=30] (X);
        \draw[bidir] (Z) to [bend left=30] (S);
        \draw[bidir] (Z) to [bend left=10] (W);

        \begin{pgfonlayer}{back}
          \draw[fill=betterred!25,draw=none] \convexpath{Z, X}{\outerr mm};
          \draw[fill=betterblue!25,draw=none] \convexpath{S, Y}{\outerr mm};
          \node[circle,fill=betterred!65,draw=none,minimum size=2*\innerr mm] at (X) {};
          \node[circle,fill=betterblue!25,draw=none,minimum size=2*\outerr mm] at (Y) {};
          \node[circle,fill=betterblue!65,draw=none,minimum size=2*\innerr mm] at (Y) {};
      \end{pgfonlayer}
    \end{tikzpicture}
    \caption{}
    \label{fig1d}
    \end{subfigure}\hfill\null
  \caption{Causal diagrams where $X$ represents an action (shaded red) and $Y$ represents a latent reward (shaded blue). Input covariates of the policy space $\Pi$ are shaded in light red and minimal imitation surrogates relative to action $X$ and reward $Y$ are shaded in light blue. }
  \label{fig1}
\end{figure}
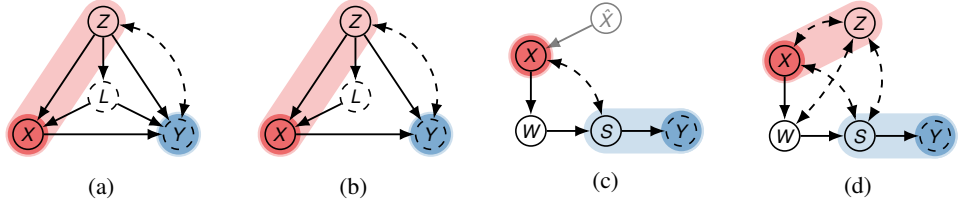

\begin{wrapfigure}{r}{0.3\textwidth}
  \vspace{-0.05in}
    \includegraphics[bb=0 0 750 230, width=0.3\textwidth]{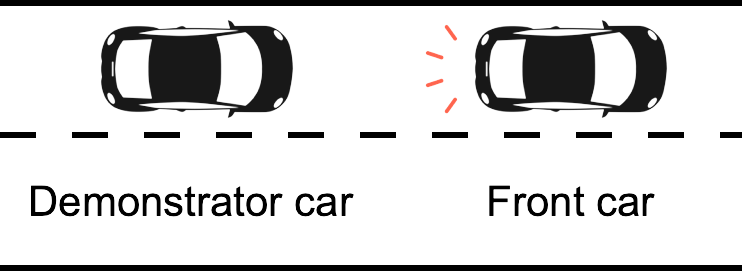}
    \caption{The tail light of the front car is unobserved in highway (aerial) drone data.}
    \label{fig:car}
  \vspace{-0.05in}
\end{wrapfigure}
For concreteness, consider a learning scenario depicted in \Cref{fig:car}, describing trajectories of human-driven cars collected by drones flying over highways \cite{highDdataset,etesami2020causal}. Using such data, we want to learn a policy $\pi(x|z)$ deciding on the acceleration (action) $X$ of the demonstrator car based on the velocity and locations of both the demonstrator and front cars, summarized as covariates $Z$. In reality, the human demonstrator also uses the tail light $L$ of the front car to coordinate his/her actions. The demonstrator's performance is evaluated with a latent reward function $Y$ taking $X, Z, L$ as input. However, only observations of $X, Z$ are collected by the drone, summarized as probabilities $P(x, z)$. \Cref{fig1a} describes the graphical representation of this environment. A na\"ive approach would estimate the conditional distribution $P(x|z)$ and use it as policy $\pi$. A preliminary analysis reveals that this naive ``cloning'' approach leads to sub-optimal performance. Consider an instance where variables $X, Y, Z, L, U \in \{0, 1\}$; their values are decided by functions: $L \gets Z \oplus U$, $X \gets Z \oplus \neg L$, $Y \gets X \oplus Z \oplus L$; $Z, U$ are independent variables drawn uniformly over $\set{0, 1}$; $\oplus$ represents the \textit{exclusive-or} operator. The expected reward $\E[Y|\doo(\pi)]$ induced by $\pi(x|z) = P(x|z)$ is equal to $0.5$, which is quite far from the optimal demonstrator's performance, $\E[Y] = 1$.

This example shows that even when one is able to perfectly mimic an optimal demonstrator, the learned policy can still be suboptimal. In this paper, we try to explicate this phenomenon and, more broadly, understand \textit{imitability} through a causal lens\footnote{Some recent progress in the field of causal imitation has been reported, albeit oblivious to the phenomenon described above and our contributions. Some work considered settings in which the input to the expert policy is fully observed \cite{de2019causal}, while another assumed that the primary outcome is observed (e.g., $Y$ in \Cref{fig1a}) \cite{etesami2020causal}. }. 
Our task is to learn an imitating policy that achieves the expert's performance from demonstration data in a \emph{structural causal model} \cite[Ch.~7]{pearl:2k}, allowing for unobserved confounders (UCs) affecting both action and outcome variables. Specifically, our contributions are summarized as follows. (1) We introduce a complete graphical criterion for determining the feasibility of imitation from demonstration data and qualitative knowledge about the data-generating process represented as a causal graph. (2) We develop a sufficient algorithm for identifying an imitating policy when the given criterion does not hold, by leveraging the quantitative knowledge in the observational distribution. (3) We provide an efficient and practical procedure for finding an imitating policy through explicit parametrization of the causal model, and use it to validate our results on high-dimensional, synthetic datasets. For the sake of space constraints, we provide all proofs in the complete technical report \cite[Appendix A]{appendix}.


\subsection{Preliminaries}
In this section, we introduce the basic notations and definitions used throughout the paper. We use capital letters to denote random variables ($X$) and small letters for their values ($x$). $\D_X$ represents the domain of $X$ and $\PP_X$ the space of probability distributions over $\D_X$. For a set $\*X$, $|\*X|$ denotes its dimension. We consistently use the abbreviation $P(x)$ to represent the probabilities $P(X = x)$. Finally, $I_{\{\ZZ = \zz \}}$ is an indicator function that returns $1$ if $\ZZ = \zz$ holds true; otherwise $0$. 

Calligraphic letters, e.g., $\G$, will be used to represent directed acyclic graphs (DAGs) (e.g., \Cref{fig1}). 
We denote by $\G_{\overline{\*X}}$ the subgraph obtained from $\G$ by removing arrows coming into nodes in $\XX$; $\G_{\underline{\*X}}$ is a subgraph of $\G$ by removing arrows going out of $\*X$. We will use standard family conventions for graphical relationships such as parents, children, descendants, and ancestors. For example, the set of parents of $\*X$ in $\G$ is denoted by $\pa(\*X)_{\G} = \cup_{X \in \*X} \pa(X)_{\G}$. $\ch$, $\de$ and $\an$ are similarly defined, We write $\Pa, \Ch, \De, \An$ if arguments are included as well, e.g. $\De(\XX)_{\G} = \de(\XX)_{\G} \cup \*X$. 
A path from a node $X$ to a node $Y$ in $\G$ is a sequence of edges which does not include a particular node more than once. Two sets of nodes $\*X, \*Y$ are said to be d-separated by a third set $\*Z$ in a DAG $\G$, denoted by $(\*X \ci \*Y | \*Z)_{\G}$, if every edge path from nodes in one set to nodes in another are ``blocked''. The criterion of blockage follows \cite[Def.~1.2.3]{pearl:2k}. 

The basic semantic framework of our analysis rests on \textit{structural causal models} (SCMs) \cite[Ch.~7]{pearl:2k}. An SCM $M$ is a tuple $\tuple{\*U, \*V, \FF, P(\*u)}$ where $\*V$ is a set of endogenous variables and $\*U$ is a set of exogenous variables. $\FF$ is a set of structural functions where $f_V \in \FF$ decides values of an endogenous variable $V \in \*V$ taking as argument a combination of other variables. That is, $V \leftarrow f_{V}(\Pa_V, U_V), \Pa_V \subseteq \*V, U_V \subseteq \*U$. Values of $\bm{U}$ are drawn from an exogenous distribution $P(\*u)$. Each SCM $M$ induces a distribution $P(\*v)$ over endogenous variables $\*V$. An intervention on a subset $\XX \subseteq \VV$, denoted by $\doo(\xx)$, is an operation where values of $\XX$ are set to constants $\xx$, replacing the functions $\{f_{X}: \forall X \in \XX\}$ that would normally determine their values. For an SCM $M$, let $M_{\xx}$ be a submodel of $M$ induced by intervention $\doo(\xx)$. For a set $\*S \subseteq \*V$, the interventional distribution $P(\sss|\doo(\xx))$ induced by $\doo(\*x)$ is defined as the distribution over $\*S$ in the submodel $M_{\xx}$, i.e., $P(\*s|\doo(\xx); M) \triangleq P(\*s;M_{\xx})$. We leave $M$ implicit when it is obvious from the context. For a detailed survey on SCMs, we refer readers to \cite[Ch.~7]{pearl:2k}.

\section{Imitation Learning in Structural Causal Models}
In this section, we formalize and study the imitation learning problem in causal language. We first define a special type of SCM that explicitly allows one to model the unobserved nature of some endogenous variables,  which is called the partially observable structural causal model (POSCM).\footnote{This definition will facilitate the more explicitly articulation of which endogenous variables are available to the demonstrator and corresponding policy at each point in time. } 
\begin{definition}[Partially Observable SCM]
A POSCM is a tuple $\tuple{M, \*O, \*L}$, where $M$ is a SCM $\tuple{\*U, \*V, \1F, P(\*u)}$ and $\tuple{\*O, \*L}$ is a pair of subsets forming a partition over $\*V$ (i.e., $\*V = \*O \cup \*L$ and $\*O \cap \*L = \emptyset$); $\*O$ and $\*L$ are called observed and latent endogenous variables, respectively.
\end{definition}

Each POSCM $M$ induces a probability distribution over $V$, of which one can measure the observed variables $\*O$. $P(\*o)$ is usually called the \textit{observational} distribution. $M$ is associated with a \textit{causal diagram} $\G$ (e.g., see \Cref{fig1}) where solid nodes represent observed variables $\*O$, dashed nodes represent latent variables $\*L$, and arrows represent the arguments $\Pa_V$ of each functional relationship $f_{V}$. Exogenous variables $\*U$ are not explicitly shown; a bi-directed arrow between nodes $V_i$ and $V_j$ indicates the presence of an unobserved confounder (UC) affecting both $V_i$ and $V_j$, i.e., $U_{V_i} \cap U_{V_j} \neq \emptyset$. 

Consider a POSCM $\tuple{M, \*O, \*L}$ with $M = \tuple{\*U, \*V, \1F, P(\*u)}$. Our goal is to learn an efficient policy to decide the value of an action variable $X \in \*O$. The performance of the policy is evaluated using the expected value of a reward variable $Y$. Throughout this paper, we assume that reward $Y$ is latent and $X$ affects $Y$ (i.e., $Y \in \*L \cap \De(X)_{\G}$). A \emph{policy} $\pi$ is a function mapping from values of covariates $\Pa^* \subseteq \*O \setminus \De(X)_{\G_{\overline{X}}}$\footnote{$\G_{\overline{X}}$ is a causal diagram associated with the submodel $M_x$ induced by intervention $do(x)$.} to a probability distribution over $X$, which we denote by $\pi(x|\pa^*)$. An intervention following a policy $\pi$, denoted by $\doo(\pi)$, is an operation that draws values of X independently following $\pi$, regardless of its original (natural) function $f_X$. Let $M_{\pi}$ denote the manipulated SCM of $M$ induced by $\doo(\pi)$. Similar to atomic settings, the interventional distribution $P(\*v|\doo(\pi))$ is defined as the distribution over $\*V$ in the manipulated model $M_{\pi}$, given by,
\begin{align}
  P(\*v|\doo(\pi)) = \sum_{\*u} P(\*u) \prod_{ V \in \*V \setminus \set{X}} P(v|\pa_V, u_V) \pi(x |\pa^*). \label{eq:do}
\end{align}
The expected reward of a policy $\pi$ is thus given by the causal effect $\E[Y|\doo(\pi)]$. The collection of all possible policies $\pi$ defines a \textit{policy space}, denoted by $\Pi = \set{\pi: \D_{\Pa^*} \mapsto \PP_{X} }$ (if $\Pa^* = \emptyset$, $\Pi = \set{\pi: \PP_X}$). For convenience, we define function $\Pa(\Pi) = \Pa^*$. A policy space $\Pi'$ is a subspace of $\Pi$ if $\Pa(\Pi') \subseteq \Pa(\Pi)$.
We will consistently highlight action $X$ in dark red, reward $Y$ in dark blue and covariates $\Pa(\Pi)$ in light red.  For instance, in \Cref{fig1a}, the policy space over action $X$ is given by $\Pi = \set{\pi: \D_{Z} \mapsto \PP_X}$; $Y$ represents the reward; $\Pi' = \set{\pi: \PP_X}$ is a subspace of $\Pi$.

Our goal is to learn an efficient policy $\pi \in \Pi$ that achieves satisfactory performance, e.g., larger than a certain threshold $\E[Y|\doo(\pi)] \geq \tau$, without knowledge of underlying system dynamics, i.e., the actual, true POSCM $M$. A possible approach is to identify the expected reward $\E[Y|\doo(\pi)]$ for each policy $\pi \in \Pi$ from the combinations of the observed data $P(\*o)$ and the causal diagram $\G$. Optimization procedures are applicable to find a satisfactory policy $\pi$. Let $\2M_{\langle \G \rangle}$ denote a hypothesis class of POSCMs that are compatible with a causal diagram $\G$. We define the non-parametric notion of identifiability in the context of POSCMs and conditional policies, adapted from \cite[Def.~3.2.4]{pearl:2k}.
\begin{definition}[Identifiability]\label{def:id}
  Given a causal diagram $\G$ and a policy space $\Pi$, let $\*Y$ be an arbitrary subset of $\*V$. $P(\*y|\doo(\pi))$ is said to be identifiable w.r.t. $\tuple{\G, \Pi}$ if $P\left(\*y|\doo(\pi); M \right)$ is uniquely computable from $P(\*o; M)$ and $\pi$ for any POSCM $M \in \2M_{\langle \G \rangle}$ and any $\pi \in \Pi$.
\end{definition}
In imitation learning settings, however, reward $Y$ is often not specified and remains latent, which precludes approaches that attempt to identify $\E[Y|\doo(\pi)]$:
\begin{restatable}{corollary}{corolnonid}\label{corol:nonid}
Given a causal diagram $\G$ and a policy space $\Pi$, let $\*Y$ be an arbitrary subset of $\*V$. If not all variables in $\*Y$ are observed (i.e., $\*Y \cap \*L \neq \emptyset$), $P(\*y|\doo(\pi))$ is not identifiable.
\end{restatable}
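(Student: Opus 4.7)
The plan is to establish non-identifiability by the standard device of constructing two POSCMs $M_1, M_2 \in \mathscr{M}_{\langle \G \rangle}$ that agree on the observational distribution $P(\*o)$ but differ on $P(\*y \mid \doo(\pi))$ for some $\pi \in \Pi$, directly contradicting Definition \ref{def:id}. Since $\*Y \cap \*L \neq \emptyset$, pick any $L \in \*Y \cap \*L$, and choose as $M_1$ any POSCM compatible with $\G$ in which $L$ has, say, a binary domain and a non-uniform marginal (the freedom to do so is guaranteed by the fact that $\mathscr{M}_{\langle \G \rangle}$ is only constrained qualitatively by $\G$).

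The construction of $M_2$ uses a relabeling of the latent variable $L$ that is invisible to the observed variables. Fix a non-identity bijection $g: \D_L \to \D_L$ (for a binary $L$, the flip $g(0)=1,\; g(1)=0$). Keep the same exogenous distribution $P(\*u)$, the same domains, and the same functional form for every variable except: set $f^{M_2}_L \triangleq g \circ f^{M_1}_L$, and for every child $V \in \Ch(L)_{\G}$ redefine $f^{M_2}_V(\pa_V, u_V) \triangleq f^{M_1}_V(\pa_V|_{L \mapsto g^{-1}(L)},\, u_V)$, i.e., precompose each child's structural function with $g^{-1}$ on its $L$-coordinate. Since the graph, domains, and UC structure are untouched, $M_2 \in \mathscr{M}_{\langle \G \rangle}$.

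Next I would verify by structural induction on the topological order of $\G$ that, at every exogenous realization $\*u$ and under any intervention $\doo(\pi)$, the two models produce identical values for every $V \in \*V \setminus \{L\}$: ancestors of $L$ are untouched; the modified $f_L$ and modified $f_V$ for children of $L$ compose so that the $g$ introduced by $f^{M_2}_L$ is immediately undone by the $g^{-1}$ in each child; downstream of the children of $L$ every function is unmodified, so equality propagates. Hence the joint distribution over $\*O$ is identical in $M_1$ and $M_2$ under every intervention, and in particular $P(\*o; M_1) = P(\*o; M_2)$. Because $\pi$ reads only from $\Pa(\Pi) \subseteq \*O$, the argument is indifferent to the choice of $\pi$.

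Finally, since $L \in \*Y$, the joint $(L, \*Y \setminus \{L\})$ in $M_2$ equals $(g(L), \*Y \setminus \{L\})$ from $M_1$ at every $\*u$, so the marginal $P(\*y \mid \doo(\pi))$ changes whenever $P_{M_1}(L)$ is not invariant under $g$, which was arranged by construction. Thus $P(\*y \mid \doo(\pi); M_1) \neq P(\*y \mid \doo(\pi); M_2)$ while $P(\*o; M_1) = P(\*o; M_2)$, so $P(\*y \mid \doo(\pi))$ is not identifiable. The only delicate step is the cancellation argument for children of $L$: one must be careful that the substitution $L \mapsto g^{-1}(L)$ targets precisely the $L$-coordinate of $\pa_V$ (not coincidental variables sharing $L$'s domain), after which the rest of the proof is a clean inductive bookkeeping.
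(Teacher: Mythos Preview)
Your proof is correct and takes a genuinely different route from the paper. The paper's argument is much more elementary: it picks $Y \in \*Y \cap \*L$ and constructs $M_1, M_2$ in which $Y \gets U_Y$ is pure exogenous noise (ignoring any endogenous parents), with $P(U_Y = 0) = 0.1$ in $M_1$ versus $0.9$ in $M_2$, and everything else identical. Since $Y$ is latent and (implicitly) nothing observed is made to actually depend on it in these two models, $P(\*o)$ is unchanged while $P(\*y \mid \doo(\pi))$ shifts. Your relabeling construction is more general---it works starting from an arbitrary $M_1$ and does not require the degenerate choice of making $L$ causally inert---at the cost of the inductive bookkeeping you flag. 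The paper's version is shorter but leans on the freedom to make edges into $Y$ and out of $Y$ vacuous; yours makes the invariance of $P(\*o)$ completely transparent even when $L$ has observed descendants whose mechanisms genuinely use $L$.

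One small imprecision to tighten: in your final step what you need is that the \emph{interventional} marginal $P_{M_1}(L \mid \doo(\pi))$ be non-$g$-invariant, not merely the observational marginal $P_{M_1}(L)$; these can differ when $X \in \An(L)_{\G}$. The easiest patch, already available within your freedom to choose $M_1$, is to let $f^{M_1}_L$ depend only on $U_L$ with $U_L$ non-uniform; then $L$'s marginal is the same under every intervention, and the rest of your argument goes through verbatim for every $\pi \in \Pi$.
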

In other words, \Cref{corol:nonid} shows that when the reward $Y$ is latent, it is infeasible to uniquely determine values of $\E[Y|\doo(\pi)]$ from $P(\*o)$. A similar observation has been noted in \cite[Prop.~1]{lee2020causal}. This suggests that we need to explore learning through other modalities.

\subsection{Causal Imitation Learning}
To circumvent issues of non-identifiability, a common solution is to assume that the observed trajectories are generated by an ``expert'' demonstrator with satisfactory performance $\E[Y]$, e.g., no less than a certain threshold ($\E[Y] \geq \tau$). If we could find a policy $\pi$ that perfectly ``imitates" the expert with respect to reward $Y$, $\E[Y|\doo(\pi)] = \E[Y]$, the performance of the learner is also guaranteed to be satisfactory. Formally,
\begin{definition}[Imitability] \label{def:imitability}
  Given a causal diagram $\G$ and a policy space $\Pi$, let $\*Y$ be an arbitrary subset of $\*V$. $P(\*y)$ is said to be imitable w.r.t. $\tuple{\G, \Pi}$ if there exists a policy $\pi \in \Pi$ uniquely computable from $P(\*o)$ such that $P(\*y|\doo(\pi); M) = P(\*y; M)$ for any POSCM $M \in \2M_{\langle \G \rangle}$.
\end{definition}
Our task is to determine the imitability of the expert performance. More specifically, we want to learn an \emph{imitating policy} $\pi \in \Pi$ from $P(\*o)$ such that $P(y|\doo(\pi)) = P(y)$\footnote{The imitation is trivial if $Y \not \in \De(X)_{\G}$: by Rule 3 of \cite[Thm.~3.4.1]{pearl:2k} (or \cite[Thm.~1]{correa2020calculus}), $P(y|\doo(\pi)) = P(y)$ for any policy $\pi$. This paper aims to find a \emph{specific} $\pi$ satisfying $P(y|\doo(\pi)) = P(y)$ even when $Y \in \De(X)_{\G}$.}, in any POSCM $M$ associated with the causal diagram $\G$. Consider \Cref{fig2a} as an example. $P(y)$ is imitable with policy $\pi(x) = P(x)$ since by \Cref{eq:do} and marginalization, $P(y|\doo(\pi)) = \sum_{x,w} P(y|w)P(w|x)\pi(x) = \sum_{x,w}P(y|w)P(w|x)P(x) = P(y)$. In practice, unfortunately, the expert's performance cannot always be imitated. To understand this setting, we first write, more explicitly, the conditions under which this is not the case:
\begin{restatable}{lemma}{lemnonimitable}\label{lem:non-imitable}
  Given a causal diagram $\G$ and a policy space $\Pi$, let $\*Y$ be an arbitrary subset of $\*V$. $P(\*y)$ is not imitable w.r.t. $\tuple{\G, \Pi}$ if there exists two POSCMs $M_1, M_2 \in \2M_{\langle \G \rangle}$ satisfying $P(\*o; M_1) = P(\*o; M_2)$ while there exists no policy $\pi \in \Pi$ such that for $i = 1, 2$, $P(\*y|\doo(\pi); M_i) = P(\*y;M_i)$.
\end{restatable}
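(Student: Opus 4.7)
The plan is to prove the contrapositive, or equivalently to derive a contradiction by assuming imitability. The statement is an ``if'' direction, so I would suppose that $P(\*y)$ \emph{is} imitable with respect to $\tuple{\G, \Pi}$ and then show that this contradicts the existence of the two models $M_1, M_2 \in \2M_{\langle \G \rangle}$ whose observational distributions coincide but which admit no common imitating policy.

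\textbf{Step 1.} By \Cref{def:imitability}, imitability yields a policy $\pi^\star \in \Pi$ that is \emph{uniquely computable} from $P(\*o)$ and satisfies $P(\*y \mid \doo(\pi^\star); M) = P(\*y; M)$ for every POSCM $M \in \2M_{\langle \G \rangle}$. The key ingredient is the word ``uniquely'': since $\pi^\star$ is obtained as a (well-defined) function of the observational distribution alone, two POSCMs with the same observational law must be assigned the same policy.

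\textbf{Step 2.} Apply this to $M_1$ and $M_2$. By hypothesis, $P(\*o; M_1) = P(\*o; M_2)$, so the unique-computability of $\pi^\star$ from $P(\*o)$ forces the same $\pi^\star$ to be returned for both models. Combining this with the imitability guarantee from Step 1 applied separately at $M_1$ and at $M_2$, we obtain
\begin{equation*}
P(\*y \mid \doo(\pi^\star); M_i) = P(\*y; M_i), \qquad i = 1, 2.
\end{equation*}
But this is precisely the kind of policy whose nonexistence is assumed in the hypothesis of the lemma, yielding a contradiction. Hence $P(\*y)$ cannot be imitable.

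\textbf{Anticipated obstacle.} The argument is essentially a short unpacking of definitions, so no serious technical obstacle arises. The only subtle point to handle carefully is the ``uniquely computable from $P(\*o)$'' clause in \Cref{def:imitability}: one has to be explicit that this means $\pi^\star$ is determined as a function of the observational distribution (so two POSCMs that induce the same $P(\*o)$ must map to the same policy), as opposed to merely existing. Once this is stated cleanly, the contradiction with the two-model witness is immediate.
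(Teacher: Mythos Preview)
Your proposal is correct and follows the same approach as the paper: both argue that imitability amounts to the existence of a function from $P(\*o)$ to an imitating policy, so two models with coinciding observational distributions must share an imitating policy, contradicting the hypothesis. The paper compresses this into a single sentence, while you spell out the contrapositive and the role of unique computability more explicitly.
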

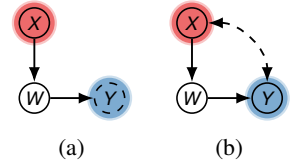
\begin{wrapfigure}{r}{0.3\textwidth}
  \vspace{-0.1in}
  \hfill
  \begin{subfigure}{0.15\textwidth}\centering
      \begin{tikzpicture}
        \def\outerr{3}
        \def\innerr{2.7}
        \node[vertex] (X) at (0, 0) {$X$};
        \node[vertex] (W) at (0, -1) {$W$};
        \node[uvertex] (Y) at (1, -1) {$Y$};
        
        \draw[dir] (X) -- (W);
        \draw[dir] (W) -- (Y);

        \begin{pgfonlayer}{back}
          \node[circle,fill=betterred!25,draw=none,minimum size=2*\outerr mm] at (X) {};
          \node[circle,fill=betterred!65,draw=none,minimum size=2*\innerr mm] at (X) {};
          \node[circle,fill=betterblue!25,draw=none,minimum size=2*\outerr mm] at (Y) {};
          \node[circle,fill=betterblue!65,draw=none,minimum size=2*\innerr mm] at (Y) {};
      \end{pgfonlayer}
      \end{tikzpicture}
    \caption{}
    \label{fig2a}
    \end{subfigure}\hfill
    \begin{subfigure}{0.15\textwidth}\centering
      \begin{tikzpicture}
        \def\outerr{3}
        \def\innerr{2.7}
        \node[vertex] (X) at (0, 0) {$X$};
        \node[vertex] (W) at (0, -1) {$W$};
        \node[vertex] (Y) at (1, -1) {$Y$};
        
        \draw[dir] (X) -- (W);
        \draw[dir] (W) -- (Y);
        \draw[bidir] (X) to [bend left = 45] (Y);
        \begin{pgfonlayer}{back}
          \node[circle,fill=betterred!25,draw=none,minimum size=2*\outerr mm] at (X) {};
          \node[circle,fill=betterred!65,draw=none,minimum size=2*\innerr mm] at (X) {};
          \node[circle,fill=betterblue!25,draw=none,minimum size=2*\outerr mm] at (Y) {};
          \node[circle,fill=betterblue!65,draw=none,minimum size=2*\innerr mm] at (Y) {};
      \end{pgfonlayer}
      \end{tikzpicture}
    \caption{}
    \label{fig2b}
    \end{subfigure}\hfill\null
    \caption{Imitability v. Identifiability.}
    \label{fig2}
    \vspace{-0.2in}
\end{wrapfigure}
It follows as a corollary that $P(\*y)$ is not imitable if there exists a POSCM $M$ compatible with $\G$ such that no policy $\pi \in \Pi$ could ensure $P(\*y|\doo(\pi);M) = P(\*y; M)$. For instance, consider the causal diagram $\G$ and policy space $\Pi$ in \Cref{fig2b}. Here, the expert's reward $P(y)$ is not imitable: consider a POSCM with functions $X \gets U, W \gets X, Y \gets U \oplus \neg W$; values $U$ are drawn uniformly over $\{0, 1\}$. In this model, $P(Y=1|\doo(\pi)) = 0.5$ for any policy $\pi$, which is far from the optimal expert reward, $P(Y = 1)= 1$.

An interesting observation from the above example of \Cref{fig2b} is that the effect $P(y|\doo(\pi))$ is identifiable, following the front-door criterion in \cite[Thm.~3.3.4]{pearl:2k}, but no policy imitates the corresponding $P(y)$. However, in some settings, the expert's reward $P(y)$ is imitable but the imitator's reward $P(y|\doo(\pi))$ cannot be uniquely determined. To witness, consider again the example in \Cref{fig2a}. The imitability of $P(y)$ has been previously shown; while $P(y|\doo(\pi))$ is not identifiable due to latent reward $Y$ (\Cref{corol:nonid}).

In general, the problem of imitability is orthogonal to identifiability, and, therefore, requires separate consideration. Since imitability does not always hold, we introduce a useful graphical criterion for determining whether imitating an expert's performance is feasible, and if so, how.
\begin{restatable}[Imitation by Direct Parents]{theorem}{thmdp}\label{thm:dp}
  Given a causal diagram $\G$ and a policy space $\Pi$, $P(y)$ is imitable w.t.r. $\tuple{\G, \Pi}$ if $\pa(X)_{\G} \subseteq \Pa(\Pi)$ and there is no bi-directed arrow pointing to $X$ in $\G$. Moreoever, the imitating policy $\pi \in \Pi$ is given by $\pi\left(x | \pa(\Pi)\right) = P\left(x | \pa(X)_{\G} \right)$.
\end{restatable}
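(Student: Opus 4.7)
The plan is to directly construct the imitating policy as $\pi(x \mid \pa(\Pi)) := P(x \mid \pa(X)_\G)$ and then show, by matching the SCM factorizations, that the interventional distribution $P(\*v \mid \doo(\pi))$ coincides with the observational distribution $P(\*v)$. Marginalization then yields $P(\*y \mid \doo(\pi)) = P(\*y)$ for any $\*Y \subseteq \*V$, so imitability in the sense of \Cref{def:imitability} follows once we observe that $\pi$ is uniquely computable from $P(\*o)$; the latter is immediate because $\pa(X)_\G \subseteq \Pa(\Pi) \subseteq \*O \setminus \De(X)_{\G_{\overline{X}}}$, so the right-hand side is a conditional of observed variables only.

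The main technical step is to factor the exogenous distribution around $X$. The absence of a bi-directed arrow into $X$ translates into $U_X \cap U_V = \emptyset$ for every $V \neq X$, hence $P(\*u) = P(u_X)\, P(\*u \setminus u_X)$, and in particular $U_X$ is independent of $\pa(X)_\G$ (since $\pa(X)_\G$ is determined by structural equations that, by acyclicity, do not involve $U_X$). First I would plug this factorization into the SCM expansion $P(\*v) = \sum_{\*u} P(\*u) \prod_{V \in \*V} P(v \mid \pa_V, u_V)$ and peel off the inner $u_X$-sum, recovering $\sum_{u_X} P(u_X)\, P(x \mid \pa(X)_\G, u_X) = P(x \mid \pa(X)_\G)$. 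This yields the truncated form $P(\*v) = \sum_{\*u \setminus u_X} P(\*u \setminus u_X) \prod_{V \neq X} P(v \mid \pa_V, u_V) \cdot P(x \mid \pa(X)_\G)$.

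Next I would apply the same factorization of $P(\*u)$ to \Cref{eq:do}. Since $u_X$ no longer appears in any remaining factor (the mechanism at $X$ has been replaced by $\pi$), summing $P(u_X)$ out is trivial, and \Cref{eq:do} reduces to $P(\*v \mid \doo(\pi)) = \sum_{\*u \setminus u_X} P(\*u \setminus u_X) \prod_{V \neq X} P(v \mid \pa_V, u_V) \cdot \pi(x \mid \pa(\Pi))$. Substituting $\pi(x \mid \pa(\Pi)) = P(x \mid \pa(X)_\G)$, which is a valid element of $\Pi$ because $\pa(X)_\G \subseteq \Pa(\Pi)$, makes the two displays identical, giving $P(\*v \mid \doo(\pi)) = P(\*v)$ and thus $P(\*y \mid \doo(\pi)) = P(\*y)$.

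I expect the only delicate spot to be the unconfoundedness step---rigorously deriving $U_X \perp \pa(X)_\G$ from the two graphical conditions, which requires a careful appeal to the DAG structure together with the disjointness of exogenous variables implied by the absence of bi-directed arrows at $X$. Once that independence is in hand, the rest is bookkeeping on the factorized joint. A lightweight alternative would be to invoke Rule~2 of do-calculus on $\G_{\underline{X}}$, exploiting the absence of incoming bi-directed edges at $X$ to obtain $P(x \mid \doo(\pa(X)_\G)) = P(x \mid \pa(X)_\G)$, and then combine this with a standard argument for stochastic policies; however, the direct SCM computation above is both shorter and more transparent in the POSCM framework adopted in the paper.
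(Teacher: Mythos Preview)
Your proof is correct and takes a genuinely different route from the paper. The paper's argument proceeds via do-calculus: it observes that the two graphical hypotheses make $\pa(X)_{\G}$ a $\pi$-backdoor admissible set (so \Cref{thm:dp} is effectively a special case of \Cref{thm:backdoor}), then invokes Rule~2 to write $P(y) = \sum_{x,\pa(X)_{\G}} P(\pa(X)_{\G})\,P(x\mid\pa(X)_{\G})\,P(y\mid\doo(x),\pa(X)_{\G})$ and recognizes this as $P(y\mid\doo(\pi))$ for the stated policy. You instead work directly at the SCM level, factoring $P(\*u)$ around $U_X$ to show the stronger statement $P(\*v\mid\doo(\pi)) = P(\*v)$ for the full joint. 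Your approach is more self-contained (no appeal to do-calculus) and yields more than is needed; the paper's is shorter once Rule~2 is taken for granted. Amusingly, the ``lightweight alternative'' you mention at the end is essentially the paper's chosen path. One small point worth making explicit: your factorization $P(\*u) = P(u_X)\,P(\*u\setminus u_X)$ relies on the semi-Markovian convention that distinct exogenous components are independent, so that disjointness of $U_X$ from every other $U_V$ (which is what the absence of bi-directed arrows at $X$ encodes) yields the product form; this is standard and consistent with the paper's setup, but the paper does not state it in so many words.
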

In words, \Cref{thm:dp} says that if the expert and learner share the same policy space, then the policy is always imitable. In fact, this result can be seen as a causal justification for when the method of ``behavior cloning'', widely used in practice, is valid, leading to proper imitation. When the original behavior policy $f_X$ is contained in the policy space $\Pi$, the leaner could imitate the expert's reward $P(y)$ by learning a policy $\pi \in \Pi$ that matches the distribution $P(x|\pa(\Pi))$ \cite{widrow1964pattern,pomerleau1989alvinn}. Next, we consider the more challenging setting when policy spaces of the expert and learner disagree (i.e., the learner and expert have different views of the world, $f_X \not \in \Pi$). We will leverage a graphical condition adopted from the celebrated \textit{backdoor criterion} \cite[Def.~3.3.1]{pearl:2k}.
\begin{definition}[$\pi$-Backdoor]\label{def:backdoor}
Given a causal diagram $\G$ and a policy space $\Pi$, a set $\*Z$ is said to satisfy the \emph{$\pi$-backdoor} criterion w.r.t. $\tuple{\G. \Pi}$ if and only if $\*Z \subseteq \Pa(\Pi)$ and $(Y \ci X | \*Z)_{\G_{\underline{X}}}$, which is called the \emph{$\pi$-backdoor admissible set} w.r.t. $\tuple{\G, \Pi}$.
\end{definition}
For concreteness, consider again the highway driving example in \Cref{fig1a}. There exists no $\pi$-backdoor admissible set due to the path $X \leftarrow L \rightarrow Y$. Now consider a modified graph in \Cref{fig1b} where edge $L \rightarrow Y$ is removed. $\{Z\}$ is $\pi$-backdoor admissible since $Z \in \Pa(\Pi)$ and $\left (Y \ci X | Z \right)_{\G_{\underline{X}}}$. Leveraging the imitation backdoor condition, our next theorem provides a full characterization for when imitating expert's performance is achievable, despite the fact that the reward $Y$ is latent.
\begin{restatable}[Imitation by $\pi$-Backdoor]{theorem}{thmbackdoor}\label{thm:backdoor}
  Given a causal diagram $\G$ and a policy space $\Pi$, $P(y)$ is imitable w.r.t. $\tuple{\G, \Pi}$ if and only if there exists an $\pi$-backdoor admissible set $\*Z$ w.r.t. $\tuple{\G, \Pi}$. Moreover, the imitating policy $\pi \in \Pi$ is given by $\pi\left(x | \pa(\Pi) \right) = P\left(x | \*z \right)$.
\end{restatable}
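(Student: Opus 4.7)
The plan is to prove sufficiency by direct calculation with do-calculus and necessity by constructing a pair of observationally indistinguishable POSCMs that no single policy can jointly imitate.

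\textbf{Sufficiency.} Given a $\pi$-backdoor admissible set $\*Z \subseteq \Pa(\Pi)$, I would set the imitating policy to $\pi^\star(x \mid \pa(\Pi)) \triangleq P(x \mid \*z)$ and compute $P(y \mid \doo(\pi^\star))$ from Eq.~\ref{eq:do}. The first step is to decompose the post-intervention distribution through $X$ and $\Pa(\Pi)$. Because $\Pa(\Pi) \subseteq \*O \setminus \De(X)_{\G}$, Rule~3 of do-calculus implies that neither $\doo(\pi^\star)$ nor $\doo(x)$ alters the distribution of $\Pa(\Pi)$, so that $P(\pa(\Pi) \mid \doo(\pi^\star)) = P(\pa(\Pi))$ and $P(\pa(\Pi) \setminus \*z \mid \doo(x), \*z) = P(\pa(\Pi) \setminus \*z \mid \*z)$. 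After marginalising $\Pa(\Pi) \setminus \*Z$, the expression reduces to
\begin{align*}
P(y \mid \doo(\pi^\star)) = \sum_{x, \*z} P(y \mid \doo(x), \*z)\, P(x \mid \*z)\, P(\*z).
\end{align*}
The $\pi$-backdoor condition $(Y \ci X \mid \*Z)_{\G_{\underline{X}}}$ together with Rule~2 of do-calculus then rewrites $P(y \mid \doo(x), \*z) = P(y \mid x, \*z)$, and the sum telescopes to $\sum_{x,\*z} P(y, x, \*z) = P(y)$, as desired.

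\textbf{Necessity.} I would argue the contrapositive via Lem.~\ref{lem:non-imitable}. Assuming no $\pi$-backdoor admissible set exists, for every $\*Z \subseteq \Pa(\Pi)$ there is a backdoor path connecting $X$ to $Y$ in $\G_{\underline{X}}$ that is active given $\*Z$. The goal is to exhibit two POSCMs $M_1, M_2 \in \2M_{\langle \G \rangle}$ with $P(\*o; M_1) = P(\*o; M_2)$ but for which no single $\pi \in \Pi$ satisfies $P(y \mid \doo(\pi); M_i) = P(y; M_i)$ for both $i = 1, 2$. I would pick a witness d-connecting path, place a hidden bit $U$ at its non-$X$ endpoint, and route $U$ along the path through XOR-type structural equations so that $U$ jointly determines $X$ and $Y$, but with opposite signs in $M_1$ and $M_2$. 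Choosing the exogenous distributions symmetrically keeps the observational $P(\*o)$ identical across the two models, whereas the expert's reward $P(y)$ is reproduced only by replicating the expert's $X$--$U$ correlation, which by assumption cannot be recovered from $\Pa(\Pi)$. Hence any $\pi \in \Pi$ fails in at least one of the two models, contradicting imitability.

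\textbf{Main obstacle.} The technically demanding step is the necessity direction: the counterexample has to defeat \emph{every} subset $\*Z \subseteq \Pa(\Pi)$ simultaneously, rather than a distinguished one. This requires a careful graphical selection of the witness path---one that remains unblocked regardless of which collider-conditioning strategy a policy may implicitly adopt---mirroring completeness proofs for the classical back-door criterion. The XOR gadget then has to be inserted along this path so that observables remain symmetric even when $\Pa(\Pi)$ contains ancestors of both endpoints, which I expect to be the most intricate part of the argument.
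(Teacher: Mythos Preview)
Your sufficiency argument is essentially the paper's: both reduce $P(y\mid\doo(\pi))$ to $\sum_{x,\*z}P(y\mid\doo(x),\*z)P(x\mid\*z)P(\*z)$ and invoke Rule~2 via $(Y\ci X\mid\*Z)_{\G_{\underline X}}$.

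For necessity, your proposal diverges from the paper in two material ways, and the second is a genuine gap.

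\emph{First}, you work through Lem.~\ref{lem:non-imitable} with two observationally equivalent models $M_1,M_2$. The paper instead proves a stronger single-model statement (its Lem.~3): it exhibits \emph{one} POSCM $M$ compatible with $\G$ in which $P(Y=0;M)=1$ yet $P(Y=0\mid\doo(\pi);M)=0.5$ for \emph{every} $\pi\in\Pi$. A single such $M$ already defeats imitability (Def.~\ref{def:imitability} quantifies over all $M\in\2M_{\langle\G\rangle}$), so the extra observational-matching constraint you impose on $M_1,M_2$ is an unnecessary complication.

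\emph{Second}, and this is where the gap lies, you have not identified the canonical witness set, and your ``pick a witness path'' step is underspecified. The paper's key move is to take the \emph{specific} set $\*Z=\An(Y)_{\G}\cap\Pa(\Pi)$: if no $\pi$-backdoor admissible set exists then this particular $\*Z$ fails, and (after applying latent projection so that $\*V=\*O=\Pa(\Pi)\cup\{X,Y\}$) the failure forces the existence of a purely \emph{bi-directed} path $X\leftrightarrow Z_1\leftrightarrow\cdots\leftrightarrow Z_n\leftrightarrow Y$ with all $Z_i\in\*Z$. That specific structure is what makes the XOR construction go through: each $Z_i$ is a parity of at least two of the exogenous $U_j$ on the path, so the last confounder $U_n$ is independent of $\*Z^*=\{Z_1,\ldots,Z_n\}$, and hence no policy reading $\*Z^*$ (and a fortiori no policy in $\Pi$, since the remaining covariates are constructed to carry no information) can reproduce $P(y)$. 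Your proposal anticipates the difficulty (``defeat every subset $\*Z\subseteq\Pa(\Pi)$ simultaneously'') but does not resolve it; without the reduction to $\An(Y)_{\G}\cap\Pa(\Pi)$, the bi-directed structure of the path, and the latent-projection step to handle general $\G$ and $\Pa(\Pi)$, the XOR gadget as you describe it does not obviously neutralise all policies in $\Pi$.
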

That is, one can learn an imitating policy from a policy space $\Pi' = \{\pi: \D_{\*Z} \mapsto \PP_{X}\}$ that mimics the conditional probabilities $P(x|\*z)$ if and only if $\*Z$ is $\pi$-backdoor admissible. If that is the case, such a policy can be learned from data through standard density estimation methods. For instance, \Cref{thm:backdoor} ascertains that $P(y)$ in \Cref{fig1a} is indeed non-imitable. On the other hand, $P(y)$ in \Cref{fig1b} is imitable, guaranteed by the $\pi$-backdoor admissible set $\set{Z}$; the imitating policy is given by $\pi(x|z) = P(x|z)$.

\section{Causal Imitation Learning with Data Dependency}
One may surmise that the imitation boundary established by \Cref{thm:backdoor} suggests that when there exists no $\pi$-backdoor admissible set, it is infeasible to imitate the expert performance from observed trajectories of demonstrations. In this section, we will circumvent this issue by exploiting actual parameters of the observational distribution $P(\*o)$. In particular, we denote by $\2M_{\langle \G, P \rangle}$ a subfamily of candidate models in $\2M_{\tuple{\G}}$ that induce both the causal diagram $\G$ and the observational distribution $P(\*o)$, i.e., $\2M_{\langle \G, P \rangle} = \left \{\forall M \in \2M_{\langle \G \rangle}: P(\*o;M) = P(\*o)\right \}$. We introduce a refined notion of imitability that will explore the quantitative knowledge of observations $P(\*o)$ (to be exemplified). Formally,
\begin{definition}[Practical Imitability]\label{def:p-imtability}
  Given a causal diagram $\G$, a policy space $\Pi$, and an observational distribution $P(\*o)$, let $\*Y$ be an arbitrary subset of $\*V$. $P(\*y)$ is said to be \emph{practically imitable} (for short, p-imitable) w.r.t. $\tuple{\G, \Pi, P(\*o)}$ if there exists a policy $\pi \in \Pi$ uniquely computable from $P(\*o)$ such that $P(\*y|\doo(\pi); M) = P(\*y;M)$ for any POSCM $M \in \2M_{\langle \G, P \rangle}$.
\end{definition}
The following corollary can be derived based on the definition of practical imitability.
\begin{restatable}{corollary}{corolimtability}\label{corol:p-imtability}
  Given a causal diagram $\G$, a policy space $\Pi$ and an observational distribution $P(\*o)$, let a subset $\*Y \subseteq \*V$. If $P(\*y)$ is imitable w.r.t. $\tuple{\G, \Pi}$, $P(\*y)$ is p-imitable w.r.t. $\tuple{\G, \Pi, P(\*o)}$.
\end{restatable}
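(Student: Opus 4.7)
The plan is to show that practical imitability is a strict weakening of imitability, obtained by shrinking the class of models over which the imitation identity must hold. Concretely, I would unfold both definitions and exploit the containment $\2M_{\langle \G, P \rangle} \subseteq \2M_{\langle \G \rangle}$, which is immediate from the definition $\2M_{\langle \G, P \rangle} = \{M \in \2M_{\langle \G \rangle} : P(\*o; M) = P(\*o)\}$.

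First, I would invoke the hypothesis that $P(\*y)$ is imitable w.r.t. $\tuple{\G, \Pi}$. By \Cref{def:imitability}, this produces a policy $\pi \in \Pi$ that is uniquely computable from the observational distribution and satisfies the imitation identity $P(\*y \mid \doo(\pi); M) = P(\*y; M)$ for every $M \in \2M_{\langle \G \rangle}$. In particular, the functional $P(\*o) \mapsto \pi$ supplied by imitability does not depend on which specific $P(\*o)$ the expert generates; it is a single rule that works across all compatible models.

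Next, I would restrict attention to the subclass $\2M_{\langle \G, P \rangle}$. Because every model in $\2M_{\langle \G, P \rangle}$ is a member of $\2M_{\langle \G \rangle}$, the identity $P(\*y \mid \doo(\pi); M) = P(\*y; M)$ inherited from the previous step holds in particular for every $M \in \2M_{\langle \G, P \rangle}$. Combined with the fact that $\pi$ is uniquely computable from $P(\*o)$, this exactly matches the requirement of \Cref{def:p-imtability}, so $P(\*y)$ is p-imitable w.r.t. $\tuple{\G, \Pi, P(\*o)}$.

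I do not anticipate a real obstacle here: the argument is essentially a one-line subset-of-models observation, and its only substantive ingredient is recognizing that the policy produced by imitability is simultaneously (i) a deterministic functional of $P(\*o)$ and (ii) valid on a superset of models, so the same policy witnesses p-imitability. The only subtlety worth flagging is that both notions use the same rule ``$\pi$ uniquely computable from $P(\*o)$,'' so no additional construction of $\pi$ is required; we simply reuse the witness.
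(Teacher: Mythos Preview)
Your proposal is correct and matches the paper's approach: the paper does not even spell out a proof, stating only that the corollary ``can be derived based on the definition of practical imitability,'' which is precisely the subset-of-models argument $\2M_{\langle \G, P \rangle} \subseteq \2M_{\langle \G \rangle}$ that you give.
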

Compared to \Cref{def:imitability}, the practical imitability of \Cref{def:p-imtability} aims to find an imitating policy for a subset of candidate POSCMs $\2M_{\langle \G, P \rangle}$ restricted to match a specific observational distribution $P(\*o)$. \Cref{def:imitability}, on the other hand, requires only the causal diagram $\G$. In other words, for an expert's performance $P(y)$ that is non-imitable w.r.t. $\tuple{\G, \Pi}$, it could still be p-imitable after analyzing actual probabilities of the observational distribution $P(\*o)$.  

For concreteness, consider again $P(y)$ in \Cref{fig2b} which is not imitable due to the bi-directed arrow $X \leftrightarrow Y$. However, new imitation opportunities arise when actual parameters of the observational distribution $P(x, w, y)$ are provided. Suppose the underlying POSCM is given by: $X \gets U_X \oplus U_Y$, $W \gets X \oplus U_W$, $Y \gets W \oplus U_Y$ where $U_X, U_Y, U_W$ are independent binary variables drawn from $P(U_X = 1) = P(U_Y = 1) = P(U_W = 0) = 0.9$. Here, the causal effect $P(y|\doo(x))$ is identifiable from $P(x, w, y)$ following the front-door formula $P(y|\doo(x)) = \sum_{w} P(w|x)\sum_{x'} P(y|w, x')P(x')$ \cite[Thm.~3.3.4]{pearl:2k}. We thus have $P(Y = 1 | \doo(X=0)) = 0.82$ which coincides with $P(Y = 1) = 0.82$, i.e., $P(y)$ is p-imitable with atomic intervention $\doo(X = 0)$. In the most practical settings, the expert reward $P(y)$ rarely equates to $P(y|\doo(x))$; stochastic policies $\pi(x)$ are then applicable to imitate $P(y)$ by re-weighting $P(y|\doo(x))$ induced by the corresponding atomic interventions \footnote{Consider a variation of the model where $P(U_W = 1) = 0.7$. $P(y)$ is p-imitable with $\pi(X = 0) = 0.75$.}. 
To tackle p-imitability in a general way, we proceed by defining a set of observed variables that serve as a surrogate of the unobserved $Y$ with respect to interventions on $X$. Formally, 
\begin{definition}[Imitation Surrogate]\label{def:surrogate}
  Given a causal diagram $\G$, a policy space $\Pi$, let $\*S$ be an arbitrary subset of $\*O$. $\*S$ is an \emph{imitation surrogate} (for short, surrogate) w.r.t. $\tuple{\G, \Pi}$ if $(Y \ci \hat{X} | \*S)_{\G \cup \Pi}$ where $\G \cup \Pi$ is a supergraph of $\G$ by adding arrows from $\Pa(\Pi)$ to $X$; $\hat{X}$ is a new parent to $X$.
\end{definition}
An surrogate $\*S$ is said to be minimal if there exists no subset $\*S' \subset \*S$ such that $\*S'$ is also a surrogate w.r.t. $\tuple{\G, \Pi}$. Consider as an example \Cref{fig1c} where the supergraph $\G \cup \Pi$ coincides with the causal diagram $\G$. By \Cref{def:surrogate}, both $\{W, S\}$ and $\{S\}$ are valid surrogate relative to $\tuple{X, Y}$ with $\{S\}$ being the minimal one. By conditioning on $S$, the decomposition of \Cref{eq:do} implies $P(y | \doo(\pi)) = \sum_{s, w, u} P(y|s) P(s| w, u)P(w|x)\pi(x) P(u) = \sum_{s} P(y|s) P(s|\doo(\pi))$. That is, the surrogate $S$ mediates all influence of interventions on action $X$ to reward $Y$. It is thus sufficient to find an imitating policy $\pi$ such that $P(s|\doo(\pi)) = P(s)$ for any POSCM $M$ associated with \Cref{fig1c}. The resultant policy is guaranteed to imitate the expert's reward $P(y)$. 

When a surrogate $\*S$ is found and $P(\*s|\doo(\pi))$ is identifiable, one could compute $P(\*s|\doo(\pi))$ for each policy $\pi$ and check if it matches $P(\*s)$. In many settings, however, $P(\*s|\doo(\pi))$ is not identifiable w.r.t. $\tuple{\G, \Pi}$. For example, in \Cref{fig1d}, $S$ is a surrogate w.r.t. $\tuple{\G, \Pi}$, but $P(s|\doo(\pi))$ is not identifiable due to collider $Z$ ($\pi$ uses non-descendants as input by default). Fortunately, identifying $P(\*s|\doo(\pi))$ may still be feasible in some subspaces of $\Pi$:
\begin{definition}[Identifiable Subspace]\label{def:idsp}
  Given a causal diagram $\G$, a policy space $\Pi$, and a subset $\*S \subseteq \*O$, let $\Pi'$ be a policy subspace of $\Pi$. $\Pi'$ is said to be an \textit{identifiable subspace} (for short, id-subspace) w.r.t. $\tuple{\G, \Pi, \*S}$ if $P(\*s|\doo(\pi))$ is identifiable w.r.t. $\tuple{\G, \Pi'}$.
\end{definition}
Consider a policy subspace $\Pi' = \set{\pi: \PP_X}$ described in \Cref{fig1d} (i.e. $\pi$ that does not exploit information from covariates $Z$). $P(s|\doo(\pi))$ is identifiable w.r.t. $\tuple{\G, \Pi'}$ following the front-door adjustment on $W$ \cite[Thm.~3.3.4]{pearl:2k}. We could then evaluate interventional probabilities $P(s | \doo(\pi))$ for each policy $\pi \in \Pi'$ from the observational distribution $P(x, w, s, z)$; the imitating policy is obtainable by solving the equation $P(s|\doo(\pi)) = P(s)$. In other words, $\set{S}$ and $\Pi'$ forms an instrument that allows one to solve the imitation learning problem in \Cref{fig1d}.
\begin{definition}[Imitation Instrument]\label{def:i-cond}
  Given a causal diagram $\G$ and a policy space $\Pi$, let $\*S$ be a subset of $\*O$ and $\Pi'$ be a subspace of $\Pi$. $\tuple{\*S, \Pi'}$ is said to be an \emph{imitation instrument} (for short, instrument) if $\*S$ is a surrogate w.r.t. $\tuple{\G, \Pi'}$ and $\Pi'$ is an id-subspace w.r.t. $\tuple{\G, \Pi, \*S}$. 
\end{definition}
\begin{restatable}{lemma}{lemsurrogate}\label{lem:surrogate}
  Given a causal diagram $\G$, a policy space $\Pi$, and an observational distribution $P(\*o)$, let $\tuple{\*S, \Pi'}$ be an instrument w.r.t. $\tuple{\G, \Pi}$. If $P(\*s)$ is p-imitable w.r.t. $\tuple{\G, \Pi', P(\*o)}$, then $P(y)$ is p-imitable w.r.t. $\tuple{\G, \Pi, P(\*o)}$. Moreover, an imitating policy $\pi$ for $P(\*s)$ w.r.t. $\tuple{\G, \Pi', P(\*o)}$ is also imitating policy for $P(y)$ w.r.t. $\tuple{\G, \Pi, P(\*o)}$.
\end{restatable}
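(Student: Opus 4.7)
The plan is to show that the very same policy $\pi \in \Pi'$ that witnesses p-imitability of $P(\*s)$ also witnesses p-imitability of $P(y)$ when viewed as an element of the ambient space $\Pi$. Since a policy subspace satisfies $\Pi' \subseteq \Pi$ by definition, $\pi$ already lies in $\Pi$; and because $\pi$ is by hypothesis uniquely computable from $P(\*o)$, the only remaining task is to verify the identity $P(y \mid \doo(\pi); M) = P(y; M)$ for every $M \in \2M_{\langle \G, P \rangle}$. The id-subspace half of the instrument hypothesis enters only implicitly, ensuring that the candidate $\pi$ indeed comes from a space in which $P(\*s \mid \doo(\pi))$ is identifiable and hence that the p-imitability premise on $P(\*s)$ is meaningful.

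First, I would exploit the surrogate property $(Y \ci \hat{X} \mid \*S)_{\G \cup \Pi'}$. The intuition is that the auxiliary parent $\hat{X}$ of $X$ in the supergraph plays the role of the stochastic input driving $\doo(\pi)$: the policy-induced submodel can be realized by first sampling $\hat{X}$ according to $\pi(\cdot \mid \Pa(\Pi'))$ and then deterministically setting $X = \hat{X}$. Under this reading, the graphical d-separation translates, via the soundness of d-separation for SCMs, into the conditional identity
\begin{align}
P(y \mid \*s, \doo(\pi); M) = P(y \mid \*s; M) \label{eq:lemsur-med}
\end{align}
which holds for every $\pi \in \Pi'$ and every $M$ compatible with $\G$. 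Informally, $\*S$ mediates all causal influence of the intervention on $Y$, so that conditioning on $\*S$ makes $Y$ indifferent to whether $X$ was produced by its natural mechanism $f_X$ or by the policy $\pi$.

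Combining \Cref{eq:lemsur-med} with the p-imitability assumption $P(\*s \mid \doo(\pi); M) = P(\*s; M)$, a one-line marginalization gives
\begin{align*}
P(y \mid \doo(\pi); M) &= \sum_{\*s} P(y \mid \*s, \doo(\pi); M)\, P(\*s \mid \doo(\pi); M) \\
&= \sum_{\*s} P(y \mid \*s; M)\, P(\*s; M) = P(y; M),
\end{align*}
which is the required identity, establishing that $\pi$ imitates $P(y)$ in every candidate POSCM.

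The main obstacle is turning the graphical surrogate condition into the probabilistic identity \Cref{eq:lemsur-med}. Concretely, one must argue that augmenting $\G$ with the auxiliary node $\hat{X}$ and the $\Pa(\Pi') \to X$ edges yields a bona fide causal diagram for the policy-induced model $M_\pi$, so that d-separation in $\G \cup \Pi'$ is sound with respect to $P(\cdot \mid \doo(\pi); M)$ for every $M \in \2M_{\langle \G, P \rangle}$. This can be done by explicitly constructing an SCM over $\*V \cup \set{\hat{X}}$ whose reduction recovers $M_\pi$ and whose independence model is captured by $\G \cup \Pi'$; once this foundational step is in place, the marginalization argument above closes the proof essentially for free.
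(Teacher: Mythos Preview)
Your overall plan---marginalize over $\*S$ and replace $P(y\mid\*s,\doo(\pi);M)$ by $P(y\mid\*s;M)$, then use $P(\*s\mid\doo(\pi);M)=P(\*s;M)$---is exactly the paper's. The gap is in how you propose to justify the key conditional identity.

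Your reading of $\hat{X}$ as ``the stochastic input driving $\doo(\pi)$,'' with $X\gets\hat{X}$ deterministically, yields an augmented SCM whose observational law is $P(\cdot\mid\doo(\pi);M)$ only. Soundness of d-separation in that model therefore gives you an independence \emph{within} the interventional regime, namely $P(y\mid\*s,\hat{x},\doo(\pi);M)=P(y\mid\*s,\doo(\pi);M)$; it says nothing about the \emph{observational} conditional $P(y\mid\*s;M)$, which never appears in the construction you describe. So the single d-separation, under your interpretation of $\hat{X}$, does not deliver the cross-regime identity $P(y\mid\*s,\doo(\pi);M)=P(y\mid\*s;M)$ that you claim.

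The paper closes this by noting that $\G\cup\Pi'$ is a common supergraph of both $\G$ and the manipulated diagram $\G_{\Pi'}$, so $(Y\ci\hat{X}\mid\*S)_{\G\cup\Pi'}$ descends to both subgraphs. It then argues in two steps: the separation in $\G_{\Pi'}$ gives $P(y\mid\*s,\doo(\pi))=P(y\mid\*s,\doo(x))$, and the separation in $\G$ gives $P(y\mid\*s,\doo(x))=P(y\mid\*s)$. An equivalent one-step fix is to read $\hat{X}$ as a \emph{regime indicator} that toggles $X$'s mechanism between $f_X$ and $\pi$; then the augmented SCM genuinely has diagram $\G\cup\Pi'$, encodes both $M$ and $M_\pi$, and $(Y\ci\hat{X}\mid\*S)$ directly equates the two conditionals across regimes. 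Either route is short; you just need to pick one that actually touches both sides of the identity.
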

In words, \Cref{lem:surrogate} shows that when an imitation instrument $\tuple{\*S, \Pi'}$ is present, we could reduce the original imitation learning on a latent reward $Y$ to a p-imitability problem over observed surrogate variables $\*S$ using policies in an identifiable subspace $\Pi'$. The imitating policy $\pi$ is obtainable by solving the equation $P(\*s| \doo(\pi)) = P(\*s)$.

\subsection{Confounding Robust Imitation}\label{sec3.1}
Our task in this section is to introduce a general algorithm that finds instruments, and learns a p-imitating policy given $\tuple{\G, \Pi, P(\*o)}$. A na\"ive approach is to enumerate all pairs of subset $\*S$ and subspace $\Pi'$ and check whether they form an instrument; if so, we can compute an imitating policy for $P(\*s)$ w.r.t. $\tuple{\G, \Pi', P(\*o)}$. However, the challenge is that the number of all possible subspaces $\Pi'$ (or subsets $\*S$) can be exponentially large. Fortunately, we can greatly restrict this search space. Let $\G \cup \set{Y}$ denote a causal diagram obtained from $\G$ by making reward $Y$ observed. The following proposition suggests that it suffices to consider only identifiable subspaces w.r.t. $\tuple{\G \cup \set{Y}, \Pi, Y}$. 
\begin{restatable}{lemma}{lemidY}\label{lem:idY}
  Given a causal diagram $\G$, a policy space $\Pi$, let a subspace $\Pi' \subseteq \Pi$. If there exists $\*S \subseteq \*O$ such that $\tuple{\*S, \Pi'}$ is an instrument w.r.t. $\tuple{\G, \Pi}$, $\Pi'$ is an id-subspace w.r.t. $\tuple{\G \cup \set{Y}, \Pi, Y}$.
\end{restatable}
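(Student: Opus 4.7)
The plan is to produce, for every $\pi \in \Pi'$, an explicit formula that computes $P(y \mid \doo(\pi))$ from the augmented observational distribution $P(\*o, y)$ in the diagram $\G \cup \set{Y}$, by combining a surrogate-based mediation identity with the identifying functional guaranteed by the instrument assumption.

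Unpacking the hypothesis, $\tuple{\*S, \Pi'}$ being an instrument w.r.t.\ $\tuple{\G, \Pi}$ provides two ingredients: (i) $\*S$ is a surrogate w.r.t.\ $\tuple{\G, \Pi'}$, i.e., $(Y \ci \hat{X} \mid \*S)_{\G \cup \Pi'}$, where the auxiliary node $\hat{X}$ represents the soft intervention on $X$ induced by any policy $\pi \in \Pi'$; and (ii) $\Pi'$ is an id-subspace w.r.t.\ $\tuple{\G, \Pi, \*S}$, meaning $P(\*s \mid \doo(\pi); M)$ is uniquely computable from $P(\*o; M)$ and $\pi$ for every $M \in \2M_{\langle \G \rangle}$ and every $\pi \in \Pi'$. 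Because $\G \cup \set{Y}$ and $\G$ share the same underlying DAG and differ only in which endogenous variables are labeled observed, the hypothesis class $\2M_{\langle \G \cup \set{Y} \rangle}$ coincides with $\2M_{\langle \G \rangle}$ as a collection of SCMs, and the d-separation in (i) transfers verbatim.

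From (i), a standard do-calculus argument treating $\hat{X}$ as a regime indicator yields $P(y \mid \*s, \doo(\pi)) = P(y \mid \*s)$ for every $\pi \in \Pi'$, since $\*S$ blocks every path from $\hat{X}$ to $Y$ in $\G \cup \Pi'$. Conditioning on $\*S$ and then marginalizing therefore gives the mediation decomposition
\begin{equation*}
P(y \mid \doo(\pi)) \;=\; \sum_{\*s} P(y \mid \*s, \doo(\pi))\, P(\*s \mid \doo(\pi)) \;=\; \sum_{\*s} P(y \mid \*s)\, P(\*s \mid \doo(\pi)).
\end{equation*}
In $\G \cup \set{Y}$, both $\*S \subseteq \*O$ and $Y$ are observed, so $P(y \mid \*s)$ is directly a function of the augmented observational distribution $P(\*o, y)$. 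By (ii), $P(\*s \mid \doo(\pi))$ admits an identifying functional in $P(\*o)$, which is in turn a marginal of $P(\*o, y)$. Substituting both into the decomposition produces a unique expression for $P(y \mid \doo(\pi); M)$ in terms of $P(\*o, y; M)$ and $\pi$ that is valid across $\2M_{\langle \G \cup \set{Y} \rangle}$, establishing identifiability of $P(y \mid \doo(\pi))$ w.r.t.\ $\tuple{\G \cup \set{Y}, \Pi'}$. Together with $\Pi' \subseteq \Pi$, this is exactly the defining property of an id-subspace w.r.t.\ $\tuple{\G \cup \set{Y}, \Pi, Y}$.

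The main obstacle is justifying the step $P(y \mid \*s, \doo(\pi)) = P(y \mid \*s)$ from the surrogate d-separation, since $\doo(\pi)$ is a conditional regime rather than an atomic intervention. The cleanest route is to formalize the intervention via the auxiliary regime node $\hat{X}$, whose edges encode exactly the policy's dependence on $\Pa(\Pi')$ already represented in $\G \cup \Pi'$; the assumed d-separation $(Y \ci \hat{X} \mid \*S)_{\G \cup \Pi'}$ then directly yields the required conditional independence under $\doo(\pi)$, and the rest of the argument is algebraic bookkeeping.
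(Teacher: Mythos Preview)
Your proposal is correct and follows essentially the same approach as the paper: decompose $P(y\mid\doo(\pi))=\sum_{\*s}P(y\mid\*s)\,P(\*s\mid\doo(\pi))$ via the surrogate property, then observe that $P(y\mid\*s)$ is readable from $P(\*o,y)$ in $\G\cup\{Y\}$ while $P(\*s\mid\doo(\pi))$ is identifiable by the instrument assumption. The paper's version routes the conditional-independence step through an intermediate atomic $\doo(x)$ (citing the proof of the surrogate lemma) rather than invoking the regime node $\hat{X}$ directly, but the argument is the same in substance.
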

\begin{wrapfigure}{r}{0.55\textwidth}
  \vspace{-0.2in}
  \begin{algorithm}[H]
    \caption{\textsc{Imitate}}
    \label{imitate}
  \begin{algorithmic}[1]
   \STATE {\bfseries Input:} $\G, \Pi, P(\*o)$.
   \WHILE{$\textsc{ListIdSpace}(\G \cup \set{Y}, \Pi, Y)$ outputs a policy subspace $\Pi'$}
   \WHILE{$\textsc{ListMinSep}(\G \cup \Pi', \hat{X}, Y, \set{}, \*O)$ outputs a surrogate set $\*S$}
   \IF{$\textsc{Identify}(\G, \Pi', \*S) = \textsc{Yes}$}
   \STATE Solve for a policy $\pi \in \Pi'$ such that
   \[
    P(\*s|\doo(\pi);M) = P(\*s)
   \]
   \INDENT
   for any POSCM $M \in \2M_{\langle \G, P \rangle}$.
   \ENDINDENT
   \STATE Return $\pi$ if it exists; continue otherwise.
   \ENDIF
   \ENDWHILE
   \ENDWHILE \text{ Return} \textsc{Fail}.
  \end{algorithmic}
  \end{algorithm}
  \vspace{-0.2in}
\end{wrapfigure}
Our algorithm \textsc{Imitate} is described in \Cref{imitate}. We assume access to an \textsc{Identify} oracle \cite{tian:pea03-r290,shpitser:pea06a,correa2020calculus} that takes as input a causal diagram $\G$, a policy space $\Pi$ and a set of observed variables $\*S$. If $P(\*s|\doo(\pi))$ is identifiable w.r.t. $\tuple{\G, \Pi}$, \textsc{Identify} returns ``\textsc{Yes}''; otherwise, it returns ``\textsc{No}''. For details about the \textsc{Identify} oracle, we refer readers to \cite[Appendix B]{appendix}. More specifically, \textsc{Imitate} takes as input a causal diagram $\G$, a policy space $\Pi$ and an observational distribution $P(\*o)$. At Step 2, \textsc{Imitate} applies a subroutine \textsc{ListIdSpace} to list identifiable subspaces $\Pi'$ w.r.t. $\tuple{\G \cup \set{Y}, \Pi, Y}$, following the observation made in \Cref{lem:idY}. The implementation details of \textsc{ListIdSpace} are provided in \cite[Appendix C]{appendix}. When an identifiable subspace $\Pi'$ is found, \textsc{Imitate} tries to obtain a surrogate $\*S$ w.r.t the diagram $\G$ and subspace $\Pi'$. While there could exist multiple such surrogates, the following proposition shows that it is sufficient to consider only minimal ones.
\begin{restatable}{lemma}{lemminimal}\label{lem:minimal}
  Given a causal diagram $\G$, a policy space $\Pi$, an observational distribution $P(\*o)$ and a subset $\*S \subseteq \*O$. $P(\*s)$ is p-imitable only if for any $\*S' \subseteq \*S$, $P(\*s')$ is p-imitable w.r.t. $\tuple{\G, \Pi, P(\*o)}$.
\end{restatable}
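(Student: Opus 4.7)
The plan is to observe that p-imitability is preserved under marginalization of the target set, so the very same witnessing policy $\pi$ that imitates $P(\*s)$ will also imitate $P(\*s')$ for any $\*S'\subseteq\*S$.

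First I would unfold \Cref{def:p-imtability}: p-imitability of $P(\*s)$ w.r.t.\ $\tuple{\G,\Pi,P(\*o)}$ yields a policy $\pi\in\Pi$, uniquely computable from $P(\*o)$, such that for every $M\in\2M_{\langle\G,P\rangle}$,
\[
P(\*s\mid\doo(\pi);M) \;=\; P(\*s;M).
\]
Fix an arbitrary $\*S'\subseteq\*S$ and let $\*T=\*S\setminus\*S'$. The next step is to sum both sides of the above equality over $\*t\in\D_{\*T}$. Since marginalization is a deterministic operation on distributions over $\*V$ (and in particular on the interventional distribution defined in \Cref{eq:do}), for every $M\in\2M_{\langle\G,P\rangle}$ we obtain
\[
P(\*s'\mid\doo(\pi);M) \;=\; \sum_{\*t} P(\*s',\*t\mid\doo(\pi);M) \;=\; \sum_{\*t} P(\*s',\*t;M) \;=\; P(\*s';M).
\]
This shows the same policy $\pi$ witnesses the imitation equality for the reduced target $\*S'$ across every model in $\2M_{\langle\G,P\rangle}$.

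Finally I would note that unique computability of $\pi$ from $P(\*o)$ is unchanged by the choice of target, because the policy itself does not depend on $\*S$ or $\*S'$; it is merely the quantity that $\pi$ is required to preserve that has shrunk. Hence $\pi\in\Pi$ is still uniquely computable from $P(\*o)$, and by \Cref{def:p-imtability} the marginal $P(\*s')$ is p-imitable w.r.t.\ $\tuple{\G,\Pi,P(\*o)}$, establishing the contrapositive form of the claim. I do not expect a genuine obstacle here: the only subtlety to double-check is that the marginalization is carried out inside the same submodel $M_\pi$ (so that no extra structural assumptions creep in), which is immediate from the definition $P(\*s\mid\doo(\pi);M)\triangleq P(\*s;M_\pi)$ together with $\*S'\subseteq\*S\subseteq\*O\subseteq\*V$.
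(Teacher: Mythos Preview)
Your proposal is correct and follows essentially the same approach as the paper: both argue that the single witnessing policy $\pi$ for $P(\*s)$ also works for any marginal $P(\*s')$ by summing out $\*S\setminus\*S'$ on both sides of $P(\*s\mid\doo(\pi);M)=P(\*s;M)$. Your version is slightly more careful in explicitly verifying the unique-computability clause of \Cref{def:p-imtability}, which the paper's proof leaves implicit.
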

We apply a subroutine \textsc{ListMinSep} in \cite{van2014constructing} to enumerate minimal surrogates in $\*O$ that d-separate $\hat{X}$ and $Y$ in the supergraph $\G \cup \Pi'$. When a minimal surrogate $\*S$ is found, \textsc{Imitate} uses the \textsc{Identify} oracle to validate if $P(\*s|\doo(\pi))$ is identifiable w.r.t. $\tuple{\G, \Pi'}$, i.e., $\tuple{\*S, \Pi'}$ form an instrument. Consider \Cref{fig1d} as an example. While $P(y|\doo(\pi))$ is not identifiable for every policy in $\Pi$ had $Y$ been observed, $\Pi$ contains an id-subspace $\{\pi: \PP_{X}\}$ w.r.t. $\tuple{\G \cup \set{Y}, \Pi, Y}$, which is associated with a minimal surrogate $\{S\}$. Applying \textsc{Identify} confirms that $\tuple{\set{S}, \{\pi: \PP_{X}\}}$ is an instrument.

At Step 5, \textsc{Imitate} solves for a policy $\pi$ in the subspace $\Pi'$ that imitates $P(\*s)$ for all instances in the hypothesis class $\2M_{\langle \G, P \rangle}$. If such a policy exists, \textsc{Imitate} returns $\pi$; otherwise, the algorithm continues. Since $\tuple{\*S, \Pi'}$ is an instrument, \Cref{lem:surrogate} implies that the learned policy $\pi$, if it exists, is ensured to imitate the expert reward $P(y)$ for any POSCM $M \in \2M_{\langle \G, P \rangle}$.
\begin{restatable}{theorem}{thmpimitation}\label{thm:p-imitation}
  Given a causal diagram $\G$, a policy space $\Pi$, and an observational distribution $P(\*o)$, if \textsc{Imitate} returns a policy $\pi \in \Pi$, $P(y)$ is p-imitable w.r.t. $\tuple{\G, \Pi, P(\*o)}$. Moreover, $\pi$ is an imitating policy for $P(y)$ w.r.t. $\tuple{\G, \Pi, P(\*o)}$.
\end{restatable}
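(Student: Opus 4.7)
The plan is to show that every successful return of \textsc{Imitate} witnesses exactly the hypotheses of \Cref{lem:surrogate}, so that p-imitability of $P(y)$ and correctness of $\pi$ as an imitating policy follow immediately. Thus the proof is essentially a verification that each of the three gatekeeping steps in the algorithm certifies one of the three ingredients needed to invoke that lemma.

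First I would trace the execution backwards from the point at which \textsc{Imitate} issues a policy $\pi$. Because a policy is only returned from inside the two nested \textbf{while}-loops, there must exist a subspace $\Pi' \subseteq \Pi$ produced by \textsc{ListIdSpace} and a set $\*S \subseteq \*O$ produced by \textsc{ListMinSep} such that the \textsc{Identify} oracle answered \textsc{Yes} on $(\G, \Pi', \*S)$, and $\pi \in \Pi'$ satisfies $P(\*s|\doo(\pi); M) = P(\*s)$ for every $M \in \2M_{\langle \G, P \rangle}$. I would then verify each of these four conditions in isolation: (i) $\pi \in \Pi' \subseteq \Pi$; (ii) $\*S$ is a surrogate w.r.t.\ $\tuple{\G, \Pi'}$, which is exactly the statement that $\*S$ d-separates $\hat{X}$ from $Y$ in $\G \cup \Pi'$, the property guaranteed by the specification of \textsc{ListMinSep}; (iii) $P(\*s|\doo(\pi))$ is identifiable w.r.t.\ $\tuple{\G, \Pi'}$, as certified by the \textsc{Identify} oracle, so $\Pi'$ is an id-subspace w.r.t.\ $\tuple{\G, \Pi, \*S}$ in the sense of \Cref{def:idsp}; and (iv) the equation $P(\*s|\doo(\pi); M) = P(\*s)$ holds uniformly over $\2M_{\langle \G, P \rangle}$, together with $\pi$ being uniquely computable from $P(\*o)$ via the identification formula from step (iii).

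From (ii) and (iii) it follows by \Cref{def:i-cond} that $\tuple{\*S, \Pi'}$ is an imitation instrument w.r.t.\ $\tuple{\G, \Pi}$; from (iv), together with $\pi \in \Pi'$, it follows by \Cref{def:p-imtability} that $P(\*s)$ is p-imitable w.r.t.\ $\tuple{\G, \Pi', P(\*o)}$ and that $\pi$ is an imitating policy for $P(\*s)$ in that triple. At this point all hypotheses of \Cref{lem:surrogate} are in place, and invoking that lemma delivers both conclusions of the theorem in a single stroke: $P(y)$ is p-imitable w.r.t.\ $\tuple{\G, \Pi, P(\*o)}$, and the very policy $\pi$ produced in step 5 is an imitating policy for $P(y)$.

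I expect no serious obstacle. The proof is essentially a bookkeeping argument that pairs each line of \Cref{imitate} with a clause of a definition or a previously-established lemma. The only mildly delicate point is condition (iii): I need to keep in mind that \textsc{Identify} is invoked on $(\G, \Pi', \*S)$ and not on the augmented graph $\G \cup \set{Y}$ used by \textsc{ListIdSpace}, so I must be explicit that the identifiability certificate required by \Cref{def:idsp} is exactly the one the oracle returns at Step 4, while the outer enumeration over $\tuple{\G \cup \set{Y}, \Pi, Y}$ supplied by \textsc{ListIdSpace} serves only as a (sound) search heuristic justified by \Cref{lem:idY} and plays no role in establishing soundness of an individual successful branch.
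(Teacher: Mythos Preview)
Your proposal is correct and follows essentially the same approach as the paper's proof: verify that Step~3 yields a surrogate $\*S$ w.r.t.\ $\tuple{\G,\Pi'}$, Step~4 certifies $\Pi'$ as an id-subspace w.r.t.\ $\tuple{\G,\Pi,\*S}$ so that $\tuple{\*S,\Pi'}$ is an instrument, Step~5 ensures $P(\*s)$ is p-imitable w.r.t.\ $\tuple{\G,\Pi',P(\*o)}$, and then invoke \Cref{lem:surrogate}. Your additional remark that \textsc{ListIdSpace} on $\G\cup\set{Y}$ is merely a search heuristic (justified by \Cref{lem:idY}) and plays no role in soundness is a helpful clarification that the paper leaves implicit.
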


\subsection{Optimizing Imitating Policies}
We now introduce optimization procedures to solve for an imitating policy at Step 5 of \textsc{Imitate} algorithm. Since the pair $\tuple{\*S, \Pi'}$ forms a valid instrument (ensured by Step 4), the interventional distribution $P(\*s|\doo(\pi); M)$ remains invariant among all models in $\2M_{\langle \G \rangle}$, i.e., $P(\*s|\doo(\pi))$ is identifiable w.r.t. $\tuple{\G, \Pi}$. We could thus express $P(\*s|\doo(\pi); M)$ for any $M \in  \2M_{\tuple{\G, P}}$ as a function of the observational distribution $P(\*o)$; for simplicity, we write $P(\*s|\doo(\pi)) = P(\*s|\doo(\pi); M)$. The imitating policy $\pi$ is obtainable by solving the equation $P(\*s|\doo(\pi)) = P(\*s)$. We could derive a closed-form formula for $P(\*s|\doo(\pi))$ following standard causal identification algorithms in \cite{tian:pea03-r290,shpitser:pea06a,correa2020calculus}. As an example, consider again the setting of \Cref{fig1c} with binary $X, W, S, Z$; parameters of $P(x, w, s,z)$ could be summarized using an $8$-entry probability table. The imitating policy $\pi(x)$ is thus a solution of a series of linear equations $\sum_{x}\pi(x) P(s|\doo(x)) = P(s)$ and $\sum_x \pi(x) = 1$, given by:
\begin{align*}
  &\pi(x_0) = \frac{P(s_1) - P(s_1|\doo(x_0))}{P(s_1|\doo(x_1)) - P(s_1|\doo(x_0))}, &&&\pi(x_1) = \frac{P(s_1 | \doo(x_1)) - P(s_1)}{P(s_1|\doo(x_1)) - P(s_1|\doo(x_0))}.
\end{align*}
Among quantities in the above equation, $x_i, s_j$ represent assignments $X = i, S = j$ for $i, j \in \{0, 1\}$. The interventional distribution $P(s|\doo(x))$ could be identified from $P(x, w, s, z)$ using the front-door adjustment formula $P(s|\doo(x)) = \sum_{w} P(w|x) \sum_{x'} P(s|x', w)P(x')$ \cite[Thm.~3.3.4]{pearl:2k}.

However, evaluating interventional probabilities $P(\*s|\doo(\pi))$ from the observational distribution $P(\*o)$ could be computational challenging if some variables in $\*O$ are high-dimensional (e.g., $W$ in \Cref{fig1d}). Properties of the imitation instrument $\tuple{\*S, \Pi'}$ suggest a practical approach to address this issue. Since $P(\*s|\doo(\pi))$ is identifiable w.r.t. $\tuple{\G, \Pi'}$, by \Cref{def:id}, it remains invariant over the models in the hypothesis class $\2M_{\langle \G \rangle}$. This means that we could compute interventional probabilities $P(\*s | \doo(\pi); \tilde{M})$ in an arbitrary model $\tilde{M} \in \2M_{\langle \G, P \rangle}$; such an evaluation will always coincide with the actual, true causal effect $P(\*s|\doo(\pi))$ in the underlying model. This observation allows one to obtain an imitating policy through the direct parametrization of POSCMs \cite{louizosCausalEffectInference2017}. Let $\2N_{\tuple{\G}}$ be a parametrized subfamily of POSCMs in $\2M_{\langle \G \rangle}$. We could obtain an POSCM $\tilde{M} \in \2N_{\tuple{\G}}$ such that its observational distribution $P(\*o;\tilde{M}) = P(\*o)$; an imitating policy $\pi$ is then computed in the parametrized model $\tilde{M}$. \Cref{corol:parametric} shows that such a policy $\pi$ is an imitating policy for the expert's reward $P(y)$.
\begin{restatable}{corollary}{corolparametric}\label{corol:parametric}
  Given a causal diagram $\G$, a policy space $\Pi$, and an observational distribution $P(\*o)$, let $\tuple{\*S, \Pi'}$ be an instrument w.r.t. $\tuple{\G, \Pi}$. If there exists a POSCM $M \in \2M_{\tuple{\G, P}}$ and a policy $\pi \in \Pi'$ such that $P(\*s|\doo(\pi); M) = P(\*s)$, then $P(y)$ is p-imitable w.r.t. $\tuple{\G, \Pi, P(\*o)}$. Moreover, $\pi$ is an imitating policy for $P(y)$ w.r.t. $\tuple{\G, \Pi, P(\*o)}$. 
\end{restatable}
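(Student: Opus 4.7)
The plan is to leverage the identifiability guarantee packaged in the notion of instrument to show that the equation $P(\*s \mid \doo(\pi); M) = P(\*s)$ is invariant across the whole restricted class $\2M_{\langle \G, P \rangle}$, and then to lift p-imitability of the surrogate distribution $P(\*s)$ to p-imitability of $P(y)$ using \Cref{lem:surrogate}.

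First, since $\tuple{\*S, \Pi'}$ is an instrument, \Cref{def:i-cond} tells us that $\Pi'$ is an id-subspace w.r.t.\ $\tuple{\G, \Pi, \*S}$. Unwinding \Cref{def:idsp} and \Cref{def:id}, this means that for every $\pi \in \Pi'$ and every $M \in \2M_{\langle \G \rangle}$, the interventional distribution $P(\*s \mid \doo(\pi); M)$ is a function only of the observational distribution $P(\*o; M)$ and of $\pi$. In particular, once we restrict attention to the subfamily $\2M_{\langle \G, P \rangle}$, where every candidate model shares the same $P(\*o)$, the quantity $P(\*s \mid \doo(\pi); M)$ collapses to a single value that does not depend on which $M \in \2M_{\langle \G, P \rangle}$ one picks.

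Second, let $\tilde{M} \in \2M_{\langle \G, P \rangle}$ and $\pi \in \Pi'$ be the pair supplied by the hypothesis, with $P(\*s \mid \doo(\pi); \tilde{M}) = P(\*s)$. By the invariance just established, this same identity extends to every $M \in \2M_{\langle \G, P \rangle}$, i.e., $P(\*s \mid \doo(\pi); M) = P(\*s; M) = P(\*s)$ uniformly. Moreover, because $\pi$ is produced by solving an equation whose left-hand side is evaluated inside a parametric model $\tilde{M}$ that itself is determined by $P(\*o)$ (and the parametrization scheme), $\pi$ is uniquely computable from $P(\*o)$ once a deterministic solver is fixed. This matches \Cref{def:p-imtability} and therefore shows that $P(\*s)$ is p-imitable w.r.t.\ $\tuple{\G, \Pi', P(\*o)}$, with $\pi$ as an imitating policy.

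Finally, I would invoke \Cref{lem:surrogate}: since $\tuple{\*S, \Pi'}$ is an instrument and $P(\*s)$ is p-imitable in the subspace $\Pi'$, the lemma immediately yields that $P(y)$ is p-imitable w.r.t.\ $\tuple{\G, \Pi, P(\*o)}$ and that the same $\pi$ works as an imitating policy for $P(y)$. The main obstacle, as I see it, sits in the first step: one must carefully distinguish the phrase \emph{uniquely computable from} $P(\*o; M)$ \emph{and} $\pi$, which is a statement about the full class $\2M_{\langle \G \rangle}$, from the \emph{invariance across} $\2M_{\langle \G, P \rangle}$ that the definition of p-imitability actually demands. The bridge between the two is clean only because the observational distribution is held fixed across the restricted class, so after that observation the rest of the argument essentially collapses to a citation of the earlier lemmas.
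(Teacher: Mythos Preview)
Your proposal is correct and matches the paper's own reasoning. The paper does not give a separate formal proof of this corollary in the appendix; the argument is laid out in the paragraph immediately preceding the statement, and it is exactly the two-step argument you give: identifiability of $P(\*s\mid\doo(\pi))$ (from the instrument definition) forces invariance across $\2M_{\langle \G,P\rangle}$, so the equality $P(\*s\mid\doo(\pi);M)=P(\*s)$ propagates from one model to all, and then \Cref{lem:surrogate} lifts p-imitability of $P(\*s)$ to $P(y)$.
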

In practical experiments, we consider a parametrized family of POSCMs $\2N_{\tuple{\G}}$ where functions associated with each observed variable in $\*O$ are parametrized by a family of neural networks, similar to \cite{louizosCausalEffectInference2017}. Using the computational framework of \emph{Generative Adversarial Networks} (GANs) \cite{goodfellow2014generative,nowozinFGANTrainingGenerative2016}, we obtain a model $\tilde{M} \in \2N_{\tuple{\G}}$ satisfying the observational constraints $P(\*o;\tilde{M}) = P(\*o)$. The imitating policy is trained through explicit interventions in the learned model $\tilde{M}$; a different GAN is then deployed to optimize the policy $\pi$ so that it imitates the observed trajectories drawn from $P(\*s)$.

\section{Experiments} \label{sec4}

\begin{figure}[t]
\hfill%
\begin{subfigure}{0.28\linewidth}\centering
\begin{tikzpicture}
      \def\outerr{3}
      \def\innerr{2.7}
      \node[vertex] (X) at (0, 0) {$X$};
      \node[vertex] (Z) at (1.5, 2) {$Z$};
      \node[vertex,opacity=0] (H1) at (0, -0.5) {$H$};
      \node[vertex,opacity=0] (H2) at (0, 2.1) {$H$};
      \node[uvertex] (L) at (2, 0.75) {$L$};
      \node[uvertex] (Y) at (3, 0) {$Y$};
      \node[vertex] (W) at (1, 0.75) {$W$};
      
      \draw[dir] (Z) -- (L);
      \draw[dir] (L) to [bend left=45] (X);
      \draw[dir] (Z) -- (Y);
      \draw[dir] (X) -- (Y);
      \draw[dir] (L) -- (W);
      \draw[dir] (Z) to (X);
  
      \draw[bidir] (Z) to [bend left=45] (Y);
      \draw[bidir] (W) to [bend right=45] (Y);
      \draw[bidir] (Z) to [bend left=0] (W);

      \begin{pgfonlayer}{back}
        \draw[fill=betterred!25,draw=none] \convexpath{Z, W, X}{\outerr mm};
        \node[circle,fill=betterred!65,draw=none,minimum size=2*\innerr mm] at (X) {};
        \node[circle,fill=betterblue!25,draw=none,minimum size=2*\outerr mm] at (Y) {};
        \node[circle,fill=betterblue!65,draw=none,minimum size=2*\innerr mm] at (Y) {};
    \end{pgfonlayer}
    \end{tikzpicture}
  \caption{Highway Driving}
  \label{fig3a}
  \end{subfigure}\hfill
\begin{subfigure}{0.325\linewidth}\centering
    \includegraphics[bb = 0 0 210 140, width=\linewidth]{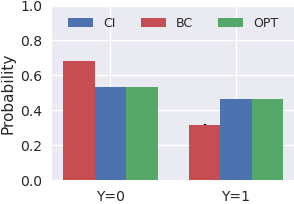}
\caption{Highway Driving}
\label{fig3b}
\end{subfigure}\hfill
\begin{subfigure}{0.325\linewidth}\centering
  \includegraphics[bb = 0 0 210 140, width=\linewidth]{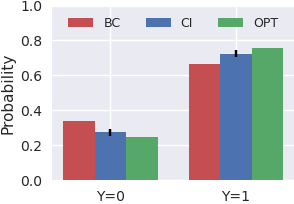} 
\caption{\texttt{MNIST} Digits}
\label{fig3c}
\end{subfigure}\hfill\null
\caption{(\subref{fig3a}) Causal diagram for highway driving example where a left-side car exists; (b,c) $P(y|\doo(\pi))$ induced by the causal imitation method (\textit{ci}) and the naive behavior cloning (\textit{bc}) compared with the actual distribution $P(y)$ over the expert's reward (\textit{opt}).}\label{fig3}
\end{figure}

We demonstrate our algorithms on several synthetic datasets, including \texttt{highD} \cite{highDdataset} consisting of natural trajectories of human driven vehicles, and on \texttt{MNIST} digits. In all experiments, we test our causal imitation method (\textit{ci}): we apply \Cref{thm:backdoor} when there exists an $\pi$-backdoor admissible set; otherwise, \Cref{imitate} is used to leverage the observational distribution. As a baseline, we also include na\"ive behavior cloning (\textit{bc}) that mimics the observed conditional distribution $P(x | \pa(\Pi))$, as well as the actual reward distribution generated by an expert (\textit{opt}). We found that our algorithms consistently imitate distributions over the expert's reward in imitable (p-imitable) cases; and p-imitable instances commonly exist. We refer readers to \cite[Appendix D]{appendix} for more experiments, details, and analysis.
    
\paragraph{Highway Driving} We consider a modified example of the drone recordings of human-driven cars in \Cref{sec1} where the driver's braking action $W$ of the left-side car is also observed. \Cref{fig3a} shows the causal diagram of this environment; $Z$ represent the velocity of the front-car; action $X$ represents the velocity of the driving car; $W$ and the reward signal $Y$ are both affected by an unobserved confounder $U$, representing the weather condition. In \Cref{fig3a}, $\set{Z}$ is $\pi$-backdoor admissible while $\set{Z, W}$ is not due to active path $X \leftarrow L \rightarrow W \leftrightarrow Y$. We obtain policies for the causal and naive imitators training two separate GANs. Distributions $P(y|\doo(\pi))$ induced by all algorithms are reported in \Cref{fig3b}. We also measure the L1 distance between $P(y|\doo(\pi))$ and the expert's reward $P(y)$. We find that the causal approach (\textit{ci}), using input set $\set{Z}$, successfully imitates $P(y)$ (L1 $= 0.0018$). As expected, the naive approach (\textit{bc}) utilizing all covariates $\set{Z, W}$ is unable to imitate the expert (L1 = $0.2937$).

\paragraph{MNIST Digits} We consider an instance of \Cref{fig1c} where $X, S, Y$ are binary variables; binary values of $W$ are replaced with corresponding images of \texttt{MNIST} digits (pictures of 1 or 0), determined based on the action $X$. For the causal imitator (\textit{ci}), we learn a POSCM $\hat{M}$ such that $P(x,w,s;\hat{M}) = P(x, w, s)$. To obtain $\hat{M}$, we train a GAN to imitate the observational distribution $P(x, w, s)$, with a separate generator for each $X, W, S$. We then train a separate discriminator measuring the distance between observed trajectories $P(s)$ and interventional distribution $P(s|\doo(\pi); \hat{M})$ over the surrogate $\set{S}$. The imitating policy is obtained by minimizing such a distance. Distributions $P(y|\doo(\pi))$ induced by all algorithms are reported in \Cref{fig3c}. We find that the causal approach (\textit{ci}) successfully imitates $P(y)$ (L1 $= 0.0634$). As expected, the naive approach (\textit{bc}) mimicking distribution $P(x)$ is unable to imitate the expert (L1 = $0.1900$).
\section{Conclusion}
We investigate the imitation learning in the semantics of structural causal models. The goal is to find an imitating policy that mimics the expert behaviors from combinations of demonstration data and qualitative knowledge about the data-generating process represented as a causal diagram. We provide a graphical criterion that is complete (i.e., sufficient and necessary) for determining the feasibility of learning an imitating policy that mimics the expert's performance. We also study a data-dependent notion of imitability depending on the observational distribution. An efficient algorithm is introduced which finds an imitating policy, by exploiting quantitative knowledge contained in the observational data and the presence of surrogate endpoints. Finally, we propose a practical procedure for estimating such an imitating policy from observed trajectories of the expert's demonstrations.

\section*{Broader Impact}
This paper investigates the theoretical framework of learning a policy that imitates the distribution over a primary outcome from natural trajectories of an expert demonstrator, even when the primary outcome itself is unobserved and input covariates used by the expert determining original values of the action are unknown. Since in practice, the actual reward is often unspecified and the learner and the demonstrator rarely observe the environment in the same fashion, our methods are likely to increase the progress of automated decision systems. Such systems may be applicable to various fields, including the development of autonomous vehicle, industrial automation and the management of chronic disease. These applications may have a broad spectrum of societal implications. The adoption of autonomous driving and industrial automation systems could save cost and reduce risks such as occupational injuries; while it could also create unemployment. Treatment recommendation in the clinical decision support system could certainly alleviate the stress on the healthcare workers. However, this also raise questions concerning with the accountability in case of medical malpractice; collection of private personal information could also make the hospital database valuable targets for malicious hackers. Overall, we would encourage research to understand the risks arising from automated decision systems and mitigations for its negative impact.

Recently, there is a growing amount of dataset of natural vehicle trajectories like \texttt{highD} \cite{highDdataset} being licensed for commercial use. An immediate positive impact of this work is that we discuss potential risk of training decision-making policy from the observational data due to the presence of unobserved confounding, as shown in \Cref{sec1,sec4}. More broadly, since our method is based on the semantics of structural causal models \cite[Ch.~7]{pearl:2k}, its adoption could cultivate machine learning practitioners with proper training in causal reasoning. A favorable characteristic of causal inference methods is that they are inherently robust: for example, the definition of imitability \Cref{def:imitability} requires the imitating policy to perfectly mimics the expert performance in \emph{any} model compatible to the causal diagram. Automated decision systems using the causal inference methods prioritize the safety and robustness in decision-making, which is increasingly essential since the use of black-box AI systems is prevalent and our understandings of their potential implications are still limited.

\section*{Acknowledge}
The authors were partially supported by grants from NSF IIS-1704352 and IIS-1750807 (CAREER).

\bibliographystyle{abbrv}

\clearpage
\appendix
\section{Proofs} \label{appendix:a}
\corolnonid*
\begin{proof}
Let $Y \in \*Y \cap \*L$. For any SCM $M_1$ that induces $\G$, we could obtain an SCM $M_2$ by replacing $f_Y$ and $P(u_Y)$ associated with $Y$. Since there is no restriction on the parametrization of $f_Y$ and $P(u_Y)$, we could always ensure $P(y| \doo(\pi); M_1) \neq P(y| \doo(\pi); M_2)$. For example, in both $M_1, M_2$, let $Y \gets U_Y$; we define $P(U_Y = 0;M_1) = 0.1$ and $P(U_Y = 0;M_2) = 0.9$. It is immediate to see that $P(\*y| \doo(\pi); M_1) \neq P(\*y| \doo(\pi); M_2)$, i.e., $P(\*y| \doo(\pi))$ is not identifiable.
\end{proof}

\lemnonimitable*
\begin{proof}
  The lack of existence of a shared imitating policy between $M_1, M_2$ eliminates the possibility of the existence of a function from $P(\*o)$ to a policy $\pi$ that imitates $P(\*y)$ in any POSCM compatible with the causal diagram $\G$.
\end{proof}

\thmdp*
\begin{proof}
  Since there is not bi-directed arrow pointing into $X$, it is variables that $\pa(X)_{\G}$ is $\pi$-backdoor admissible relative to $\tuple{X, Y}$ in $\G$. By Rule 2 of do-calculus, we have 
  \[
    P(y) = \sum_{x, \pa(X)_{\G}} P(\pa(X)_{\G}) P(x | \pa(X)_{\G}) P(y|\doo(x), \pa(X)_{\G}).  
  \]
  Since $\pa(X)_{\G} \subseteq \Pa(\Pi)$, the conditional distribution $P(x | \pa(X)_{\G})$ can be represented as a policy in $\Pi$. Let $\pi(x|\Pa(\Pi)) = \pi(x|\pa(X)_{\G}) = P(x | \pa(X)_{\G})$. We must have
  \[
    P(y) = \sum_{x, \pa(X)_{\G}} P(\pa(X)_{\G}) \pi(x | \pa(X)_{\G}) P(y|\doo(x), \pa(X)_{\G}) = P(y|\doo(\pi)). \qedhere
  \]
\end{proof}

\begin{lemma}\label{lem:imitable}
  Given a causal diagram $\G$ and a policy space $\Pi$, let $\*Z  = \An(Y)_{\G} \cap \Pa(\Pi)$. If $P(y)$ is imitable relative to $\tuple{\G, \Pi}$, then $\left (Y \ci X | \*Z \right)_{\G_{\underline{X}}}$.
\end{lemma}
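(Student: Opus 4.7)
The plan is to prove the contrapositive: assuming $(Y \not\ci X \mid \*Z)_{\G_{\underline{X}}}$, we construct a pair of POSCMs that witness non-imitability via \Cref{lem:non-imitable}. The choice $\*Z = \An(Y)_{\G} \cap \Pa(\Pi)$ is not arbitrary; it is the ``canonical'' adjustment set, and a first preparatory step is to argue that if $\*Z$ fails to $d$-separate $X$ and $Y$ in $\G_{\underline{X}}$, then \emph{no} subset $\*Z' \subseteq \Pa(\Pi)$ does. The intuition is that adding variables in $\Pa(\Pi)\setminus\An(Y)_{\G}$ to $\*Z$ can only open new colliders or be irrelevant to paths ending at $Y$, never close an already active path; conversely, removing ancestor-variables from $\*Z$ cannot help either. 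This reduction means it suffices to exhibit a single active path and exploit it.

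Next, I would pick a shortest active path $p$ from $X$ to $Y$ in $\G_{\underline{X}}$ given $\*Z$. Because arrows out of $X$ have been cut, the edge incident to $X$ along $p$ points into $X$, so $p$ is a genuine ``backdoor'' path. By the minimality of $p$ and the $d$-separation failure, every collider on $p$ has a descendant in $\*Z$, and every non-collider lies outside $\*Z$; moreover the terminal arrow into $Y$ exists. This structural handle is what lets us encode an uncorrectable confounder.

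The core step is then a parametric construction: define two POSCMs $M_1,M_2 \in \2M_{\tuple{\G}}$ that coincide on $P(\*o)$ yet force opposite shifts in $P(y \mid \doo(\pi))$ for every $\pi \in \Pi$. The standard device is to put all endogenous variables in $\{0,1\}$ and let the exogenous noise feeding $X$ and the exogenous noise feeding the $Y$-end of the path $p$ be either independent ($M_1$) or a hidden XOR-coupled pair ($M_2$), adjusting the remaining mechanisms so that the marginals over $\*O$ are identical. Because $\Pa(\Pi)$ does not contain a set that blocks $p$, the learner's conditioning information is insufficient to distinguish the two models through any choice of $\pi$; in particular, any $\pi$ that imitates $P(y;M_1)$ fails to imitate $P(y;M_2)$ and vice versa. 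Applying \Cref{lem:non-imitable} then yields non-imitability.

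The main obstacle is the third step: designing $M_1,M_2$ so that observational equivalence is preserved \emph{simultaneously} with a divergent response to every policy in $\Pi$. The difficulty grows with the complexity of $p$ (colliders with descendants in $\*Z$, and intermediate bidirected arcs), so the construction must be done inductively along the path, propagating the XOR ``perturbation'' from $X$ to $Y$ while using the remaining free parameters of each non-path mechanism to absorb any induced change in $P(\*o)$. Standard faithfulness-style arguments and the minimality of $p$ guarantee that enough degrees of freedom remain for this to succeed.
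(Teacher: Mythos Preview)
Your contrapositive strategy is right, but two choices make the route harder than necessary and leave the key step unresolved. First, you invoke \Cref{lem:non-imitable} in its two-model form: find $M_1,M_2$ with identical $P(\*o)$ but no common imitating policy. The paper instead uses the single-model corollary stated immediately after \Cref{lem:non-imitable}: one POSCM $M$ with $P(y\mid\doo(\pi);M)\neq P(y;M)$ for every $\pi\in\Pi$ already suffices. This drops the observational-equivalence constraint, which is precisely the ``main obstacle'' you flag but do not resolve. Second, you work directly with a shortest active path in $\G$, which may traverse colliders and latent nodes. The paper first applies the latent projection to reduce to a semi-Markovian diagram with $\Pa(\Pi)=\*O\setminus\{X,Y\}$; in that reduced diagram the $d$-connection hypothesis yields a purely \emph{bi-directed} path $X\leftrightarrow Z_1\leftrightarrow\cdots\leftrightarrow Z_n\leftrightarrow Y$ with every $Z_i\in\*Z$, and \Cref{lem:proj2} then lifts the counterexample back to $\G$. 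Without this reduction your inductive XOR-propagation would have to handle activated colliders and latent intermediaries, and the sketch does not say how.

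The missing idea is the concrete construction. On the bi-directed path the paper extracts a subgraph that is a tree rooted at $Y$ with leaves $X,Z_1,\dots,Z_n$, gives every exogenous confounder on the path a fair-coin distribution, and sets each mechanism to the parity of its parents. Then $P(Y=0;M)=1$ (each $U_i$ reaches $Y$ along exactly two directed paths and cancels), yet the confounder $U_n$ nearest $Y$ remains independent of $\*Z^*=\{Z_1,\dots,Z_n\}$ (each $Z_i$ is a parity of at least two fair coins), so $P(Y=0\mid\doo(x),\*z^*;M)=1/2$ and hence $P(Y=0\mid\doo(\pi);M)=1/2$ for every $\pi$. Remaining covariates in $\Pa(\Pi)\setminus\*Z^*$ are set to independent noise or deterministic functions of $\*Z^*$, so they add no information. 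Your closing appeal to ``standard faithfulness-style arguments'' does not supply any of this; the parity-tree construction on the projected bi-directed path is the substantive content of the proof.
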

\begin{proof}
  We consider first a simplified causal diagram $\1H$ where all endogenous variables are observed, i.e., $\*V = \*O$; and $\Pa(\Pi) = \*O \setminus \set{X, Y}$. In this diagram $\G$, $\left (Y \not \ci X | \*Z \right)_{\G_{\underline{X}}}$ implies that there exists a bi-directed path $l$ of the form $X \leftrightarrow Z_1 \leftrightarrow Z_2 \leftrightarrow \cdots \leftrightarrow Z_n \leftrightarrow Y$ such that $Z_i \in \*Z$ for any $i = 1, \dots, n$. We denote by $\*Z^* = \set{Z_1, \dots, Z_n}$. Recall that $X \in \An(Y)_{\1H}$. We could thus obtain a subgraph $\1H'$ of $\1H$ that satisfies the following condition:
  \begin{itemize}
    \item $\1H'$ contains the bi-directed path $l$.
    \item All nodes in $\1H'$ are descendants of $\*Z^* \cup \set{X}$.
    \item $Y$ is a descendant for all nodes in $\1H'$.
    \item Every endogenous node $V$ in $\1H'$ has at most one child.
    \item All bi-directed arrows in $\1H'$ are contained in $l$.
  \end{itemize}
  A mental image for depicting $\1H'$ is to think of a tree rooted in node $Y$; $X, Z_1, \dots, Z_n$ are leaf nodes; nodes $X, Z_1, \dots, Z_2, Y$ are connected by the bi-directed path $l$; $\1H'$ contain no bi-directed arrow except path $l$. We now construct a POSCM $M'$ compatible with $\G'$. More specifically, values of each exogenous confounder $U_i$ residing on $l$ are drawn uniformly over a binary domain $\set{0, 1}$. For each endogenous variable $V_i$ in $\1H'$, its values is equal to the parity sum of its parents in $\1H'$, i.e., $V_i \gets \oplus_{V_j \in \pa(V_i)_{\1H'}} V_j$. By construction, in the subgraph $\1H'$, each exogenous $U_i$ has exactly two directed paths going to $Y$. This means that in the constructed model $M'$, observed values of $Y$ is always equal to $0$, i.e., the reward distribution $P(Y = 0; M') = 1$.

  Consider a policy space $\Pi'$ of the form $\set{\pi: \D_{\*Z^*} \mapsto \PP_{X}}$. Let $U_n$ denote the exogenous confounder residing on $l$ that is closest to $Y$, i.e., $X \leftrightarrow Z_1 \leftrightarrow Z_2 \leftrightarrow \cdots \leftrightarrow Z_n \leftarrow U_n \rightarrow Y$. By the definition of $M'$, values of each variable $Z_i \in \*Z^*$ is the parity sum of at least two exogenous variables $U_j, U_k$. Since values of each $U_i$ are drawn uniformly over $\set{0, 1}$, we must have $P(u_n, \*z^*) = P(u_n)P(\*z^*)$, or equivalently, $P(u_n | \*z^*) = P(u_n)$. By definition, we could obtain from $l$ a directed path going from $U_n$ to $Y$ that is not intercepted by $\*Z^*$. That is, given any value $\*Z^* = \*z^*$, values of $Y$ is decided by a parity function taking $U_n$ as an input. Since $P(u_n | \*z^*) = P(u_n)$ and $U_n$ is drawn uniformly over $\set{0, 1}$, it is verifiable that in $M'$, given any $\*Z^* = \*z^*$, the conditional distribution $P(Y = 0|\doo(x), \*z^*; M') = 0.5$. This means that for any policy $\pi \in \Pi'$, the interventional distribution $P(Y = 0 | \doo(\pi); M') = 0.5$, which is far from the observational distribution $P(Y = 0; M') = 1$. That is, $P(y)$ is not imitable w.r.t. $\tuple{\1H', \Pi'}$.

  We will next show the non-imitability of $P(y)$ w.r.t. $\tuple{\1H, \Pi}$. For any node $V$ that is not included in $\1H'$, let its values be decided by an independent noise $U_{V}$ drawn uniformly over $\set{0, 1}$. We denote this extended POSCM by $M$. Obviously, $M$ is compatible with the causal diagram $\1H$. In this model, for any covariate $V \in \Pa(\Pi) \setminus \*Z^*$, it is either (1) a random variable disconnected to any other endogenous variable in $M$; or (2) decided by a function taking $\*Z^*$ as input. That is, $V \in \Pa(\Pi) \setminus \*Z^*$ contains no value of information with regard to the reward $Y$ when intervening on action $X$. By \cite[Ch.~23.6]{koller2009probabilistic}, this means that for any policy $\pi \in \Pi$, there exists a policy $\pi'$ in the subspace $\Pi'$ such that $P(y|\doo(\pi);M) = P(y|\doo(\pi');M)$. Recall that there exists no policy $\pi$ in $\Pi'$ that could ensure $P(y|\doo(\pi); M') = P(y; M')$ in POSCM $M'$. By definition of $M$ and $M'$, we must have $P(y;M) = P(y;M')$ and for any policy $\pi \in \Pi$, $P(y|\doo(\pi); M) = P(y|\doo(\pi); M')$. It is immediate to see that there exists no policy $\pi \in \Pi$ that could induce $P(y|\doo(\pi); M) = P(y; M)$ in the extended POSCM $M$, i.e., $P(y)$ is not imitable w.r.t. $\tuple{\1H, \Pi}$.

  We now consider a general causal diagram $\G$ where arbitrary latent endogenous variables $\*L$ exist and $\Pa(\Pi) \subset \*O \setminus \set{X, Y}$. We will apply the latent projection \cite[Def.~5]{tian:02} to transforms $\G$ into a simplified causal diagram $\1H$ discussed above. More specifically, we construct a causal diagram $\G'$ from $\G$ by marking each $V \in \*V \setminus (Pa(\Pi) \cup \set{X, Y})$ latent. We then apply \Cref{project} and obtain a simplified diagram $\1H = \textsc{Project}(\G')$ where $\*V = \*O = \Pa(\Pi) \cup \set{X, Y}$. Since \textsc{Project} preserves topological relationships among observed nodes \cite[Lem.~5]{tian:02}, $(Y \not \ci X | \*Z)_{\G_{\underline{X}}}$ implies $(Y \not \ci X | \*Z)_{\1H_{\underline{X}}}$. Following our previous argument, there exists a POSCM $M'$ associated with $\1H$ such that for any policy $\pi \in \Pi$, $P(y|\doo(\pi); M') \neq P(y; M')$. By \Cref{lem:proj2}, we could construct a POSCM $M$ associated with $\G$ such that for any $\pi \in \Pi$, $P(y|\doo(\pi); M) = P(y|\doo(\pi); M')$ and $P(y; M) = P(y; M')$. It follows immediately that for any policy $\pi \in \Pi$, $P(y|\doo(\pi); M) \neq P(y; M)$, i.e., $P(y)$ is not imitable w.r.t. $\tuple{\G, \Pi}$.
\end{proof}

\thmbackdoor*
\begin{proof}
  We first prove the the ``if'' direction. Given an $\pi$-backdoor admissible set $\*Z$ relative to $\tuple{X, Y}$ in $\G$. By Rule 2 of do-calculus, we have,
  \[
    P(y) = \sum_{x, \*z} P(\*z) P(x | \*z) P(y|\doo(x), \*z).  
  \]
  Since $\*Z \subseteq \Pa(\Pi)$, the conditional distribution $P(x | \*z)$ can be represented as a policy in $\Pi$. Let $\pi(x|\Pa(\Pi)) = \pi(x|\*z) = P(x | \*z)$. We must have
  \[
    P(y) = \sum_{x, \*z} P(\*z) \pi(x | \*z) P(y|\doo(x), \*z) = P(y|\doo(\pi)).
  \]
  We now consider the ``only if'' direction. Suppose there exists no $\pi$-backdoor admissible set $\*Z$ relative to $\tuple{X, Y}$ in $\G$. We must have that $\*Z = \An(Y)_{\G} \cap \Pa(\Pi)$ is not $\pi$-backdoor admissible, i.e., the independent relationship $\left (Y \ci X | \*Z \right)_{\G_{\underline{X}}}$ does not hold. It follows immediately from \Cref{lem:imitable} that $P(y)$ is not imitable relative to $\tuple{\G, \Pi}$.
\end{proof}

\lemsurrogate*
\begin{proof}
  Since $\*S$ is a surrogate relative to $\tuple{\G, \Pi'}$, by definition, we have $(Y \ci \hat{X} | \*S)_{\G \cup \Pi'}$. For any causal diagram $\G$ and policy space $\Pi$, let $\G_{\Pi}$ denote a manipulated diagram obtained from $\G$ by adding arrows from nodes in $\Pa(\Pi)$ to $X$ in the subgraph $\G_{\overline{X}}$. By definition, it is obvious that  $\G \cup \Pi'$ is a supergraph containing both $\G$ and $\G_{\Pi'}$. By definition of d-separation, we must have $(Y \ci \hat{X} | \*S)_{\G}$ and $(Y \ci \hat{X} | \*S)_{\G_{\Pi'}}$. The basic operations of distribution marginalization implies, for any POSCM $M \in \2M_{\langle \G, P \rangle}$, 
\begin{align*}
    P(y| \doo(\pi); M) &= \sum_{\*s} P(y|\*s, \doo(\pi); M)P(\*s | \doo(\pi); M)\\
    &=\sum_{\*s} P(y|\*s, \doo(x); M)P(\*s| \doo(\pi); M)\\
    &=\sum_{\*s} P(y|\*s; M)P(\*s|\doo(\pi); M)
\end{align*}
The last two steps hold since $(Y \ci \hat{X} | \*S)_{\G_{\Pi'}}$ and $(Y \ci \hat{X} | \*S)_{\G}$. Fix an arbitrary POSCM $M \in \2M_{\langle \G, P \rangle}$. Suppose there exists an imitating policy $\pi \in \Pi'$ in $M$ such that $P(\*s|\doo(\pi);M) = P(\*s;M)$. We must have
\begin{align}
    P(y|\doo(\pi);M) = \sum_{\*s} P(y|\*s;M)P(\*s|\doo(\pi); M) = \sum_{\*s} P(y|\*s;M)P(\*s;M)= P(y;M). \label{eq:surrogate1}
\end{align}
Since $\tuple{\G, \Pi'}$ is an instrument w.r.t. $\tuple{\G, \Pi}$, $P(\*s|\doo(\pi))$ is identifiable w.r.t. $\tuple{\G, \Pi'}$. This means that $P(\*s|\doo(\pi);M)$ is uniquely computable from $P(\*o;M)$ in any POSCM $M \in \2M_{\langle \G \rangle}$. That is, for any POSCM $M \in \2M_{\langle \G, P \rangle}$ where its observational distribution $P(\*o;M) = P(\*o)$, the interventional distribution $P(\*s|\doo(\pi);M)$ for any policy $\pi \in \Pi'$ remains as an invariant. This implies that the derivation in \Cref{eq:surrogate1} is applicable for any POSCM $M \in 
\2M_{\langle \G, P \rangle}$, i.e., $P(y)$ is p-imitable w.r.t. $\tuple{\G, \Pi, \theta}$.
\end{proof}

\lemidY*
\begin{proof}
  Let $\*S$ be a surrogate w.r.t. $\tuple{\G, \Pi'}$. Following the proof of \Cref{lem:surrogate},
  \begin{align*}
    P(y| \doo(\pi)) &= \sum_{\*s} P(y|\*s, \doo(\pi))P(\*s | \doo(\pi))\\
    &=\sum_{\*s} P(y|\*s, \doo(x))P(\*s| \doo(\pi))\\
    &=\sum_{\*s} P(y|\*s)P(\*s|\doo(\pi)).
  \end{align*}
  Suppose $\tuple{\*S, \Pi'}$ form an instrument w.r.t. $\tuple{\G, \Pi}$, i.e., $P(\*s|\doo(\pi))$ is identifiable w.r.t. $\tuple{\G, \Pi'}$. The above equation implies that $P(y|\doo(\pi))$ can be uniquely determined from $P(\*o, y)$ in any POSCM that induces $\G$. That is, $P(y|\doo(\pi))$ is identifiable w.r.t. $\tuple{\G \cup \set{Y}, \Pi}$, which completes the proof.
\end{proof}

\lemminimal*
\begin{proof}
  For any POSCM $M$ in $\2M_{\langle \G, P \rangle}$, if there exists a policy $\pi \in \Pi$ such that $P(\*s|\doo(\pi);M)=P(\*s;M)$, we must have for any $\*S' \subseteq \*S$, 
  \[
    P(\*s'|\doo(\pi);M)= \sum_{\*s \setminus \*s'} P(\*s|\doo(\pi);M) = \sum_{\*s \setminus \*s'} P(\*s;M) = P(\*s;M).
  \]
  which completes the proof.
\end{proof}

\thmpimitation*
\begin{proof}
  For a policy subspace $\Pi'$, Step 3 outputs a surrogate $\*S$ w.r.t. $\tuple{\G, \Pi'}$. Step 4 ensures that $\Pi'$ is an id-subspace w.r.t. $\tuple{\G, \Pi, \*S}$. That is, $\tuple{\*S, \Pi'}$ forms an instrument w.r.t. $\tuple{\G, \Pi}$. Since \textsc{Imitate} only outputs a policy $\pi$ when $P(\*s)$ is p-imitable w.r.t. $\tuple{\G, \Pi', P(\*o)}$, the statement follows from \Cref{lem:surrogate}.
\end{proof}
\clearpage
\section{Causal Identification in POSCMs} \label{appendix:b}
In this section, we introduce algorithms for identifying causal effects in Partially Observable Structural Causal Models (POSCMs), which are sufficient and complete. We will consistently assume that $\*Y \subseteq \*O$, i.e., the primary outcomes $\*Y$ are all observed. For settings where $\*Y \not \subseteq \*O$, \Cref{corol:nonid} implies that $P(\*y|\doo(\pi))$ is always non-identifiable w.r.t. the causal diagram $\G$. For convenience, we focus on the problem of determining whether the target effect is identifiable w.r.t. $\G$. However, our algorithms could be easily extended to derive identification formulas of the causal effect. 

We start with the identificaiton of causal effects induced by atomic interventions $\doo(x)$. Formally,
\begin{definition}[Identifiability (Atomic Interventions)]\label{def:atomic_id}
    Given a causal diagram $\G$, let $\*Y$ be an arbitrary subset of $\*O$. $P(\*y|\doo(x))$ is said to be identifiable w.r.t. $\1G$ if $P\left(\*y|\doo(x); M \right)$ is uniquely computable from $P(\*o; M)$ and $\pi$ for any POSCM $M \in \2M_{\langle \G \rangle}$.
\end{definition}
A causal diagram $\G$ is said to be semi-Markovian if it does not contain any latent endogenous variables $\*L$; or equivalently, $\*V = \*O$. \cite[Alg.~5]{tian:02} is an algorithm that identify causal effects, say $P(\*y|\doo(\*x))$, from the observational distribution $P(\*o)$ in a semi-Markovian diagram $\G$, which we consistently refer to as \textsc{IdentifyHelper}. More specifically, \textsc{IdentifyHelper} takes as input a set of action variables $\*X \subseteq \*O$, a set of outcome variables $\*Y \subseteq \*O$ and a semi-Markovian causal diagram $\G$\footnote{In the original text, the semi-Markovian causal diagram $\G$ is implicitly assumed; \cite[Alg.~5]{tian:02} only takes $\*X, \*Y$ as input. We rephrase the algorithm and explicitly represent the dependency on the causal diagram $\G$.}. If $P(\*y|\doo(\*x))$ is identifiable w.r.t. $\G$, \textsc{IdentifyHelper} returns an identificaiton formula that represents $P(\*y|\doo(\*x))$ as an algebraic expression of the observational distribution $P(\*o)$; otherwise, \textsc{IdentifyHelper} returns ``\textsc{Fail}''. \cite{huang:val06,shpitser:pea06a} showed that \textsc{IdentifyHelper} is complete from identifying effects from observational data with respect to semi-Markovian causal diagrams.

We will utilize \textsc{IdentifyHelper} to identifying causal effects in POSCMs where latent endogenous variables are allowed. The key to this reduction is an algorithm \textsc{Project} \cite[Def.~5]{tian:02} that transforms an arbitrary causal diagram $\G$ with observed endogenous variables $\*O$ into a semi-Markovian causal diagram $\1H$ such that its endogenous variables $\*V = \*O$. For completeness, we rephrase \textsc{Project} and describe it in \Cref{project}.
\begin{algorithm}[h]
    \caption{\textsc{Project} \cite[Def.~5]{tian:02}}
    \label{project}
  \begin{algorithmic}[1]
    \STATE {\bfseries Input:} A causal diagram $\G$.
    \STATE {\bfseries Output:} A causal diagram $\1H$ where all endogenous variables are observed, i.e., $\*V = \*O$.
    \STATE Let $\*O, \*L$ be, respectively, observed endogenous variables and latent endogenous variables in $\G$.
    \STATE Let $\1H$ be a causal diagram constructed as follows.
    \FOR{each observed $V \in \*O$ in $\G$}
    \STATE Add an observed node $V$ in $\1H$.
    \ENDFOR
    \FOR{each pair $S, E \in \*O$ in $\G$ s.t. $S \neq E$}
    \IF{there exists a directed path $S \rightarrow E$ in $\G$}
    \STATE Add an edge $S \rightarrow E$ in $\1H$.
    \ELSIF{there exists a path $S \rightarrow V_1 \rightarrow \cdots \rightarrow V_n \rightarrow E$ in $\G$ s.t. $V_1, \cdots V_n \in \*L$}
    \STATE Add an edge $S \rightarrow E$ in $\1H$.
    \ELSIF{there exists a bidirected edge $S \leftrightarrow E$ in $\G$}
    \STATE Add a bidirected edge $S \leftrightarrow E$ in $\1H$.
    \ELSIF{there exists a path $S \leftarrow V_{l,1} \leftarrow \cdots \leftarrow V_{l,n} \leftrightarrow V_{r,m} \rightarrow \cdots \rightarrow V_{r,1} \rightarrow E$ in $\G$ s.t. $V_{l,1}, \cdots, V_{l, n}, V_{r, 1}, \cdots, V_{r,m} \in \*L$}
    \STATE Add a bidirected edge $S \leftrightarrow E$ in $\1H$.
    \ELSIF{there exists a path $S \leftarrow V_{l,1} \leftarrow \cdots \leftarrow V_{l,n} \leftarrow V_c \rightarrow V_{r,m} \rightarrow \cdots \rightarrow V_{r,1} \rightarrow E$ in $\G$ s.t. $V_{l,1}, \cdots, V_{l, n}, V_c, V_{r, 1}, \cdots, V_{r,m} \in \*L$}
    \STATE Add a bidirected edge $S \leftrightarrow E$ in $\1H$.
    \ENDIF
    \ENDFOR
    \STATE Return $\1H$.
  \end{algorithmic}
\end{algorithm}

The following lemma, introduced in \cite[Props.~2-3]{lee2019structural}, shows that the parameter space of observational distributions $P(\*o)$ and interventional distributions $P(\*y|\doo(x))$ induced by POSCMs associated with a causal diagram $\G$ is always equivalent to that induced by the corresponding semi-Markovian causal diagram $\1H = \textsc{Project}(\G)$.
\begin{lemma}\label{lem:proj}
    Given a causal diagram $\G$, let $\1H = \textsc{Project}(\G)$. For any POSCM $M_1$ associated with $\G$, there exists a POSCM $M_2$ associated with $\1H$ such that $P(\*y|\doo(\*x); M_1) = P(\*y|\doo(\*x);M_2)$ for any $\*X, \*Y \subseteq \*O$, and vice versa.
\end{lemma}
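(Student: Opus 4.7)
The plan is to prove each direction by explicit construction, exploiting the fact that every edge of $\1H$ arises from a specific topological witness in $\G$ made of latent endogenous variables.

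For the forward direction, given $M_1 = \tuple{\*U_1, \*V_1, \2F_1, P_1(\*u)}$ on $\G$ with $\*V_1 = \*O \cup \*L$, I would construct $M_2$ on $\1H$ whose endogenous set is exactly $\*O$. For each observed $V \in \*O$, I would define $f_V^{M_2}$ by recursively substituting the structural equations of $M_1$ along every directed chain of latent ancestors that connects an $\1H$-parent of $V$ to $V$; this is well-defined because \textsc{Project} draws a directed edge $S \to E$ in $\1H$ exactly when such a chain exists in $\G$. For each bidirected edge $S \leftrightarrow E$ in $\1H$, I would bundle the exogenous noises along the witnessing latent path in $\G$ into a shared exogenous variable $U_{SE}$ in $M_2$, inheriting their joint distribution from $P_1$. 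Since an intervention $\doo(\*x)$ with $\*X \subseteq \*O$ only overrides functions of observed nodes, the substitution commutes with the intervention, and equality $P(\*y|\doo(\*x); M_1) = P(\*y|\doo(\*x); M_2)$ for every $\*Y \subseteq \*O$ then follows by marginalizing $\*L$ out of the truncated factorization of $P_1$.

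For the reverse direction, given $M_2$ on $\1H$, I would lift it to a POSCM $M_1$ on $\G$ by installing trivial ``relay'' functions on each latent endogenous $L \in \*L$: if $L$ sits on a directed chain witnessing an edge $S \to E$ in $\1H$, set $L$ to copy its unique parent on that chain, so that composing the chain reproduces $f_E^{M_2}$ exactly; if $L$ sits on a bidirected witness path, assign it a deterministic function of the shared exogenous noise that reconstructs the correlation pattern induced by $U_{SE}$. Again, an intervention on $\*X \subseteq \*O$ leaves the relays intact, so the induced distribution on $\*Y \subseteq \*O$ collapses by marginalizing over $\*L$ and matches the $M_2$-side computation.

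The main obstacle is handling the six edge-introduction clauses of \textsc{Project} uniformly and ensuring consistency when several clauses simultaneously witness the same edge in $\1H$ (for instance, two distinct latent confounding paths between the same pair $S, E$ must be bundled into a single shared noise without double-counting, and the direct-confounder case, the chain-confounder case, and the fork case of \textsc{Project} must be reconciled). The cleanest way to manage this is to group the latent endogenous variables of $\G$ into connected components of $\G$ restricted to $\*L$ and treat each such block as a single ``compiled function'' whose joint behavior on its observed boundary is the invariant matched between $M_1$ and $M_2$; this aligns with the c-component decomposition used by \textsc{IdentifyHelper} and avoids rederiving the intervention factorization from scratch.
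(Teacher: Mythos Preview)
The paper does not prove this lemma from scratch: its entire argument is the one-line citation ``The statement follows from \cite[Props.~2--3]{lee2019structural}.'' Your proposal is therefore a genuinely different, self-contained route---what you gain is an explicit construction the reader can inspect, at the cost of having to manage the bookkeeping that the citation hides. The forward direction (composing out the latent endogenous nodes and absorbing their exogenous noises into the bidirected structure of $\1H$) is the standard latent-projection construction and is sound as you describe it. The reverse direction is where the real content lies: your naive relay description (``copy its unique parent on that chain'') would fail whenever a latent $L$ simultaneously participates in several directed chains and several confounding witnesses, but you correctly diagnose this and your block-wise fix---treating each $\*L$-connected component as a single compiled function on its observed boundary---is the right abstraction and is essentially what the cited propositions establish. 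One case worth making explicit when you execute the plan: a bidirected edge $S \leftrightarrow E$ in $\1H$ can arise from a pure latent fork in $\G$ (the last clause of \textsc{Project}) with no bidirected edge anywhere on the witness, so the shared noise $U_{SE}$ from $M_2$ must be planted in the private exogenous $U_{V_c}$ of the fork apex in $M_1$ rather than in a confounder that $\G$ does not possess; your block treatment absorbs this, but the earlier sentence-level description does not.
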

\begin{proof}
    The statement follows from \cite[Props.~2-3]{lee2019structural}.
\end{proof}
\Cref{lem:proj} implies a general algorithm for identifying $P(\*y|\doo(x))$ in a causal diagram $\G$ with latent endogenous variables: it is sufficient to consider the identifiability of $P(\*y|\doo(x))$ in the projection $\1H = \textsc{Project}(\G)$. We describe such an algorithm in \Cref{identify-atomic}.
\begin{algorithm}[h]
      \caption{\textsc{Identify} (Atomic Interventions)}
      \label{identify-atomic}
    \begin{algorithmic}[1]
      \STATE {\bfseries Input:} A causal diagram $\G$, primary outcomes $\*Y \subseteq \*O$.
      \STATE Let $\1H = \textsc{Project}(\G)$.
      \IF{$\textsc{IdentifyHelper}(\set{X}, \*Y, \1H) = \textsc{Fail}$} 
      \STATE Return \textsc{No}.
      \ELSE 
      \STATE Return \textsc{Yes}.
      \ENDIF
    \end{algorithmic}
\end{algorithm}
\begin{corollary}\label{corol:id1}
    Given a causal diagram $\G$, let $\*Y$ be an arbitrary subset of $\*O$. $\textsc{Identify}(\G, \*Y) = ``\textsc{Yes}''$ if and only if $P(\*y|\doo(x))$ is identifiable w.r.t. $\G$.
\end{corollary}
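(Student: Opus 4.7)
The plan is to reduce identifiability with respect to the general causal diagram $\G$ (which may contain latent endogenous variables) to identifiability with respect to its semi-Markovian projection $\1H = \textsc{Project}(\G)$, and then invoke the soundness and completeness of \textsc{IdentifyHelper}.

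First, I would establish the following equivalence: $P(\*y\mid\doo(x))$ is identifiable w.r.t.\ $\G$ if and only if it is identifiable w.r.t.\ $\1H$. The forward direction uses one half of \Cref{lem:proj}: given any two POSCMs $M_1, M_2 \in \2M_{\langle\1H\rangle}$ with $P(\*o;M_1) = P(\*o;M_2)$, we can find POSCMs $M_1', M_2' \in \2M_{\langle\G\rangle}$ matching their observational and interventional distributions on the observed variables; identifiability w.r.t.\ $\G$ then forces $P(\*y\mid\doo(x);M_1) = P(\*y\mid\doo(x);M_2)$. The reverse direction is symmetric, using the other half of \Cref{lem:proj} to transport nonidentifiability witnesses back and forth. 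Since \textsc{Project} preserves $\*O$ and $\{X\} \cup \*Y \subseteq \*O$, both sides refer to the same functional of the same observational distribution.

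Second, I would appeal to the fact that \textsc{IdentifyHelper} is both sound and complete over semi-Markovian diagrams (via Huang and Valtorta~\cite{huang:val06} and Shpitser and Pearl~\cite{shpitser:pea06a}). Soundness gives that whenever \textsc{IdentifyHelper}$(\{X\}, \*Y, \1H)$ returns a formula, that formula correctly evaluates $P(\*y\mid\doo(x))$ in every POSCM compatible with $\1H$, hence $P(\*y\mid\doo(x))$ is identifiable w.r.t.\ $\1H$. Completeness gives the converse: if \textsc{IdentifyHelper} returns \textsc{Fail}, then $P(\*y\mid\doo(x))$ is provably nonidentifiable w.r.t.\ $\1H$. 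Composing these two steps with the equivalence established above yields the biconditional claim of the corollary, since \textsc{Identify}$(\G, \*Y) = \textsc{Yes}$ is defined to hold precisely when \textsc{IdentifyHelper}$(\{X\}, \*Y, \1H) \neq \textsc{Fail}$.

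The main subtlety I anticipate is being careful about what ``uniquely computable from $P(\*o;M)$'' means across the two hypothesis classes $\2M_{\langle\G\rangle}$ and $\2M_{\langle\1H\rangle}$. The correspondence supplied by \Cref{lem:proj} only matches distributions over $\*O$ (not over the latent variables $\*L$), so I need to verify that the function from observational to interventional distributions induced by one hypothesis class agrees with that induced by the other on the whole image of observational distributions. This amounts to checking that the projection does not shrink or expand the set of realizable observational distributions, which follows because \textsc{Project} is bidirectional in \Cref{lem:proj}. Once this compatibility is in hand, the remainder of the proof is a direct combination of the reduction and the known completeness of \textsc{IdentifyHelper}.
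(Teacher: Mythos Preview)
Your proposal is correct and follows essentially the same approach as the paper: reduce identifiability in $\G$ to identifiability in the semi-Markovian projection $\1H=\textsc{Project}(\G)$ via the bidirectional correspondence of \Cref{lem:proj}, and then invoke the soundness and completeness of \textsc{IdentifyHelper}. The paper's proof is terser (it spells out only one direction of the equivalence and says ``similarly'' for the other), whereas you are more explicit about the subtlety of matching the observational-to-interventional map across $\2M_{\langle \G\rangle}$ and $\2M_{\langle \1H\rangle}$, but the underlying argument is the same.
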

\begin{proof}
    \Cref{lem:proj} implies that $P(\*y|\doo(x))$ is identifiable w.r.t. a causal diagram $\G$ if and only if $P(\*y|\doo(x))$ is identifiable w.r.t. the projection $\1H = \textsc{Project}(\G)$. To see this, suppose $P(\*y|\doo(x))$ is not identifiable w.r.t. $\G$. That is, there exist two POSCMs $M_1, M_2$ associated with $\G$ such that $P(\*o;M_1) = P(\*o;M_2)$ while $P(\*y|\doo(x);M_1) \neq P(\*y|\doo(x);M_2)$. By \Cref{lem:proj}, for any $M_i$ and $i = 1, 2$, we could find a POSCM $M'_i$ associated with $\1H$ such that $P(\*o;M_i) = P(\*o;M'_i)$ and $P(\*y|\doo(x);M_i) = P(\*y|\doo(x);M'_i)$. This implies that we could obtain two POSCMs $M'_1, M'_2$ associated with $\1H$ such that $P(\*o;M'_1) = P(\*o;M'_2)$ while $P(\*y|\doo(x);M'_1) \neq P(\*y|\doo(x);M'_2)$, i.e., $P(\*y|\doo(x))$ is identifiable w.r.t. $\1H$. Similarly, we could prove the ``only if'' direction.

    Since \textsc{Identify} returns ``\textsc{Yes}'' if and only if \textsc{IdentifyHelper} finds an identification formula of $P(\*y|\doo(x))$ and \textsc{IdentifyHelper} is sound and complete, the statement is entailed.
\end{proof}

\subsection{Identifying Conditional Plans}
We will next study the general problem of identifying causal effects $P(\*y|\doo(\pi))$ induced by interventions $\doo(\pi)$ following conditional plans in a policy space $\Pi$. The formal definition of such identifiability is given in \Cref{def:id}. Similar to atomic interventions, we consider first a simpler setting where the causal diagram $\G$ is semi-Markovian, without latent endogenous variables. Given a causal diagram $\G$, we denote by $\G_{\Pi}$ a manipulated diagram obtained from a subgraph $\G_{\overline{X}}$ by adding arrows from nodes in $\Pa(\Pi)$ to the action node $X$. Let a set $\*Z = \an(\*Y)_{\G_{\Pi}} \setminus \set{X}$. Following \cite[Eq.~15]{tian2008identifying}, the interventional distribution $P(\*y|\doo(\pi))$ could be written as follows:
\begin{align}
    P(\*y|\doo(\pi)) &= \sum_{x, \*z} P(\*y, \*z| \doo(x), \doo(\*v \setminus (\*y \cup \*z \cup \set{x}))) \pi(x|\pa(\Pi)) \notag \\
    &= \sum_{x, \*z} P(\*y, \*z| \doo(x)) \pi(x|\pa(\Pi)). \label{eq:b1}
\end{align}
Among the above equations, the last step follows from Rule 3 of do-calculus \cite[Thm.~3.4.1]{pearl:2k}. More specifically, if there is a directed path from a node $V_i \in \*V \setminus (\*Y \cup \*Z \cup \set{X})$ to a node in $\*Y, \*Z$ in the subgraph $\G_{\overline{X}}$, $V_i$ must also be included in set $\*Z$. That is, $\left (\*Y, \*Z \ci \*V \setminus (\*Y \cup \*Z \cup \set{X}) \right)_{\G_{\overline{X}, \overline{\*V \setminus (\*Y \cup \*Z \cup \set{X})}}}$, which implies $P(\*y, \*z| \doo(x), \doo(\*v \setminus (\*y \cup \*z \cup \set{x}))) =  P(\*y, \*z| \doo(x))$. It is thus sufficient to identify the causal effect $P(\*y, \*z | \doo(x))$ induced by atomic intervention $\doo(x)$ w.r.t. $\G$. \cite{shpitser2018identification,correa2019statistical} showed that such an algorithm is complete for identifying $P(\*y|\doo(\pi))$ in a semi-Markovian causal diagram $\G$ where all endogenous variables are observed. 
\begin{lemma}\label{lem:cond-plan}
    Given a causal diagram $\G$ and a policy space $\Pi$, let $\*Y$ be an arbitrary subset of $\*O$. Assume that $\G$ is semi-Markovian, i.e., $\*V  = \*O$. Let $\*Z = \an(\*Y)_{\G_{\Pi}} \setminus \set{X}$. $P(\*y|\doo(\pi))$ is identifiable w.r.t. $\tuple{\G, \Pi}$ if and only if $P(\*y, \*z| \doo(x))$ is identifiable w.r.t. $\G$.
\end{lemma}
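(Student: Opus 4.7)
The plan is to prove the two directions separately. The ``if'' direction falls out of the decomposition already derived just above the lemma's statement; the ``only if'' direction is the substantive part, which I would settle by contraposition combined with the completeness of the atomic identification algorithm.

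For the ``if'' direction, I would start from Equation~(B.1),
\begin{align*}
P(\*y|\doo(\pi)) = \sum_{x, \*z} P(\*y, \*z | \doo(x))\, \pi(x|\pa(\Pi)).
\end{align*}
Note that $\Pa(\Pi) \subseteq \*Z$ in the only nontrivial case (when $X$ is an ancestor of some variable in $\*Y$ in $\G_\Pi$), so $\pa(\Pi)$ appears as a sub-vector of $\*z$ and varies with the summation. Since $\pi$ is user-specified and each factor $P(\*y, \*z | \doo(x))$ is uniquely computable from $P(\*o)$ by hypothesis, the right-hand side is a computable function of $P(\*o)$ and $\pi$; hence $P(\*y|\doo(\pi))$ is identifiable w.r.t.\ $\tuple{\G, \Pi}$.

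For the ``only if'' direction I would argue by contraposition. Suppose $P(\*y, \*z | \doo(x))$ is non-identifiable w.r.t.\ $\G$. By soundness and completeness of \textsc{IdentifyHelper} for atomic interventions in semi-Markovian diagrams \cite{huang:val06,shpitser:pea06a,tian:pea03-r290}, there exist $M_1, M_2 \in \2M_{\langle \G \rangle}$ with $P(\*o; M_1) = P(\*o; M_2)$ but with $P(\*y_0, \*z_0 | \doo(x_0); M_1) \neq P(\*y_0, \*z_0 | \doo(x_0); M_2)$ at some configuration $(\*y_0, \*z_0, x_0)$. Substituting these witnesses into~(B.1), showing non-identifiability of $P(\*y|\doo(\pi))$ reduces to exhibiting a single $\pi \in \Pi$ under which the linear functional
\begin{align*}
\pi \longmapsto \sum_{x, \*z} \bigl[P(\*y_0, \*z | \doo(x); M_1) - P(\*y_0, \*z | \doo(x); M_2)\bigr]\, \pi(x | \pa^*(\*z))
\end{align*}
is nonzero, where $\pa^*(\*z)$ denotes the $\Pa(\Pi)$-coordinates of $\*z$.

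The main obstacle is that $\pi$ can only interact with the $\Pa(\Pi)$-component of $\*z$, so differences of the joint $P(\*y_0, \*z | \doo(x))$ that are concentrated on $\*Z \setminus \Pa(\Pi)$ could, a priori, average out under every stochastic $\pi$. I would handle this by exploiting the Rule~3 step used to derive~(B.1): variables outside $\*Y \cup \*Z \cup \set{X}$ are inert for $P(\*y, \*z|\doo(x))$, which gives enough freedom to modify the mechanisms attached to such variables in the witness models $M_1, M_2$ so as to create a non-degenerate $\Pa(\Pi)$-dependence of the difference, while preserving both $P(\*o)$ and the nonzero gap on $P(\*y, \*z|\doo(x))$. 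A suitable deterministic policy $\pi(x|\pa^*) = \I\set{x = g(\pa^*)}$ for some function $g$ would then separate $M_1$ from $M_2$. An alternative, and shorter, route is to invoke the existing completeness results for conditional-plan identification \cite{tian2008identifying,shpitser2018identification,correa2019statistical}, which already characterize identifiability of $P(\*y|\doo(\pi))$ in terms of atomic identifiability over an ancestor-closed set; checking that their ancestor set coincides with $\*Z = \an(\*Y)_{\G_\Pi} \setminus \set{X}$ yields the lemma immediately. I would try the direct witness argument first because it makes the role of $\*Z$ transparent, and fall back to the completeness citation if the witness construction becomes too intricate.
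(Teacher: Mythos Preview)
The paper's own proof is a one-line invocation of \cite[Corol.~2]{correa2019statistical}, which is exactly your fallback route. Your ``if'' direction via Equation~(B.1) is fine and is the content underlying that citation.

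Your primary plan for the ``only if'' direction, however, has a gap. You propose to modify mechanisms of variables outside $\*Y \cup \*Z \cup \set{X}$ in the witness models $M_1, M_2$ so as to induce a non-degenerate $\Pa(\Pi)$-dependence of the difference, while keeping $P(\*o)$ and the gap on $P(\*y,\*z|\doo(x))$ intact. But this cannot work for the reason you yourself invoke: by the Rule~3 inertness used to derive (B.1), changing those mechanisms leaves $P(\*y,\*z|\doo(x))$ unchanged in \emph{both} models, so the difference map $(\*y,\*z,x)\mapsto P(\*y,\*z|\doo(x);M_1)-P(\*y,\*z|\doo(x);M_2)$ is exactly the same before and after your modification, and no new $\Pa(\Pi)$-dependence appears. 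Separately, such modifications generally do alter $P(\*o)$, so preserving the observational match is not free either. The substantive content of the ``only if'' direction is that the hedge-based counterexamples witnessing non-identifiability of the atomic effect $P(\*y,\*z|\doo(x))$ can be lifted to counterexamples for $P(\*y|\doo(\pi))$ under some $\pi\in\Pi$; this is precisely the construction carried out in \cite{tian2008identifying,shpitser2018identification,correa2019statistical} and does not reduce to a local mechanism tweak on inert variables. Your fallback citation is therefore the right move, and it is what the paper does.
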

\begin{proof}
    The statement follows immediately from  \cite[Corol.~2]{correa2019statistical}.
\end{proof}
We are now ready to consider the identificaiton of $P(\*y|\doo(\pi))$ w.r.t. a policy space $\Pi$ and a general causal diagram $\G$ where latent endogenous variables $\*L$ are present. Our next result shows that it is sufficient to identify $P(\*y|\doo(\pi))$ in the corresponding semi-Markovian projection $\1H = \textsc{Project}(\G)$.  
\begin{lemma}\label{lem:proj2}
    Given a causal diagram $\G$ and a policy space $\Pi$, let $\1H = \textsc{Project}(\G)$. For any POSCM $M_1$ associated with $\G$, there exists a POSCM $M_2$ associated with $\1H$ such that $P(\*y|\doo(\pi); M_1) = P(\*y|\doo(\pi);M_2)$ for any $\pi \in \Pi$, any $\*Y \subseteq \*O$, and vice versa.
\end{lemma}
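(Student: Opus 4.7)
The plan is to reduce the policy-intervention equivalence to the atomic-intervention equivalence already established in Lemma~\ref{lem:proj}. The key observation is that any conditional-plan intervention $\doo(\pi)$ can be written as a mixture of atomic interventions $\doo(x)$, weighted by the policy applied to its observed inputs $\Pa(\Pi) \subseteq \*O$.

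Concretely, since $\Pa(\Pi) \subseteq \*O \setminus \De(X)_{\G_{\overline{X}}}$, Rule 3 of do-calculus yields $P(\pa^* \mid \doo(x); M) = P(\pa^*; M)$, so for any POSCM $M$ associated with $\G$ one obtains the decomposition
\begin{align*}
P(\*y \mid \doo(\pi); M) \;=\; \sum_{x,\,\pa^*} P(\*y,\pa^* \mid \doo(x); M)\,\pi(x \mid \pa^*).
\end{align*}
The same identity holds in $\1H = \textsc{Project}(\G)$, since \textsc{Project} preserves topological relations among observed variables \cite[Lem.~5]{tian:02}: in particular, $\Pa(\Pi)$ remains a non-descendant of $X$ in $\1H_{\overline{X}}$, and $\pi$ is well-defined over the same set of inputs in both diagrams.

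Given $M_1$ on $\G$, I would invoke Lemma~\ref{lem:proj} to produce a companion $M_2$ on $\1H$ whose atomic-interventional distributions over observed targets match those of $M_1$. Taking the target set to be $\*Y \cup \Pa(\Pi) \subseteq \*O$ yields
\begin{align*}
P(\*y,\pa^* \mid \doo(x); M_1) \;=\; P(\*y,\pa^* \mid \doo(x); M_2)
\end{align*}
for every $x, \pa^*, \*y$. Plugging these equalities term-by-term into the mixture decomposition gives $P(\*y \mid \doo(\pi); M_1) = P(\*y \mid \doo(\pi); M_2)$ for every $\pi \in \Pi$ and $\*Y \subseteq \*O$. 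The ``vice versa'' direction is obtained symmetrically via the reverse clause of Lemma~\ref{lem:proj}.

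The main obstacle is justifying the mixture decomposition rigorously in the presence of latent endogenous variables $\*L$. The cleanest route is to start from the policy-intervention definition in \eqref{eq:do}, marginalize out both the exogenous variables $\*U$ and the endogenous variables outside $\*Y \cup \Pa(\Pi) \cup \{X\}$, condition on the observed-and-non-descendant block $\Pa(\Pi) = \pa^*$ (which factorizes as $\pi(x \mid \pa^*)$ times the natural dynamics on everything downstream of $X$), and use $P(\pa^* \mid \doo(x)) = P(\pa^*)$ to reassemble the conditional atomic effect $P(\*y,\pa^* \mid \doo(x))$. Once this graphical bookkeeping is done, the lemma follows by direct substitution.
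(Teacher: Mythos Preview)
Your proposal is correct and takes essentially the same approach as the paper: write $P(\*y\mid\doo(\pi))$ as a mixture of atomic interventional distributions over observed targets, then invoke Lemma~\ref{lem:proj} to transfer the equality from atomic to policy interventions. The paper's two-line proof cites the decomposition \eqref{eq:b1} with the ancestral set $\*Z=\an(\*Y)_{\G_{\Pi}}\setminus\{X\}$ rather than your $\Pa(\Pi)$, but the reduction to Lemma~\ref{lem:proj} is identical; your choice of $\Pa(\Pi)$ is in fact slightly cleaner, since it is contained in $\*O$ by definition and so Lemma~\ref{lem:proj} applies without any further remark.
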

\begin{proof}
    For any policy $\pi \in \Pi$, $P(\*y|\doo(\pi))$ could be written as a function of $P(\*y, \*z|\doo(x))$ and $\pi$ following \Cref{eq:b1}. Therefore, the statement is implied by \Cref{lem:proj}.
\end{proof} 
\begin{algorithm}[h]
    \caption{\textsc{Identify}}
    \label{identify}
  \begin{algorithmic}[1]
    \STATE {\bfseries Input:} A causal diagram $\G$, a policy space $\Pi$, primary outcomes $\*Y \subseteq \*O$.
    \STATE Let $\1H = \textsc{Project}(\G)$.
    \STATE Let $\*Z = \an(\*Y)_{\1H_{\Pi}} \setminus \set{X}$.
    \IF{$\textsc{IdentifyHelper}(\set{X}, \*Y \cup \*Z, \1H) = \textsc{Fail}$} 
    \STATE Return \textsc{No}.
    \ELSE 
    \STATE Return \textsc{Yes}.
    \ENDIF
  \end{algorithmic}
\end{algorithm}
Details of our algorithm \textsc{Identify} is described in \Cref{identify}. It takes as input a causal diagram $\G$, a policy space $\Pi$ and a set of observed outcomes $\*Y$. At Step 1, it obtains a projection $\1H$ of $\G$ such that all endogenous variables $\*V$ in $\1H$ are observed. It then constructs the covariates $\*Z$ following \Cref{lem:cond-plan} (Step 3). Finally, \textsc{Identify} calls \textsc{IdentifyHelper} to identify $P(\*y, \*z|\doo(x))$ in the projection $\1H$. It outputs ``\textsc{No}'' if \textsc{IdentifyHelper} fails to find an identification formula of $P(\*y, \*z|\doo(x))$; otherwise, it outputs ``\textsc{Yes}''. Since \Cref{lem:proj2} shows that the parameter space of $P(\*y|\doo(\pi))$ and $P(\*o)$ induced by POSCMs $M \in \2M_{\langle \G \rangle}$ is equivalent to that induced by instances in the family $\2M(\1H)$ of projection $\1H$, the soundness and completeness of \textsc{Identify} is entailed.
\begin{corollary}\label{corol:id2}
    Given a causal diagram $\G$ and a policy space $\Pi$, let $\*Y$ be an arbitrary subset of $\*O$. $\textsc{Identify}(\G, \Pi, \*Y) = \textsc{Yes}$ if and only if $P(\*y|\doo(\pi))$ is identifiable w.r.t. $\tuple{\G, \Pi}$.
\end{corollary}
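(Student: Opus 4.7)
The plan is to prove \Cref{corol:id2} by composing the two reduction steps that the algorithm $\textsc{Identify}$ implicitly performs: first, project the (possibly non-semi-Markovian) diagram $\G$ to a semi-Markovian diagram $\1H$ that has the same expressive power with respect to the distributions of interest; second, reduce conditional-plan identifiability on $\1H$ to atomic-intervention identifiability of an enlarged outcome set on $\1H$; third, invoke soundness and completeness of $\textsc{IdentifyHelper}$ on semi-Markovian graphs. I would treat the biconditional by proving the contrapositive of ``only if'' and the direct form of ``if''.

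Concretely, first I would argue that $P(\*y|\doo(\pi))$ is identifiable w.r.t.\ $\tuple{\G,\Pi}$ if and only if $P(\*y|\doo(\pi))$ is identifiable w.r.t.\ $\tuple{\1H,\Pi}$, where $\1H = \textsc{Project}(\G)$. This mirrors the argument used for \Cref{corol:id1}: \Cref{lem:proj2} gives a bijection-like correspondence between POSCMs compatible with $\G$ and with $\1H$ that preserves both $P(\*o)$ and $P(\*y|\doo(\pi))$ for every $\pi\in\Pi$. Hence two witnesses $M_1,M_2$ of non-identifiability on one side (equal observational distribution, unequal interventional distribution) immediately yield matching witnesses on the other side, and conversely. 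The ``only if'' and ``if'' directions for this reduction follow by symmetric application of \Cref{lem:proj2}.

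Next, since $\1H$ is semi-Markovian (every endogenous variable in $\1H$ is observed by construction of $\textsc{Project}$), I would apply \Cref{lem:cond-plan} with $\*Z = \an(\*Y)_{\1H_{\Pi}} \setminus \set{X}$, which is exactly the set computed at Step~3 of $\textsc{Identify}$. This equates identifiability of $P(\*y|\doo(\pi))$ w.r.t.\ $\tuple{\1H,\Pi}$ with identifiability of the atomic-intervention effect $P(\*y,\*z|\doo(x))$ w.r.t.\ $\1H$. Finally, $\textsc{IdentifyHelper}$ is known to be sound and complete for identifying atomic-intervention effects in semi-Markovian diagrams \cite{huang:val06,shpitser:pea06a}, so $\textsc{IdentifyHelper}(\set{X}, \*Y\cup\*Z, \1H)$ returns a formula if and only if $P(\*y,\*z|\doo(x))$ is identifiable w.r.t.\ $\1H$; by the construction of $\textsc{Identify}$, this is exactly the event in which the algorithm outputs $\textsc{Yes}$. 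Chaining the three equivalences yields the biconditional.

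The only real subtlety, and thus the main obstacle I would be careful about, is ensuring that the ancestral set $\*Z$ taken inside the projection $\1H_{\Pi}$ really matches what \Cref{lem:cond-plan} requires, since \Cref{lem:cond-plan} is stated natively in terms of the graph on which one is identifying; because the reduction in the previous paragraph is already to $\tuple{\1H,\Pi}$, the ancestral relations used to instantiate \Cref{lem:cond-plan} are automatically computed in $\1H_{\Pi}$, which coincides with Step~3 of $\textsc{Identify}$. With that bookkeeping in place, the proof is just a chain of ``iff'' steps, and no further computation is needed.
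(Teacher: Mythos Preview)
Your proposal is correct and mirrors the paper's own argument: reduce from $\G$ to the semi-Markovian projection $\1H$ via \Cref{lem:proj2}, then invoke \Cref{lem:cond-plan} (together with \Cref{eq:b1}) to equate conditional-plan identifiability on $\1H$ with atomic identifiability of $P(\*y,\*z|\doo(x))$, and finish with the soundness and completeness of \textsc{IdentifyHelper}. Your treatment is in fact more explicit than the paper's terse proof, particularly in flagging that $\*Z$ must be computed in $\1H_{\Pi}$ so that the hypothesis of \Cref{lem:cond-plan} is met.
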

\begin{proof}
    \Cref{lem:proj2} implies that $P(\*y|\doo(\pi))$ is identifiable w.r.t. $\tuple{\G, \Pi}$ if and only if $P(\*y|\doo(\pi))$ is identifiable w.r.t. $\tuple{\1H, \Pi}$ where $\1H = \textsc{Project}(\G)$. The proof is similar to \Cref{corol:id1}. It follows from \Cref{eq:b1} and \Cref{lem:cond-plan} that \textsc{IdentifyHelper} does not fail if and only if $P(\*y|\doo(\pi))$ is identifiable w.r.t. $\tuple{\1H, \Pi}$. The soundness and completeness of \textit{Identify} is thus entailed.
\end{proof}
\clearpage
\section{\textsc{ListIdSpace}}\label{appendix:c}
In this section, we describe Algorithm \textsc{ListIdSpace} that finds all identifiable subspaces with respect to a causal diagram $\G$, a policy space $\Pi$ and a set of observed variables $\*Y \subseteq \*O$. The details of \textsc{ListIdSpace} are shown in \Cref{listidspace}. 
\begin{figure}[h]
  \centering
  \begin{minipage}{\textwidth}
  \begin{algorithm}[H]
    \caption{\textsc{ListIdSpace}}
    \label{listidspace}
  \begin{algorithmic}[1]
   \STATE {\bfseries Input:} $\G, \Pi, \*Y$.
   \STATE $\textsc{ListIdSpaceHelper}(\G, \*Y, \set{\pi: \PP_{X}}, \Pi)$.
  \end{algorithmic}
  \end{algorithm}
  \end{minipage}
  \begin{minipage}{\textwidth}
  \begin{algorithm}[H]
    \caption{\textsc{ListIdSpaceHelper}}
    \label{listidspacehelper}
  \begin{algorithmic}[1]
    \STATE {\bfseries Input:} $\G, \*Y$ and policy spaces $\Pi_L, \Pi_R$ such that $\Pi_L \subseteq \Pi_R$.
    \IF{$\textsc{Identify}(\G, \Pi_L, \*Y) = \textsc{Yes}$}
    \IF{$\LL = \RR$} Output $\Pi_L$. 
    \ELSE 
    \STATE Pick an arbitrary $V \in \Pa(\Pi_R) \setminus \Pa(\Pi_L)$.
    \STATE $\textsc{ListIdSpaceHelper}(\G, \*Y, \Pi_L \cup \set{V}, \Pi_R)$.
    \STATE $\textsc{ListIdSpaceHelper}(\G, \*Y, \Pi_L, \Pi_R \setminus \set{V})$.
    \ENDIF
    \ENDIF
  \end{algorithmic}
  \end{algorithm}
  \end{minipage}
\end{figure}

It calls a subroutine \textsc{ListIdSpaceHelper} which takes as input the diagram $\G$, variables $\*Y$ and two policy subspace $\Pi_L, \Pi_R$ of $\Pi$ such that $\Pi_L \subseteq \Pi_R$. More specifically, \textsc{ListIdSpaceHelper} performs backtrack search to enumerate identifiable subspaces $\Pi'$ w.r.t. $\tuple{\G, \Pi, \*Y}$ such that $\Pi_L \subseteq \Pi' \subseteq \Pi_R$. It aborts branches that will not lead to such identifiable subspaces. The aborting criterion (Step 2 of \Cref{listidspacehelper}) follows the observation that $P(\*y|\doo(\pi))$ is identifiable w.r.t. a policy space $\Pi$ only if it is identifiable w.r.t. any subspace $\Pi' \subseteq \Pi$. At Step 5, it picks an arbitrary variable $V$ that is included in the input covariates of $\Pi_R$ but not in $\Pi_L$. Let $\Pi \cup \set{V}$ denote a policy space obtained from $\Pi$ by including $V$ as part of the input covariates, i.e., $\Pi \cup \set{V} = \set{\pi: \D_{\Pa(\Pi), V} \mapsto \PP_X}$. Similarly, we define $\Pi \cup \set{V} = \set{\pi: \D_{\Pa(\Pi) \setminus \set{V} } \mapsto \PP_X}$. \textsc{ListIdSpaceHelper} then recursively returns all identifiable subspaces $\Pi'$ w.r.t. $\tuple{\G, \Pi, \*Y}$: the first recursive call returns i-subspaces taking $V$ as an input and the second call return all i-subspaces that does not consider $V$.
\begin{restatable}{theorem}{thmlistidspace}\label{thm:listidspace}
  Given a causal diagram $\G$, a policy space $\Pi$, and an oracle access to \textsc{Identify}, let $\*Y$ be an arbitrary subset of $\*O$. $\textsc{ListIdSpace}(\G, X, \*Y, \emptyset, \Pa(\Pi))$ enumerates identifiable subspaces w.r.t. $\tuple{G, \Pi, \*Y}$ with polynomial delay $\mathcal{O}(|\Pa(\Pi)|)$.
\end{restatable}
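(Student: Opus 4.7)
The plan is to establish correctness (every identifiable subspace is output exactly once) and polynomial delay via a structural induction on the recursion tree of \textsc{ListIdSpaceHelper}. The key property I will use is a monotonicity lemma: if $P(\*y|\doo(\pi))$ is identifiable w.r.t. $\tuple{\G, \Pi_R}$ and $\Pi_L$ is a subspace of $\Pi_R$, then $P(\*y|\doo(\pi))$ is identifiable w.r.t. $\tuple{\G, \Pi_L}$. This follows immediately from \Cref{def:id}, since any $\pi \in \Pi_L$ is also a policy in $\Pi_R$, so any identification formula that works uniformly over $\Pi_R$ also works over $\Pi_L$.

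First I would prove by induction on $k := |\Pa(\Pi_R) \setminus \Pa(\Pi_L)|$ that \textsc{ListIdSpaceHelper}$(\G, \*Y, \Pi_L, \Pi_R)$ outputs exactly the set of identifiable subspaces $\Pi'$ with $\Pi_L \subseteq \Pi' \subseteq \Pi_R$, each at most once. The base case $k=0$ forces $\Pi_L = \Pi_R$, and Steps 2--3 output $\Pi_L$ iff it is identifiable. For the inductive step, the contrapositive of monotonicity justifies the pruning at Step 2: if $\Pi_L$ fails the \textsc{Identify} check, no $\Pi' \supseteq \Pi_L$ can be identifiable, so outputting nothing is correct. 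Otherwise, for the chosen $V \in \Pa(\Pi_R) \setminus \Pa(\Pi_L)$ the candidate subspaces split disjointly according to whether $V \in \Pa(\Pi')$: the ``$V$ in'' case is covered by the first recursive call with lower bound $\Pi_L \cup \set{V}$, and the ``$V$ out'' case by the second call with upper bound $\Pi_R \setminus \set{V}$. Applying the inductive hypothesis to each sub-interval yields soundness, completeness, and the no-duplicates property.

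For polynomial delay, the central observation is that whenever a recursive call passes the Step 2 check, the ``$V$ out'' sibling (second recursive call) is also guaranteed to pass it, because its lower bound is the same $\Pi_L$ that was just certified identifiable. Consequently, starting from any such live node one can reach an output by iterating the ``$V$ out'' branch at most $|\Pa(\Pi_R) \setminus \Pa(\Pi_L)| \le |\Pa(\Pi)|$ times until $\Pi_L = \Pi_R$, at cost one \textsc{Identify} call per level. Between two consecutive outputs, the algorithm backtracks to the nearest unexplored sibling (a purely stack-manipulation step) and then descends to the next output along such a ``$V$ out'' chain; both phases touch at most $|\Pa(\Pi)|$ levels of the recursion tree, giving $\mathcal{O}(|\Pa(\Pi)|)$ oracle calls per delay.

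The main obstacle I expect is making the polynomial-delay bookkeeping fully precise in the presence of dead ``$V$ in'' branches: each such branch could in principle waste oracle calls before failing. The resolution is an amortized accounting argument — since a dead branch is aborted at the first level where Step 2 fails, its total cost along a root-to-failure path is $O(|\Pa(\Pi)|)$, and we can charge this cost against the output produced immediately before descending into it (or against the output produced by the mandatory sibling call). This gives a worst-case delay of $\mathcal{O}(|\Pa(\Pi)|)$ oracle calls per output as claimed.
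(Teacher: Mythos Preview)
Your proposal is correct and follows essentially the same approach as the paper's (very terse) proof: correctness via the disjoint split on $V \in \Pa(\Pi')$ versus $V \notin \Pa(\Pi')$, and polynomial delay via the observation that any node passing Step~2 has a ``$V$ out'' child that also passes, so every live subtree reaches an output within tree height $|\Pa(\Pi)|$. The amortized accounting you worry about is actually unnecessary---a dead ``$V$ in'' branch is pruned after a single oracle call, so the direct worst-case bound of at most two \textsc{Identify} calls per level during any descent already gives $\mathcal{O}(|\Pa(\Pi)|)$ delay.
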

\begin{proof}
    The recursive calls at Steps 6 and 7 guarantees that \textsc{ListIdSpaceHelper} generates every i-subspaces $\Pi'$ exactly once. Since every leaf will output an i-subspace, the tree height is at most $|\Pa(\Pi)|$ and the existence check is performed by \textsc{Identify} oracle, the delay time is $\mathcal{O}(|\Pa(\Pi)|)$.
\end{proof}
\clearpage
\section{Experiments}\label{appendix:d}

We perform 4 experiments, each designed to test different aspects of causal imitation learning. The basic results of the last two experiments are summarized in the paper's main text.

\begin{enumerate}
    \item \textbf{Data-Dependent Binary Imitability} - given a binary model, we show that by sampling uniformly from binary distributions satisfying the graph in figure \cref{fig1c} (called the ``frontdoor"), na\"ive imitation results in a biased answer. We also observe that in binary frontdoor models, 50\% of distributions exhibit data-dependent imitability, despite not being imitable in generality.
    \item \textbf{GAN-Based Binary Imitability} - Generative Adversarial Networks \cite{goodfellow2014generative} allow explicit parameterization of a model, and can therefore be used to imitate causal mechanisms in the presence of confounding. We show in various binary models that a GAN-based approach to imitation leads to positive results, both with standard and data-dependent imitation.
    \item \textbf{Highway Driving} - The binary models in experiment 2 show that GANs function in the causal imitation setting, however, one can simply use the corresponding formulae to efficiently get answers. In this experiment, we use the HighD dataset to make two variables continuous, and to show that na\"ively choosing information to use for imitation can lead to bias.
    \item \textbf{MNIST Digits} - The final experiment shows that the GAN-based approach is capable of handling complex, high-dimensional probability distributions. To show this, we perform data-dependent imitation on a frontdoor graph, replacing a node with pictures of MNIST digits, to represent a complex, high-dimensional probability distribution.
\end{enumerate}

Experiments 3 and 4 are reported in the main text. However, each experiment builds upon the ideas introduced in the preceding experiment, so it is recommended that readers go in order. Details of each experiment are described in its own subsection. For an discrete variable $X$, we will consistently use $x_i$ to represent an assignment $X = i$; therefore, we write $P(y_1|\doo(x_0)) = P(Y = 1|\doo(X = 0))$.

\subsection{Data-Dependent Binary Imitability}

If all variables in a causal model are discrete, one can find the imitating policy through the solution of a series of linear systems. As an example, in the front-door graph (\cref{fig1c}, and shown below), with all variables binary, and given an observational distribution $P(x, w, s)$ that is amenable to data-dependent imitation, the imitating policy $\pi$ for $X$ has probability of $\pi(x_1) = \alpha$:
\begin{figure}[H]
\begin{subfigure}{0.3\linewidth}\centering
\begin{tikzpicture}
    \def\outerr{3}
    \def\innerr{2.7}
    \node[vertex] (X) at (0, 0) {$X$};
      \node[vertex] (W) at (0, -1) {$W$};
      \node[vertex] (S) at (1, -1) {$S$};
      \node[uvertex] (Y) at (2, -1) {$Y$};
      
      \draw[dir] (X) -- (W);
      \draw[dir] (W) -- (S);
      \draw[dir] (S) -- (Y);

      \draw[bidir] (X) to [bend left=30] (S);

      \begin{pgfonlayer}{back}
        \draw[fill=betterblue!25,draw=none] \convexpath{S, Y}{\outerr mm};
        \node[circle,fill=betterred!25,draw=none,minimum size=2*\outerr mm] at (X) {};
        \node[circle,fill=betterred!65,draw=none,minimum size=2*\innerr mm] at (X) {};
        \node[circle,fill=betterblue!25,draw=none,minimum size=2*\outerr mm] at (Y) {};
        \node[circle,fill=betterblue!65,draw=none,minimum size=2*\innerr mm] at (Y) {};
    \end{pgfonlayer}
  \end{tikzpicture}
\end{subfigure}
\begin{subfigure}{.65\linewidth}
$$
\alpha P(s|\doo(x_1)) + (1-\alpha)P(s|\doo(x_0)) = P(s)
 $$
 \begin{equation}
\label{eq:frontdoor_alpha}
 \ \ \Rightarrow \alpha = \frac{P(s)-P(s|\doo(x_0))}{P(s|\doo(x_1)) -P(s|do(x=0)) }
\end{equation}
\end{subfigure}
\end{figure}

In this experiment, we show the bias of cloning $\pi(x)=P(x)$ as compared to imitation using \cref{eq:frontdoor_alpha} in binary models. It is important to note that we limited this experiment to binary models to permit the computation of optimal policies explicitly - higher dimensional/continuous models are likely to show different average bias due to their extra degrees of freedom.

\begin{figure}[ht]
    \centering
    \begin{subfigure}{0.45\textwidth}
    \centering
     \includegraphics[width=.8\linewidth]{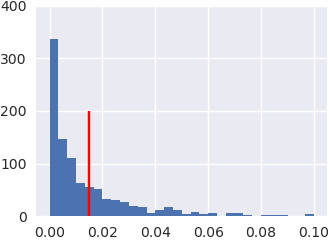}
     \caption{\textit{bc}}
    \end{subfigure}
    \begin{subfigure}{0.45\textwidth}
    \centering
     \includegraphics[width=.8\linewidth]{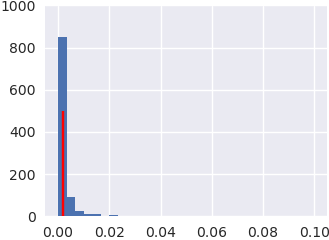}
     \caption{\textit{ci}}
    \end{subfigure}
    
     \caption{When the models are sampled uniformly from probability distributions compatible with \cref{fig1c}, and for which there exists a perfect policy for imitating $P(y)$, a na\"ive cloning of $\pi(x) = P(x)$ (a) yields a biased answer, while our approach (b) gets the answer with only sampling error.}
    \label{fig:ddbinbias}
    
\end{figure}

We generate $1\times 10^5$ random instances where $X, W, S$ are binary variables. Probability distributions consistent with the graph decompose as $P(x,w,s,y)=P(x)P(w|x)P(s|x, w)P(y|s)$, so we draw uniformly over $[0,1]$ for each conditional probability (i.e. $P(x),P(w|x),P(s|x, w), P(y|s) \sim U(0,1)$)
We report in \Cref{fig:ddbinbias} the L1 distance between the interventional distribution $P(y|\doo(\pi))$ induced by the imitator (\textit{ci} and \textit{bc}) and the actual expert's reward distribution $P(s)$ over $1\times 10^5$ generated instances. The causal imitiation learning approach uses \cref{eq:frontdoor_alpha}, while na\"ive cloning directly imitates $P(X)$, both using sample averages. We find that the causal imitation (\textit{ci}, average L1 = $0.0016$) dominates the na\"ive cloning approach (\textit{bc}, average L1 = $0.0147$). More interestingly, we find $50\%$ of generated instances are p-imitable; this suggests that leveraging the observational distribution is beneficial in many imitation learning settings.

\subsection{GAN-Based Binary Imitability}

Given an identification formula for the causal effect of a target variable $X$ on $S$, with $X$ conditioned on a mediator $W$, there is a separate system of equations for each instantiation of the set $W$, akin to the one shown in \cref{eq:frontdoor_alpha}. This means that the number of systems of equations to solve is exponential in the size of $W$, and can't easily be approached when the model contains continuous or high-dimensional variables.

To avoid reliance on these adjustment formulae, we solve for imitating policies through direct modeling of the SCM \cite{louizosCausalEffectInference2017,goudetCausalGenerativeNeural2017}. In particular, we follow a similar procedure to \cite{kocaogluCausalGANLearningCausal2017,goudetLearningFunctionalCausal2017}, by training a generative adversarial network (GAN) to imitate the observational distribution, with a separate generator for each observable variable in the causal graph.

The advantage of this approach is that once a model is trained that faithfully reproduces the observational distribution on a given causal graph, any identifiable quantity will be identical in the trained model as in the original distribution, no matter the form of the underlying mechanisms and latent variables \cite[Definition 3.2.4]{pearl:2k}. This means that so long as you have an instrument for $X$, one can use the trained generator to optimize for a conditional policy by directly implementing it in the model, as shown in \Cref{lem:surrogate}.

In this experiment, we show that such an approach is, in fact, practical. To maintain the ability to compare our results with the ground truth, we use binary variables for each node in a tested graph. This restriction is loosened in the final two experiments - our goal here is to show that it is possible to get very accurate model reconstruction and imitation (with accurate intervention effects) with a GAN, even when there are latent variables.

Like in the previous experiment, for ``ground-truth" data, our models are sampled uniformly from the space of factorized distributions. For example, for graph 1 in \Cref{table:experiment}, one can factorize $P(x,y,z)=P(x)P(z|x)P(y|x,z)$, and can choose ground-truth probabilities $P(x),P(z|x'),P(y|x', z')\sim U(0,1), \forall x, y, z, x',z'$.

For the GANs, we adapt discrete BGAN \cite{hjelmBoundarySeekingGenerativeAdversarial2018} to arbitrary SCM. The advantage of f-divergence-based approaches (such as BGAN) is that the f-GAN discriminators explicitly optimize a lower bound on the value of the chosen f-divergence. In particular, defining an f-divergence over two distributions $P$ and $Q$ as $D_f(P||Q)=\int_X q(x) f\left(\frac{p(x)}{q(x)}\right)dx$\footnote{The KL divergence is an example of an f-divergence with $f(x)=x\log(x)$}, with $f^*$ as the convex conjugate of $f$, and using $\mathcal{T}$ as the set of functions that a neural network can implement, the f-GAN discriminator optimizes the following \cite{nowozinFGANTrainingGenerative2016,nguyenEstimatingDivergenceFunctionals2010}:
\[
D_f(P||Q) \ge \sup_{T\in \mathcal{T}}\left(\E_{x\sim P}[T(x)] - \E_{x\sim Q}[f^*(T(x))]\right)
\]
with a tight bound for $T^*(x)=f'\left(\frac{p(x)}{q(x)}\right)$. This means that by choosing an appropriate function $f$ and the class of functions $T$, one can estimate the divergence through optimization over $T$ \cite{nguyenEstimatingDivergenceFunctionals2010}. With this trained discriminator $T(x)$, one can update generator $Q_\theta$ to be more similar to $P$ with a update similar to a policy-gradient \cite{hjelmBoundarySeekingGenerativeAdversarial2018}.

To adapt these GANs to work with causal models, we create a separate generator for each variable $V \in \*V$, which is given as input samples from $\Pa_{V}$, and outputs multinomial probabilities. Each latent variable is explicitly sampled from $\mathcal{N}(0,1)^k$ (with $k=3$ in these experiments), and given as input to its children. Finally, instead of a single discriminator for the entire model, we exploit independence relations in the graph to localize optimization. In particular, in a Markovian causal model (without latent variables), one can create a separate discriminator for each variable, to directly learn the conditional distribution for each node. However, once there are latent variables, this variable-focused approach no longer captures all dependencies. Instead, we construct a separate conditional discriminator for each set of nodes that have a path made up of entirely bi-directed edges (c-components \cite{tian:pea03-r290-A}), which allows each discriminator to specialize to a part of the model.

As an example, for the graph 1 in \cref{table:experiment}, the generator for $X$ gets as input a sample $u \sim \mathcal{N}(0,1)^3$, and outputs a probability of $1$. The generator for $Z$ gets as input a sample according to the probability of $X$, and outputs a conditional probability of $Z$. Finally, $Y$ gets as input $u$ and the sampled value of $Z$. This graph has 2 c-components ($\set{X, Y}$ and $\set{Z}$), which corresponds to discriminators $P(z|x)$ and $P(x,y|z)$.

Similarly, once the model is optimized, the policy $\pi$ is trained by manually replacing the generator for $X$ with a new, untrained generator for $\pi$, and training it as a GAN with a discriminator comparing samples of $Y$ in the original model (i.e., $Y \sim P(y)$) with samples of $Y$ in the intervened model (i.e., $Y \sim P(y|\doo(\pi); \hat M)$ where $\hat M$ is the parametrized model learned by generators.)

The graphs in \cref{table:experiment} were each chosen to demonstrate a different aspect of imitability. The first graph is imitable by direct parents (\Cref{thm:dp}), and corresponds to existing approaches to imitation, where an agent gets observations identical to the expert. The second graph demonstrates an example where an expert has additional information from a latent variable, but the agent can still successfully imitate $P(y)$ by using information that is not used by the expert, i.e., the $\pi$-backdoor admissible set $\set{W}$ (\Cref{thm:backdoor}). Finally, Graphs 3 and 4 demonstrate data-dependent imitability (\Cref{def:p-imtability}). Graph 3 focuses on distributions that are amenable to imitation, meaning that imitation of $P(y)$ is possible without knowledge of the latent variable. Graph 4 uses only non-imitable distributions. Sampled uniformly, half of these instances are p-imitable, and half are not, so the expected performance of an unknown distribution is the average of Row 3 and 4.

The table columns show the average L1 distance between the computed interventional distributions, the predicted \& optimal policy, as well as the effect $P(y|\doo(\pi))$ of the learned policy vs the observed $P(y)$. Each element of the table shows an average value of its corresponding distance from ground-truth over 100 runs (with each run sampling a different ground-truth probability distribution over the graph), overlaid over the histogram of distances over the 100 distributions/trained GANs.

\begin{table}[t]

\centering
\begin{tabular}[t]{m{.6cm}@{\hspace*{2mm}}>{\centering\arraybackslash}m{2.01cm} |  >{\centering\arraybackslash}m{2.21cm}@{\hspace*{1mm}} >{\centering\arraybackslash}m{2.21cm}@{\hspace*{1mm}} >{\centering\arraybackslash}m{2.21cm}@{\hspace*{1mm}} >{\centering\arraybackslash}m{2.21cm}}
\toprule
\# & Graph & $|y_{x_0} -\hat y_{x_0}|$ & $|y_{x_1}-\hat y_{x_1}|$& $|\alpha-\hat\alpha|$ &  $|y-\hat y|$\\
\midrule
1 & \begin{minipage}{2cm}
\begin{adjustbox}{max totalsize={.99\textwidth}{.8\textheight},center}
\begin{tikzpicture}
      \def\outerr{3}
      \def\innerr{2.7}
      \node[vertex] (X) at (0, 0) {$Z$};
      \node[vertex] (Z) at (1, 0) {$X$};
      \node[vertex] (Y) at (2, 0) {$Y$};

      \draw[dir] (X) -- (Z);
      \draw[dir] (Z) -- (Y);

      \draw[bidir] (X) to [bend left=45] (Y);
      
      \begin{pgfonlayer}{back}
        \draw[fill=betterred!25,draw=none] \convexpath{Z, X}{\outerr mm};
        \node[circle,fill=betterred!65,draw=none,minimum size=2*\innerr mm] at (Z) {};
        \node[circle,fill=betterblue!25,draw=none,minimum size=2*\outerr mm] at (Y) {};
        \node[circle,fill=betterblue!65,draw=none,minimum size=2*\innerr mm] at (Y) {};
    \end{pgfonlayer}
  \end{tikzpicture}
  \end{adjustbox}
  \end{minipage}
  &\includegraphics[width=2.2cm]{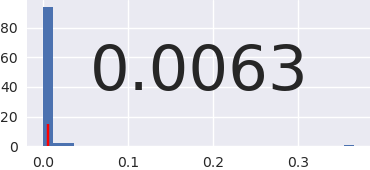}&
  \includegraphics[width=2.2cm]{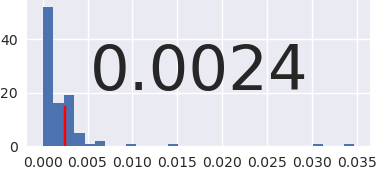} & \includegraphics[width=2.2cm]{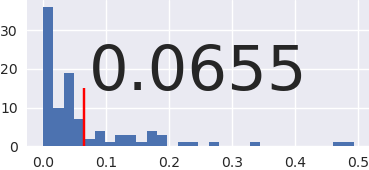} & \includegraphics[width=2.2cm]{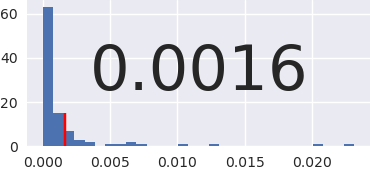}\\

2 & \begin{minipage}{2cm}
\begin{adjustbox}{max totalsize={.99\textwidth}{.8\textheight},center}
\begin{tikzpicture}
    \def\outerr{3}
      \def\innerr{2.7}
      \node[vertex] (X) at (0, 0) {$W$};
      \node[vertex] (Z) at (1, 0) {$Z$};
      \node[vertex] (W) at (2, 0) {$X$};
      \node[vertex] (Y) at (3,0) {$Y$};

      \draw[dir] (X) -- (Z);
      \draw[dir] (Z) -- (W);
      \draw[dir] (W) -- (Y);

      \draw[bidir] (X) to [bend left=45] (Y);
      \draw[bidir] (Z) to [bend left=45] (W);

      \begin{pgfonlayer}{back}
        \draw[fill=betterred!25,draw=none] \convexpath{W, Z, X}{\outerr mm};
        \node[circle,fill=betterred!65,draw=none,minimum size=2*\innerr mm] at (W) {};
        \node[circle,fill=betterblue!25,draw=none,minimum size=2*\outerr mm] at (Y) {};
        \node[circle,fill=betterblue!65,draw=none,minimum size=2*\innerr mm] at (Y) {};
    \end{pgfonlayer}
  \end{tikzpicture}
  \end{adjustbox}
  \end{minipage}
  &\includegraphics[width=2.2cm]{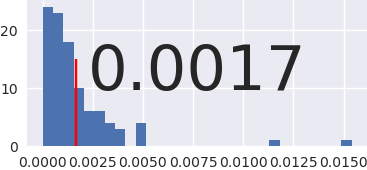}&
  \includegraphics[width=2.2cm]{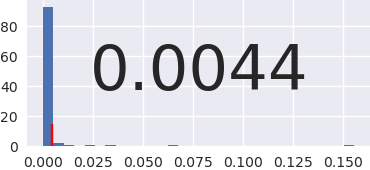} & \includegraphics[width=2.2cm]{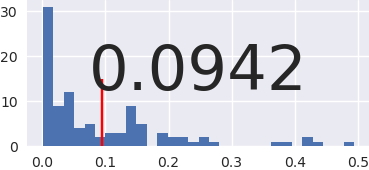} & \includegraphics[width=2.2cm]{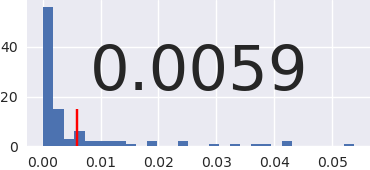}\\

3 & \begin{minipage}{2cm}
\begin{adjustbox}{max totalsize={.99\textwidth}{.8\textheight},center}
\begin{tikzpicture}
\def\outerr{3}
      \def\innerr{2.7}
      \node[vertex] (X) at (0, 0) {$X$};
      \node[vertex] (Z) at (1, 0) {$W$};
      \node[vertex] (Y) at (2, 0) {$Y$};

      \draw[dir] (X) -- (Z);
      \draw[dir] (Z) -- (Y);

      \draw[bidir] (X) to [bend left=45] (Y);
      
      \begin{pgfonlayer}{back}
      \node[circle,fill=betterred!25,draw=none,minimum size=2*\outerr mm] at (X) {};
        \node[circle,fill=betterred!65,draw=none,minimum size=2*\innerr mm] at (X) {};
        \node[circle,fill=betterblue!25,draw=none,minimum size=2*\outerr mm] at (Y) {};
        \node[circle,fill=betterblue!65,draw=none,minimum size=2*\innerr mm] at (Y) {};
    \end{pgfonlayer}
  \end{tikzpicture}
  \end{adjustbox}
  \end{minipage}
  &\includegraphics[width=2.2cm]{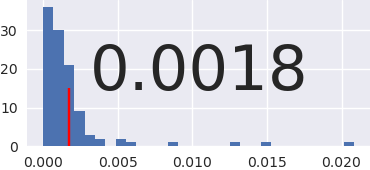}& \includegraphics[width=2.2cm]{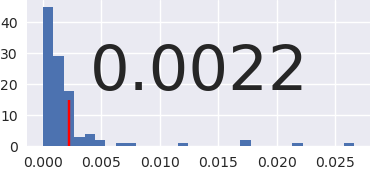} & \includegraphics[width=2.2cm]{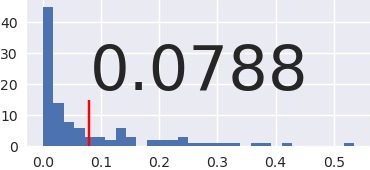} & \includegraphics[width=2.2cm]{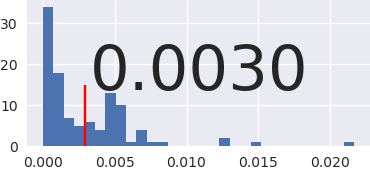}\\

4 & \begin{minipage}{2cm}
\begin{adjustbox}{max totalsize={.99\textwidth}{.8\textheight},center}
\begin{tikzpicture}
\def\outerr{3}
      \def\innerr{2.7}
      \node[vertex] (X) at (0, 0) {$X$};
      \node[vertex] (Z) at (1, 0) {$W$};
      \node[vertex] (Y) at (2, 0) {$Y$};

      \draw[dir] (X) -- (Z);
      \draw[dir] (Z) -- (Y);

      \draw[bidir] (X) to [bend left=45] (Y);
      
      \begin{pgfonlayer}{back}
      \node[circle,fill=betterred!25,draw=none,minimum size=2*\outerr mm] at (X) {};
        \node[circle,fill=betterred!65,draw=none,minimum size=2*\innerr mm] at (X) {};
        \node[circle,fill=betterblue!25,draw=none,minimum size=2*\outerr mm] at (Y) {};
        \node[circle,fill=betterblue!65,draw=none,minimum size=2*\innerr mm] at (Y) {};
    \end{pgfonlayer}
  \end{tikzpicture}
  \end{adjustbox}
  \end{minipage}
  &\includegraphics[width=2.2cm]{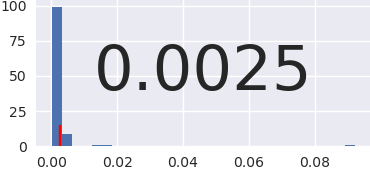}&
  \includegraphics[width=2.2cm]{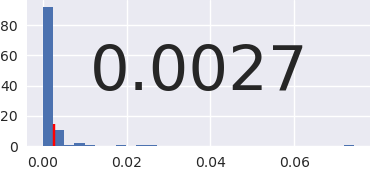} & \includegraphics[width=2.2cm]{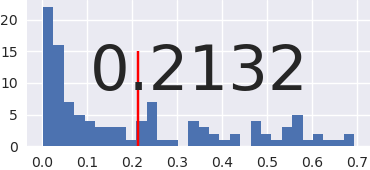} & \includegraphics[width=2.2cm]{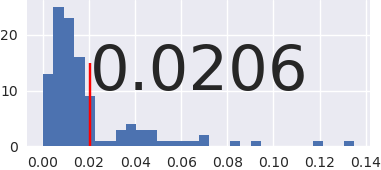}\\
\bottomrule
\\
\end{tabular}

\caption{Results of imitation using GANs in randomly sampled distributions. Each number is averaged over 100 randomly sampled models. Experiments in Row 3 and 4 are run with p-imitable, and not p-imitable distributions, respectively.}
\label{table:experiment}
\end{table}

The first two columns allow determining the error in reconstructing the interventional distribution of atomic intervention $\doo(x)$. The $|y_x-\hat y_x|$ represents the differences between the ground-truth $E[y|\doo(x)]$ and the imitated $E[y | \doo(x); \hat{M}]$ in the parametrized model $\hat M$. Notice that the policies are often conditioned, which is not reflected in these values.

The $|y-\hat y|$ value represents the difference in the ground-truth expert's reward $E[y]$ with the distribution $E[y| \doo(\pi)]$ induced by the learned policy $\pi$ \textit{in the ground-truth model}, meaning that the policy is trained in the imitated model, but is tested by replacing the true mechanism, as if the learned mechanism was tried in real life.

The main point of possible confusion could be $|\alpha-\hat\alpha|$ in the graphs where the policy is conditional. In the frontdoor cases, when the policy has no conditioning (Rows 3 and 4), the optimal value can be cleanly found, using \cref{eq:frontdoor_alpha}. However, in the backdoor case (as seen in Row 1 and in Row 2), there can be multiple possible valid solutions. Here we show the precise procedure used to compute $|\alpha-\hat\alpha|$ in these two cases.

\begin{itemize}
    \item In Row 1, which is commonly called the ``backdoor" graph, we have
defining $\alpha_1=\pi(x_1|z_1)$ and $\alpha_0=\pi(x_1|z_0)$:
\begin{equation}
\label{eq:backdoor_alpha}
\begin{aligned}
P(y) &= \sum_{x, z}P(z)\pi(x|z)P(y|x, z)\\
&= P(z_1)\alpha_1P(y|x_1, z_1) + P(z_1)(1-\alpha_1)P(y|x_0, z_1)\\
&\ \ \ + P(z_0)\alpha_0 P(y|x_1, z_0) + P(z_0)(1-\alpha_0)P(y|x_0, z_0)\\
&= P(z_1)P(y|x_0, z_1)+P(z_0)P(y|x_0, z_0) \\
&\ \ \ + \alpha_1 P(z_1)(P(y|x_1, z_1)-P(y|x_0, z_1))\\
&\ \ \ + \alpha_0 P(z_0)(P(y|x_1, z_0)-P(y|x_0, z_0))
\end{aligned}
\end{equation}

This means that multiple possible $\alpha_0,\alpha_1$ satisfy the given constraint. In such situations, given $\hat\alpha_1$ and $\hat\alpha_0$ found by GAN, we find the ``closest correct comparison" by minimizing
$$
P(z_0)|\hat\alpha_0 - \alpha_0| + P(z_1)|\hat\alpha_1 - \alpha_1|
$$

subject to the constraint in \cref{eq:backdoor_alpha}, and with $\alpha_1,\alpha_0 \in [0,1]$. We then report this minimized value in the column labeled $|\alpha-\hat\alpha|$.
\item Row 2 has more complex relations, since the policy has as inputs both values from $Z$ and $W$. However, the values of $W$ are irrelevant to imitating $y$, since $W$ is in a different c-component of the graph than $Y$. This means that we can use the same approach as for Row 1, considering only $Z$ (and averaging the policy over $W$ values). 
\end{itemize}

\subsubsection{Discussion}

The results in \cref{table:experiment} suggest that GANs are capable of training accurate imitating policies in the presence of latent variables. Of particular note is the relatively large error in $\alpha$ can sometimes be present in the policies, despite the policies yielding very accurate samples of $y$. This happens when the imitable policy has $P(y|\doo(x_0))$ and $P(y|\doo(x_1))$ with very similar values, meaning that the policy has little effect on the probability of $Y$.

\subsection{Highway Driving}

\begin{figure}[t]
    \hfill
    \begin{subfigure}{0.25\linewidth}
    \centering
\begin{tikzpicture}
      \def\outerr{3}
      \def\innerr{2.7}
      \node[vertex] (X) at (0, 0) {$X$};
      \node[vertex,opacity=0] (H1) at (0, -0.5) {$H$};
      \node[vertex,opacity=0] (H2) at (0, 2.1) {$H$};
      \node[uvertex] (L) at (2, 0.75) {$L$};
      \node[uvertex] (Y) at (3, 0) {$Y$};
      \node[vertex] (W) at (1, 0.75) {$W$};
      
      \draw[dir] (L) to [bend left=45] (X);
      \draw[dir] (X) -- (Y);
      \draw[dir] (L) -- (W);

      \draw[bidir] (W) to [bend right=45] (Y);

      \begin{pgfonlayer}{back}
        \draw[fill=betterred!25,draw=none] \convexpath{W, X}{\outerr mm};
        \node[circle,fill=betterred!65,draw=none,minimum size=2*\innerr mm] at (X) {};
        \node[circle,fill=betterblue!25,draw=none,minimum size=2*\outerr mm] at (Y) {};
        \node[circle,fill=betterblue!65,draw=none,minimum size=2*\innerr mm] at (Y) {};
    \end{pgfonlayer}
    \end{tikzpicture}
    \caption{}
    \label{fig:backdoor_binary}
    \end{subfigure}\hfill
    \begin{subfigure}{0.25\linewidth}\centering
\begin{tikzpicture}
      \def\outerr{3}
      \def\innerr{2.7}
      \node[vertex] (X) at (0, 0) {$X$};
      \node[vertex] (Z) at (1.5, 2) {$Z$};
      \node[vertex,opacity=0] (H1) at (0, -0.5) {$H$};
      \node[vertex,opacity=0] (H2) at (0, 2.1) {$H$};
      \node[uvertex] (L) at (2, 0.75) {$L$};
      \node[uvertex] (Y) at (3, 0) {$Y$};
      \node[vertex] (W) at (1, 0.75) {$W$};
      
      \draw[dir] (L) to [bend left=45] (X);
      \draw[dir] (Z) -- (Y);
      \draw[dir] (X) -- (Y);
      \draw[dir] (L) -- (W);
      \draw[dir] (Z) to (X);
  
      \draw[bidir] (W) to [bend right=45] (Y);

      \begin{pgfonlayer}{back}
        \draw[fill=betterred!25,draw=none] \convexpath{Z, W, X}{\outerr mm};
        \node[circle,fill=betterred!65,draw=none,minimum size=2*\innerr mm] at (X) {};
        \node[circle,fill=betterblue!25,draw=none,minimum size=2*\outerr mm] at (Y) {};
        \node[circle,fill=betterblue!65,draw=none,minimum size=2*\innerr mm] at (Y) {};
    \end{pgfonlayer}
    \end{tikzpicture}
    \caption{}
    \label{fig:backdoor_highd}
    \end{subfigure}\hfill\null
    \caption{The graphs used for the backdoor experiments. First, \cref{fig:backdoor_binary} was optimized adversarially with binary variables to maximize na\"ive imitation error, then continuous data from the highD dataset were added in \cref{fig:backdoor_highd}, while maintaining the adversarial imitation error.}
\end{figure}
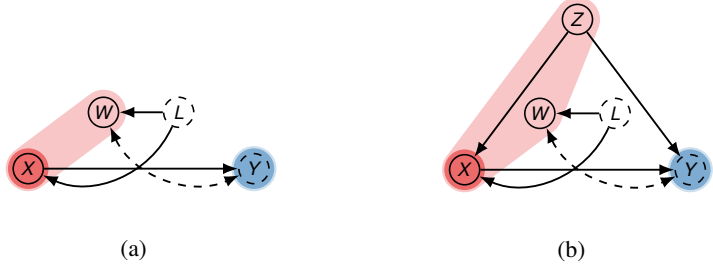

The purpose of this experiment is to demonstrate that when imitating $P(y)$, it is important to choose a set of covariates that is $\pi$-backdoor admissible (\Cref{def:backdoor}). To witness, in \cref{fig:backdoor_binary}, using knowledge of $W$ to imitate $X$ effectively adds confounding between $X$ and $Y$, possibly affecting the imitated distribution of $Y$. That is, $\set{W}$ is not $\pi$-backdoor admissible. Similarly, in \cref{fig:backdoor_highd}, one must use knowledge of $Z$ when imitating $P(y)$, but using either only $W$ or both $Z$ and $W$ can lead to bias and inferior performance on reward measure $Y$. In other words, set $\set{Z}$ is $\pi$-backdoor admissible while $\set{W, Z}$ is not due to the active path $X \leftarrow L \rightarrow W \leftrightarrow Y$.

We demonstrate this in a two-step procedure. First, we performed an automated adversarial search over the binary distributions consistent with \cref{fig:backdoor_binary}, to \textit{maximize} $\E[Y]$ when imitating just $\pi(x) = P(x)$, but \textit{minimizing} $\E[Y]$ when exploiting knowledge of $W$ to imitate $\pi(x|w)=P(x|w)$. This search led to the following mechanisms ($\oplus$ is xor):

\[
P(l_1)=P(u_1)= \frac{\sqrt{5}-1}{2}\approx 0.62 \ \ \ \ \ W \gets L\land U \ \ \ \ X \gets L \ \ \ \ Y \gets X \oplus U
\]

This means that under ``correct" imitation (not using knowledge of $W$), we have:

\[
\E[Y|\doo(\pi_1)] = \sum_{x, u} \E[Y|x, u]\pi_1(x)P(u) = 2\sqrt{5}-4 \approx  0.472
\]

Under ``incorrect" imitation (using $W$) we get:
\[
\pi_2(x_1|w) = P(x_1|w) = \frac{\sum_{l, u}P(u)P(l)P(x_1|l)P(w|l, u)}{\sum_{l, u}P(l)P(u)P(w|l, u)}= \left\{ \begin{aligned}
 2\frac{\sqrt{5} - 2}{\sqrt{5} - 1} \approx 0.382 & \text{ if $W=0$}\\
1 & \text{ if $W=1$}
\end{aligned}\right \}
\]

This means:
\[
\E[Y| \doo(\pi_2)] = \sum_{x, l, u, w}\E[Y|x, u]\pi_2(x|w)P(w|l, u)P(l)P(u) =4\frac{4\sqrt{5} - 9}{\sqrt{5} - 3} \approx 0.2918
\]

Therefore, using the mechanisms above in \cref{fig:backdoor_binary} with binary variables, using $W$ leads to a bias of $18\%$:

\[
\E[Y| \doo(\pi_1)]-\E[Y | \doo(\pi_2)] =  2\sqrt{5}-4 - 4\frac{4\sqrt{5} - 9}{\sqrt{5} - 3} \approx 0.18
\]

Indeed, by sampling 10,000 data points, and using the empirical $P(x)$ for one ``imitator", and $P(x|w)$ for the other, we get $0.1793$ difference between the two, corroborating this result.

\subsubsection{Making Things Continuous}

We next show that this same issue can show up when variables are continuous. To achieve this, we adapted car velocity data from the \texttt{highD} dataset to a model similar to the binary version. In this model, one must use $Z$ (velocity of the front car) to predict $X$ (velocity of the driving car), but $W$ would bias the prediction (with the same error as in previous section). The full model specification is as follows:
\begin{enumerate}
    \item $P(L = 1)=P(U = 1)=0.62$ (with values of $L$ constructed from values of $X$)
    \item $W\gets L\land U$
    \item $Z$ is velocity of preceding car from \texttt{highD} dataset.
    \item $X$ is velocity of current car from \texttt{highD} dataset. $L$ was constructed such that $X$ satisfies the relation $L = I_{\set{X-Z > -0.4}}$ (this was achieved by choosing the threshold $-0.4$ to give the correct distribution over $L$).
    \item $Y \gets U \land I_{\set{X-Z\le -0.4}} \vee \lnot U \land I_{\set{X-Z>-0.4}}$
\end{enumerate}
The values and mechanisms here were specifically chosen to have similar outputs to the previous model (\cref{fig:backdoor_binary}), despite using continuous car velocity data in place of boolean for node values. \cref{fig:backdoor_highd} shows a faithful graphical representation of the model. However, during the experiment, all algorithms are provided with only the causal diagram in \Cref{fig3a}. That is, only independence relationships encoded in \Cref{fig3a} are exploited.

\begin{figure}[ht]
    \hfill
    \begin{subfigure}{0.32\linewidth}
    \centering
    \includegraphics[width=\linewidth]{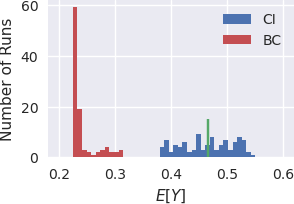}
    \caption{}
    \label{fig:backdoor_supervised}
    \end{subfigure}\hfill
    \begin{subfigure}{0.32\linewidth}\centering
    \includegraphics[width=\linewidth]{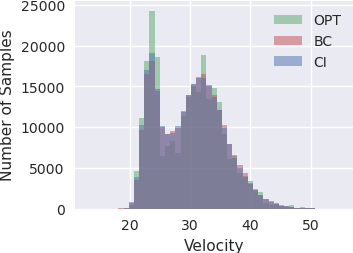}
    \caption{}
    \label{fig:backdoor_gan_dist}
    \end{subfigure}\hfill
    \begin{subfigure}{0.32\linewidth}\centering
    \includegraphics[width=\linewidth]{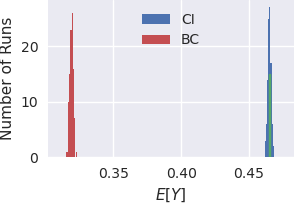}
    \caption{}
    \label{fig:backdoor_gan}
    \end{subfigure}\hfill\null
    \caption{Results of experiments using the continuous ``backdoor" model. (a) and (c) are histograms of 100 independent runs of a direct supervised learning and GAN approach, respectively. (b) shows the distribution over X of a trained GAN model}
\end{figure}

We performed two separate imitation experiments, both of which show that (1) using the covariates set $\set{W, Z}$ introduces biases into the imitator's reward $P(y|\doo(\pi))$; (2) using the $\pi$-backdoor admissible set $\set{Z}$ allows one to successfully imitate the expert's performance $P(y)$. The first experiment was imitation using standard supervised learning. Two separate (2 layer) neural networks were trained on a small subset of data using L2 loss over X. The first had as its input the values of $Z$ (velocity of car ahead), while the second had as input values of $W$ and $Z$. The trained outputs of these networks were used for imitation using the full dataset, and the resulting policy was evaluated by its induced $\E[Y | \doo(\pi)]$ (Y represents an unknown reward, so bigger is better). This experiment was repeated 100 times, giving a distribution over performance of trained models, shown in \cref{fig:backdoor_supervised}. Note that while the performance varied widely, knowledge of $W$ had a clear negative effect. 

The outputs of a supervised learning algorithm are single values - they do not represent a distribution over possible values. This can affect imitation, since it is possible that the full range of the distribution is necessary for optimal imitation. We next trained two GANs to imitate the distribution over X, in the same way as done with the supervised models - one using only $Z$, and one using both $W$ and $Z$. Due to the sampling needs of GANs, however, they had access to the full dataset over $X,Z,W$. This experiment was also repeated 100 times, giving the performance shown in \cref{fig:backdoor_gan}.
Once again, the GAN using only $Z$ has clearly superior performance. This difference exists despite both trained GANs seemingly recovering a good approximation over the distribution of $X$, as seen in \cref{fig:backdoor_gan_dist}.

\subsection{MNIST Digits}
The purpose of this experiment is to show how Algorithm \Cref{imitate} using GANs for policy estimation could obtain reasonable imitation results in the p-imitable setting, even when the probability distribution of some observed endogenous variables is high-dimensional.

Specifically, we use the frontdoor graph, same as in the first experiment. There, when $X$ and $S$ are binary, we can compute the value $\alpha$ using \cref{eq:backdoor_alpha}. However, in this case, the formula for the interventional distribution is \cite{pearl:09b}:
\begin{figure}[H]
\begin{subfigure}{0.3\linewidth}\centering
\begin{tikzpicture}
    \def\outerr{3}
    \def\innerr{2.7}
    \node[vertex] (X) at (0, 0) {$X$};
      \node[vertex] (W) at (0, -1) {$W$};
      \node[vertex] (S) at (1, -1) {$S$};
      \node[uvertex] (Y) at (2, -1) {$Y$};
      
      \draw[dir] (X) -- (W);
      \draw[dir] (W) -- (S);
      \draw[dir] (S) -- (Y);

      \draw[bidir] (X) to [bend left=30] (S);

      \begin{pgfonlayer}{back}
        \draw[fill=betterblue!25,draw=none] \convexpath{S, Y}{\outerr mm};
        \node[circle,fill=betterred!25,draw=none,minimum size=2*\outerr mm] at (X) {};
        \node[circle,fill=betterred!65,draw=none,minimum size=2*\innerr mm] at (X) {};
        \node[circle,fill=betterblue!25,draw=none,minimum size=2*\outerr mm] at (Y) {};
        \node[circle,fill=betterblue!65,draw=none,minimum size=2*\innerr mm] at (Y) {};
    \end{pgfonlayer}
  \end{tikzpicture}
\end{subfigure}
\begin{subfigure}{.65\linewidth}
 \begin{equation}
\label{eq:frontdoor_do}
P(s|\doo(\pi)) = \sum_{x} \pi(x) \sum_w P(w|x)\sum_{x'}P(s|x',w)P(x')
\end{equation}
\end{subfigure}
\end{figure}

Critically, computing the effect of an intervention here requires a sum (or integral if continuous) over $W$. If $W$ is a complex distribution or is high-dimensional, this quantity can be difficult to estimate.

Once again, to allow us the ability to compare results to a ground-truth value, we repeat the procedure performed in the previous experiment, by first constructing a binary model with known characteristics, and then by replacing the binary value of $W$ with a high-dimensional distribution with a property that follows the same underlying mechanism as the original binary value. Specifically, we use the \texttt{MNIST} digits for $0$ and $1$, which are a $28\times 28=784$ dimensional vector representing a complex probability distribution. The pictures of 0s replace ``\textsc{False}'' values of $W$, while pictures of $1$s replace ``\textsc{True}'' values. This allows a direct translation between a binary variable and the desired complex distribution. The underlying binary distribution had the following mechanisms:
$$
\begin{aligned}
P(u_1)&=0.9\\
P(x_1|u_0)=0.1\ &\ \ P(x_1|u_1)=0.9\\
P(w_1|x_1)=0.1\ &\ \ P(w_1|x_1)=0.9\\
P(s_1|u_0, w_0) = 0.1\ &\ \ P(s_1|u_0, w_1)=0.9\\
P(s_1|u_1, w_0) = 0.9\ &\ \ P(s_1|u_1, w_1)=0.1\\
\end{aligned}
$$
This leads to $P(s_1)=0.245$, but with na\"ive cloning of $P(x)$, the resulting $P(s_1 | \doo(\pi))=0.334$, a difference of $0.0888$. We chose the reward signal $Y\gets \lnot S$, since the assumption is that the original values of $X$ were decided by an expert.

The GAN for the c-component $\set{W}$ is much larger than that of the other variables, since $W$ is an image, so we pre-trained this component, and inserted the trained version into the full graph for optimization. We repeated experiment 16 times, of which 3 runs were discarded due to collapse in the GANs associated with $P(w|x)$. The results are visible in \cref{fig:mnistgan_results}. These results show that despite the high-dimensional nature of the distribution over $W$, the GAN was consistently able to perform better than na\"ive imitation, approaching the optimal value.

\begin{figure}[t]
\begin{subfigure}{0.32\linewidth}
 \centering
    \includegraphics[width=0.9\linewidth]{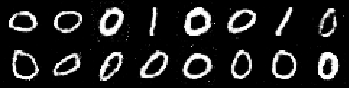}
    \caption{}
\end{subfigure}
\begin{subfigure}{0.32\linewidth}
     \centering
    \includegraphics[width=0.9\linewidth]{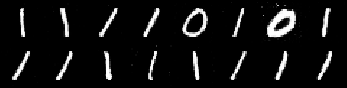}
    \caption{}
\end{subfigure}
\begin{subfigure}{0.32\linewidth}
 \centering
    \includegraphics[width=0.99\linewidth]{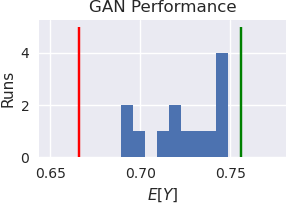}
    \caption{}
\end{subfigure}
    
    \caption{In (a) and (b), results of a trained GAN conditional on $X= 0$ and $X=1$ respectively ($0$ for \textsc{False}, $1$ for \textsc{True}). In (c) is shown a histogram of the values obtained over multiple runs of the GAN. The red line represents na\"ive imitation, and the green line represents optimal imitation.}
    \label{fig:mnistgan_results}
\end{figure}

\end{document}